\theoremstyle{thmstyleone}%
\newtheorem{theorem}{Theorem}
\newtheorem{proposition}[theorem]{Proposition}%
\theoremstyle{thmstyletwo}%
\newtheorem{example}{Example}%
\theoremstyle{thmstylethree}%
\newtheorem{definition}{Definition}%
\definecolor{G1}{rgb}{0.67, 0.8, 0.94}
\definecolor{G2}{gray}{0.90}
\definecolor{G5}{rgb}{0.00, 0.75, 0.93}
\definecolor{G4}{rgb}{0.55, 0.75, 0.93}
\definecolor{G3}{rgb}{0.88, 0.85, 0.90}
\def\closedx{\textsc{closedDiversity}}
\def\cx{\textsc{ClosedDiv}}
\def\closedp{\textsc{closedPattern}}
\def\flexics{\textsc{Flexics}}
\def\gflexics{\textsc{GFlexics}}
\def\eflexics{\textsc{EFlexics}}
\def\cdflexics{\textsc{CFFlexics}}
\def\letsip{\textsc{LetSIP}}
\def\letsipcdf{\textsc{LetSIP-cdf}}
\def\newletsip{\textsc{LetSIP-disc}}
\def\newletsiplin{\textsc{LetSIP-disc-Lin}}
\def\newletsipexp{\textsc{LetSIP-disc-Exp}}
\def\newletsipcdf{\textsc{LetSIP-disc-cdf}}
\def\newletsipcdfexp{\textsc{LetSIP-disc-cdf-exp}}
\def\aggregexp{\textsc{EXP}}
\def\aggregsum{\textsc{Linear}}
\def\newlin{\textsc{Lin}}
\def\newexp{\textsc{Exp}}
\def\cpim{\textsc{CP4IM}}
\def\weightgen{\textsc{Weightgen}}
\def\aple{\textsc{APLe}}
\def\ipm{\textsc{IPM}}
\def\ranksvm{\textsc{RankSVM}}
\def\eflexics{\textsc{EFlexics}\xspace}
\def\cdflexics{\textsc{cdFlexics}\xspace}
\def\cpim{\textsc{CP4IM}\xspace}
\newcommand{\bdd}{\mathcal{T}}			
\newcommand{\items}{\mathcal{I}}
\newcommand{\lang}[1]{{\cal L}_{#1}}
\newcommand{\store}{\mathcal{H}}
\newcommand{\SDB}{\mathcal{D}}
\newcommand{\jac}[0]{\ensuremath{Jac}}
\newcommand{\ie}{{\it i.e., \/}}
\newcommand{\ea}[1]{#1 {\it et al.}}
\newcommand{\freq}[0]{\ensuremath{sup}_{\SDB}}
\newcommand{\cover}[0]{\mathcal{V}_{\SDB}}
\newcommand{\surp}[0]{\ensuremath{surp}}
\newcommand{\jmax}[0]{\ensuremath{J_{max}}\xspace}
\newcommand{\card}[1]{\ensuremath{\lvert #1\rvert}}
\newcommand{\icv}{\ensuremath{ICV}}
\newcommand{\avg}[0]{\ensuremath{Avg}}
\newcommand{\theory}[3]{\mathcal{T}(#1,#2,#3)}
\newcommand{\theoryextension}[4]{\{#1 #2 #3\mid\,\,#4\}}
\newcommand{\Lang}[0]{{\cal L}}
\begin{document}

\title[Exploiting complex pattern features for interactive pattern mining]{Exploiting complex pattern features for interactive pattern mining}


\author*[1]{\fnm{Arnold} \sur{Hien}}\email{arnold.hien@unicaen.fr}

\author[2]{\fnm{Samir} \sur{Loudni}}\email{samir.loudni@imt-atlantique.fr}
\equalcont{These authors contributed equally to this work.}

\author[3]{\fnm{Noureddine} \sur{Aribi}}\email{aribi.noureddine@gmail.com}
\author[1]{\fnm{Abdelkader} \sur{Ouali}}\email{abdelkader.ouali@unicaen.fr}
\author[1]{\fnm{Albrecht} \sur{Zimmermann}}\email{albrecht.zimmermann@unicaen.fr}

\equalcont{These authors contributed equally to this work.}

\affil*[1]{\orgdiv{CNRS-UMR GREYC}, \orgname{Normandie Univ., UNICAEN}, \orgaddress{\city{Caen}, \postcode{14032}, \country{France}}}

\affil[2]{\orgdiv{TASC (LS2N-CNRS)}, \orgname{IMT Atlantique}, \orgaddress{\street{4 rue Alfred Kastler}, \city{Nantes}, \postcode{44300}, \country{France}}}

\affil[3]{\orgdiv{LITIO}, \orgname{University of Oran1}, \orgaddress{\city{Oran}, \postcode{31000},  \country{Algeria}}}


\abstract{Recent years have seen a shift from a pattern mining process that has users define constraints before-hand, and sift through the results afterwards, to an interactive one. This new framework depends on exploiting user feedback to learn a quality function for patterns. Existing approaches have a weakness in that they use static pre-defined low-level features, and attempt to learn independent weights representing their importance to the user. As an alternative, we propose to work with more complex features that are derived directly from the pattern ranking imposed by the user. Learned weights are then aggregated onto lower-level features and help to drive the quality function in the right direction. We explore the effect of different parameter choices experimentally and find that using higher-complexity features leads to the selection of patterns that are better aligned with a hidden quality function while not adding significantly to the run times of the method.

Getting good user feedback requires to quickly present diverse patterns, something that we achieve but pushing an existing diversity constraint into the sampling component of the interactive mining system \letsip{}. Resulting patterns allow in most cases to converge to a good solution more quickly.

Combining the two improvements, finally, leads to an algorithm showing clear advantages over the existing state-of-the-art.}

\keywords{Pattern mining, interactive mining, preference learning, Constraint programming}


\maketitle
\section{Introduction}
\label{sec:intro}
Constraint-based pattern mining is a fundamental data mining task, extracting locally interesting patterns to be either interpreted directly by domain experts, or to be used as descriptors in downstream tasks, such as classification or clustering. Since the publication of the seminal paper \cite{Agrawal94}, two problems have limited the usability of this approach: 1) how to translate user preferences and background knowledge into constraints, and 2) how to deal with the large result sets that often number in the thousands or even millions of patterns. Replacing the original support-confidence framework with other quality measures \cite{DBLP:conf/kdd/TanKS02} does not address the pattern explosion. Post-processing results via condensed representations \cite{Lakhal} still typically leaves many patterns, while pattern set mining \cite{DBLP:conf/sdm/RaedtZ07} just pushes the problem further down the line.

In recent years, research on \emph{interactive pattern mining} has proposed to alter the mining process itself: instead of specifying constraints once, mining a result set, and then post-processing it, interactive pattern mining performs an iterative loop \cite{DBLP:conf/icml/Rueping09}. This loop involves three repeating main steps:
(1) pattern extraction in which a relatively small set of patterns is extracted;
(2) interaction in which the user expresses his preferences w.r.t. those patterns;
(3) preference learning in which the expressed preferences are translated into a quality assessment function for mining patterns in future iterations.

Existing approaches \cite{DBLP:conf/icml/Rueping09,Dzyuba:letsip,DBLP:journals/sadm/BhuiyanH16} have a short-coming, however: to enable preference learning, they represent patterns by independent descriptors, such as included items or covered transactions, and expect the learned function, usually a regression or multiplicative weight model, to handle relations. 
Furthermore, to get quality feedback, it is important to present \emph{diverse} patterns to the user, an aspect that is at best indirectly handled by existing work.

The contributions of this paper are the following: 
\begin{enumerate}
	\item We introduce \cdflexics, which improves the pattern sampling method proposed in \cite{Dzyuba:flexics}, \flexics, by integrating \closedx~into it to extract patterns. This will allow us to exploit both the sampling and the $XOR$ constraints of \flexics~which has the advantage that it is anytime, as well as the filtering of \cx~to maximize patterns diversity. 
	\item 
	We extend a recent interactive pattern mining approach (\letsip~\cite{Dzyuba:letsip}) by introducing new, more complex, descriptors for \textit{explainable ranking}, thereby leading to a more concise and diverse pattern sets.
	These descriptors exploit the concept of {\bf discriminating sub-patterns}, which separate patterns that are given low rank by the user from those with high rank. By temporarily adding those descriptors, we can learn weights for them, which are then apportioned to involved items without blowing up the feature space.
	\item We combine those two improvements to \letsip{}, i.e. the improved sampling of \cdflexics and the more complex features learned dynamically during the iterative mining process. This allows the resulting algorithm to both return more diverse patterns more efficiently, and to leverage user feedback more effectively to improve later iterations.
\end{enumerate} 

In Section \ref{sec:contrib:cdflexics}, we discuss how we integrate \closedx into \flexics. In Section \ref{sec:technical-contribution}, we explain how to derive complex pattern features, followed by a discussion of how to map user feedback back into primitive features. Section \ref{sec:exps} containts extensive experiments evaluating our contributions, before we discuss related work in Section \ref{sec:related_works}. We conclude in Section \ref{sec:conclusion}. 

\section{Preliminaries}
\label{sec:preliminaries}

\begin{algorithm}[t]
	\caption{The general CP solving algorithm} \label{alg:generic_cp}
	\algrenewcommand\algorithmicrequire{\textbf{Data}:}
	\algrenewcommand\algorithmicensure{\textbf{Result}:}
	\algblock[Name]{Begin}{End}
	\begin{algorithmic}[1]		
		\Function{\textsc{Solve}}{$\mathcal{S} = \left<X, D, \mathcal{C}\right>$}
		\Begin
		\State $\textsc{Propagate}(\mathcal{S})$
		\If{$\exists x_i \in X \,s.t.\, \card{dom(x_i)} = 0$}
		\State \Return $\emptyset$
		\ElsIf{$\exists x_i \in X \,s.t.\, \card{dom(x_i)} > 1$}
		\State $d \leftarrow \textsc{MakeDecision}(\mathcal{P})$ \;
		\State \Return $\textsc{Solve}(\mathcal{P}\land d) \cup \textsc{Solve}(\mathcal{P}\land \lnot d)$
		\Else
		\State \Return solution $D$ 
		\EndIf
		\End
		\EndFunction
	\end{algorithmic}
\end{algorithm}

\subsection{Constraint Programming (CP)}
Constraint programming \cite{HoeveK06} is a powerful paradigm which offers a generic and modular approach to model and solve combinatorial problems.
A CP model consists of a set of variables $X=\{x_1,\ldots, x_n\}$, a set of domains $D$ mapping each variable $x_i\in X$ to a finite set of possible values $dom(x_i)$, and a set of constraints $\mathcal{C}$ on $X$.
A constraint $c \in \mathcal{C}$ is a relation that specifies the allowed combinations of values for its variables $X(c)$. An assignment on a set $Y \subseteq X$ of variables is a mapping from variables in $Y$ to values in their domains. A solution is an assignment on $X$ satisfying all constraints.

To find solutions, CP solvers usually work in two steps. First, the search space is reduced by adding constraints (called decisions) in a depth-first search. A decision is, for example, an assignment of a variable to a value in its domain. Then comes a phase of propagation that checks the satisfiability of the constraints of the CP model. This phase can also filter (i.e. remove) values in the domains of
variables that do not lead to solutions . At each decision, constraint filtering algorithms prune
the search space by enforcing local consistency properties like {\it domain consistency}. A constraint $c$ on $X(c)$ is domain consistent, if and only if, for every  $x_i \in X(c)$ and every $v \in dom(x_i)$, there is an assignment satisfying $c$ such that $(x_i = v)$. 
The baseline recursive algorithm for solving is presented in Algorithm 1. It relies on the two functions \textsc{Propagate} and \textsc{MakeDecision} to respectively filter values by using constraints and reduce the search space by making decisions.

\subsection{Pattern mining} 
Given a database $\SDB$, a language $\Lang$ defining subsets of the data and a selection predicate $q$ that determines whether an element $\phi \in \Lang$ (a pattern) describes an interesting subset of $\SDB$ or not, the task is to find all interesting patterns, that is,
$$\theory{\Lang}{\SDB}{q}=\theoryextension{\phi}{\in}{\Lang}{q(\SDB,\phi)\,\,is\,\,true}$$

A well-known pattern mining task is {\it frequent itemset mining}~\cite{Agrawal94}. 
Let $\items=\{1, ..., n\}$ be a set of $n$ \textit{items}, an \emph{itemset} (or pattern) $P$ is a non-empty subset of $\items$. The language of itemsets corresponds to $\lang{\items} = 2^{\items} \backslash \emptyset$. A transactional dataset $\SDB$ is a bag (or multiset) of transactions over $\items$, where each \emph{transaction} $t$ is a subset of  $\items$, i.e., $t \subseteq \items$; $\bdd = \{1,...,m\}$ a set of $m$ \textit{transaction} indices. An itemset $p$ \emph{occurs} in a transaction $t$, iff $p \subseteq t$. The \emph{cover} of $p$ in $\SDB$ is the bag of transactions in which it occurs: $\cover(p) = \{t\in \SDB  {}\mid p \subseteq  t\}$. The \emph{support} of $p$ in $\SDB$ is the size of its cover: $\freq(p)$ = $\card{\cover(p)}$. A well-known \emph{interestingness} predicate is a threshold on the support of the itemsets, the \emph{minimal support}: $\theta$. The task in frequent set mining is to compute \emph{frequent itemsets}:  $\theoryextension{p}{\in}{\lang{\items}}{\freq(p) \geq \theta}$.,

Additional constraints on the individual itemsets to reduce the redundancy between patterns, such as closed frequent itemsets\cite{Lakhal}, have been proposed. An itemset $p$ is said to be closed if there is no $q \supseteq p$ such that $\freq(p) = \freq(q)$, that is, $p$ is maximal with respect to set inclusion among those itemsets having the same support. 

\medskip
\noindent
\textbf{\bf Constraint Programming for Itemset Mining.} Several proposals have investigated relationships between itemset mining and constraint programming (CP) to revisit data mining tasks in a declarative and generic way \cite{BelaidBL19a,LazaarLLMLBB16,RaedtGN08,SchausAG17,KhiariBC10}. The declarative aspect represents the key advantage of the proposed CP approaches. This allows modeling of complex user queries without revising the solving process but at the cost of efficiency.

A CP model for mining frequent closed itemsets was proposed in \cite{LazaarLLMLBB16} which has successfully encoded both the closeness relation as well as the anti-monotonicity of frequency into one global constraint called \closedp.  It uses  a vector of Boolean variables $(x_1, \ldots, x_{\card{\items}})$ for representing itemsets, where $x_i$ represents the presence of the item $i \in \items$ in the itemset. 
We will use the notation $x^+ = \{i \in \items \,\mid\, x_i = 1 \}$ to represent the pattern associated to these variables.

\begin{definition}[\closedp~constraint]
	Let $(x_1, \ldots, x_{\card{\items}})$ be a vector of Boolean variables, $\theta$ a support threshold and $\bdd$ a dataset. 
	The constraint $\closedp_{\SDB, \theta}(x_1, \ldots, x_{\card{\items}})$  holds if and only if $x^+$ is a closed frequent itemset w.r.t. the threshold $\theta$. 
\end{definition}

\subsection{Pattern Diversity}
The discovery of a set of diverse itemsets from binary data is an important data mining task. Different measures have been proposed to measure the diversity of itemsets. In this paper, we consider the Jaccard index as a measure of similarity on sets. We use it to quantify the overlap of the covers of itemsets. 

\begin{definition}[Jaccard index]
	Let $p$ and $q$ be two itemsets. The \emph{Jaccard index} is defined as 
	$$
	\jac(p,q) = \frac{\lvert \cover{p} \cap \cover{q} \rvert}{\lvert \cover{p} \cup \cover{q} \rvert}
	$$
\end{definition}

A lower Jaccard indicates low similarity between itemset covers, and can thus
be used as a measure of diversity between pairs of itemsets.

\begin{definition}[\bf Diversity/Jaccard constraint] 
	Let $p$ and $q$ be two itemsets. Given the $\jac$ measure and a diversity threshold $J_{max}$, we say that $p$ and $q$ are pairwise diverse 
	iff $\jac(p,q) \leq J_{max}$.
\end{definition} 

We now formalize the diversity problem for itemset mining. Diversity is controlled through a threshold on the Jaccard similarity of pattern occurrences.

\begin{definition}[Diverse frequent itemsets]
	\label{prob1}
		Let $\mathcal{H}$ be a history of pairwise diverse frequent itemsets, and $J_{max}$ a bound on the maximum allowed Jaccard value. The task is to mine new itemsets $p$ such that $p$ is diverse according to $\mathcal{H}$: 
		$$\texttt{div}_{J}(p, \mathcal{H}, J_{max}) \Leftrightarrow \forall h \in \mathcal{H}, \jac(p,h) \leq J_{max}$$
\end{definition}

In~\cite{HienLALLOZ20}, we showed that the Jaccard index has no monotonicity property, which prevents usual pruning techniques and makes classical pattern mining unworkable. To cope with the non-monotonicity of the Jaccard similarity, we proposed an anti-monotonic lower bound relaxation of the Jaccard index, which allow effective pruning. 

\begin{proposition}[Lower bound]
Let $p$ and $h$ be two patterns, and $\theta$ a bound on the minimal frequency of the patterns, then
$$LB(p,h) = \frac{\max(0, \theta-\card{\cover(p)\backslash\cover(h)}}{\card{\cover(h)} + \card{\cover(p)\backslash\cover(h)}} \le \jac(p,h)
$$
\end{proposition}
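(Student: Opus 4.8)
Introduce the shorthand $A = \cover(p)$ and $B = \cover(h)$, viewed as sets of transaction indices in $\bdd$. The plan is to compare the two fractions piece by piece: first I would show that their denominators coincide exactly, which reduces the whole statement to an inequality between numerators, and then I would settle that inequality using the frequency constraint.

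First I would rewrite the denominator of $LB(p,h)$. Since $A \cup B$ splits as the disjoint union of $B$ and $A \setminus B$, we have $\card{A \cup B} = \card{B} + \card{A \setminus B}$. Hence the denominator $\card{\cover(h)} + \card{\cover(p)\setminus\cover(h)}$ of $LB(p,h)$ is precisely $\card{\cover(p)\cup\cover(h)}$, i.e. the denominator of $\jac(p,h)$. Both denominators are strictly positive (as $\card{\cover(h)} \ge \theta > 0$), so the claimed inequality is equivalent to the comparison of numerators
$$\max(0,\ \theta - \card{\cover(p)\setminus\cover(h)}) \le \card{\cover(p)\cap\cover(h)}.$$

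Next I would dispatch this by a case split on the maximum. If $\theta - \card{A \setminus B} \le 0$, the left-hand side is $0$, which is trivially bounded by the non-negative cardinality $\card{A \cap B}$. Otherwise the left-hand side equals $\theta - \card{A \setminus B}$, and the target inequality rearranges to $\theta \le \card{A \cap B} + \card{A \setminus B}$. Here I would invoke the set-partition identity $A = (A \cap B) \sqcup (A \setminus B)$, which gives $\card{A \cap B} + \card{A \setminus B} = \card{A} = \card{\cover(p)} = \freq(p)$, so the claim reduces to $\theta \le \freq(p)$.

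The final step appeals to the frequency constraint: the bound is applied only to frequent candidate patterns, so $p$ satisfies $\freq(p) \ge \theta$, which closes the argument. The only non-routine observation is the exact coincidence of the denominators via $\cover(p)\cup\cover(h) = \cover(h) \sqcup (\cover(p)\setminus\cover(h))$; once that is noticed, everything else is the elementary identity $\card{A} = \card{A\cap B} + \card{A\setminus B}$ together with $\freq(p)\ge\theta$. I therefore expect no real obstacle, only the care needed to confirm that the outer $\max$ with $0$ keeps $LB$ a legitimate non-negative relaxation in the regime where $\card{\cover(p)\setminus\cover(h)}$ exceeds $\theta$.
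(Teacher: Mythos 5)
Your proof is correct, and it is essentially the intended argument: the key observation that the denominators coincide exactly, via the disjoint decomposition $\cover(p)\cup\cover(h) = \cover(h)\,\sqcup\,(\cover(p)\setminus\cover(h))$, reduces everything to the numerator inequality $\max(0,\theta-\card{\cover(p)\setminus\cover(h)}) \le \card{\cover(p)\cap\cover(h)}$, which your case split settles using $\card{\cover(p)} = \card{\cover(p)\cap\cover(h)} + \card{\cover(p)\setminus\cover(h)} \ge \theta$, the frequency hypothesis built into the proposition. Note that this paper does not actually reproduce a proof — it defers to \cite{HienLALLOZ20} — but the argument given there rests on the same decomposition and the same appeal to $\freq(p)\ge\theta$, so your route matches the source.
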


This Approximate bound is then exploited within the \closedp~constraint to mine pairwise diverse frequent closed itemsets.

\begin{definition}[\bf \closedx]\label{def:closed:div}
	Let $x$ be a vector of Boolean item variables, $\mathcal{H}$ a history of pairwise diverse frequent closed itemsets (initially empty), $\theta$ a support threshold, $J_{max}$ a bound on the maximum allowed Jaccard, and $\SDB$ a dataset. 
	The $\closedx_{\SDB, \theta}(x,\mathcal{H},J_{max})$ constraint  holds if and only if: (1) $x^+$ is closed; (2) $x^+$ is frequent, $\freq(x^+) \geq \theta$; (3) $x^+$ is diverse, $\forall\, h \in \mathcal{H}, LB(x^+,h) \leq J_{max}$.
\end{definition}

By using this lower bound, it is easier to filter items that would give patterns too close to the ones in the history (see \cite{HienLALLOZ20} for more details).  

\subsection{\letsip: Weighted constrained sampling} 
Our work is based on \letsip~\cite{Dzyuba:letsip}, which in turn uses \flexics~\cite{Dzyuba:flexics}. In this and the next section, we will therefore sketch the pattern sampling and feedback + preference learning steps of \letsip~(see Algorithm~\ref{algo:letsip}) before describing our own proposals in Sections \ref{sec:contrib:cdflexics} and \ref{sec:technical-contribution}. 
The main idea behind \flexics~consists of partitioning the search space into a number of cells, and then sampling a solution from a random cell. 
This technique was inspired by recent advances in weighted solution sampling for SAT problems~\cite{SatSampling-AAAI15}.
It exploits \weightgen~\cite{DBLP:journals/corr/ChakrabortyFMSV14} to sample SAT problem solutions. 
\weightgen~partitions the search space into cells using random XOR constraints and then extracts a pattern from a randomly selected cell. 
An individual XOR constraint over variables X has the form $ \bigotimes b_{i} . x_{i} = b_{0}$, where $b_{0\mid i} \in \{0, 1\}$. The coefficients $b_i$ determine the variables involved in the constraint,
whereas the parity bit $b_0$ determines whether an even or an odd number of variables must be set to $1$. Together, $m$ XOR constraints identify one cell belonging to a partitioning of the overall solution space into $2^m$ cells.
\weightgen~uses a two-step procedure consisting of 1) estimating the number of XOR required for the partitioning and 2) generating these random constraints and sample the solutions. The sampling step needs the use of an efficient oracle to enumerate the solutions in the different cells. \ea{Dzyuba} evaluated two oracles: \eflexics~which employs the Eclat Algorithm for the enumeration and \gflexics, a CP-based method based on \cpim~\cite{RaedtGN08}. 
In this paper, we propose to use an efficient CP-based method, called \cdflexics, based the \closedx global constraint. 

\subsection{\letsip: Preference based pattern mining}
User feedback w.r.t. patterns takes the form of providing a total order over a (small) set of patterns, called a query $Q$. User feedback $\{p_1 \succ p_3 \succ p_2 \succ p_4\}$, for instance, indicates that the user prefers $p_1$ over $p_3$, which they prefer over $p_2$ in turn, and so on. This total order is translated into pairwise rankings $\{(p_1\succ p_3), (p_1\succ p_2), (p_1\succ p_4), \ldots\}$.
Patterns are represented using a feature representation $\mathbf{F} = \{f_{1},\ldots,f_{d}\} \in \mathbb{R}^d$. 
Examples of features include $Len(p) = \card{p}/\card{\items}$, $Freq(p) = \freq(p)/\card{\SDB}$, in which case the corresponding $f_i$ is truly in $\mathbb{R}$, or $Items(i,p) = [i \in p]$; and $Trans(t_i,p) = [p \subseteq t_i]$, where $[.]$ denotes the Iverson bracket, leading to (partial) feature vectors $\in \{0,1\}^{\card{\items}}$ and $\in \{0,1\}^{\card{\SDB}}$, respectively. The total number of features depends on the dimensions of the data, and is
equal to $\card{\items} + \card{\SDB} + 2$. 
Given this feature representation, each ranked pair $p_i\succ p_j$ corresponds to a classification example $(\mathbf{F}_i-\mathbf{F}_j, +)$.

\letsip~uses the parameterized logistic function
$$\phi_{logistic}(p;w,A) = A + \frac{1-A}{1+e^{-\mathbf{w\cdot {F}}}}$$ as a learnable quality function for patterns with $\mathbf{F}$ the aforementioned feature representation of a pattern $p$, $\mathbf{w}$ the weight vector to be learned, and $A$ is a parameter that controls the range of the interestingness measures, i.e.  $\phi_{logistic} \in [A,1]$. 
For details, we direct the reader to the original publication~\cite{Dzyuba:letsip}.
Finally, to select the $k$ patterns $Q$ to be presented to the user,  
\letsip~uses \flexics~to sample $k$ patterns proportional to $\phi_{logistic}$. These patterns are selected according to a {\sc Top}($m$) strategy, which picks the $m$ highest-quality patterns from a cell (Algorithm~\ref{algo:letsip}, line~\ref{letSip:top:m}). Moreover, to help users to relate the queries to each other, \letsip~propose to retain the top $\ell$ patterns from the previous query and only sample $k-\ell$ new patterns ~(Algorithm~\ref{algo:letsip}, lines \ref{letSip:samplinga}-\ref{letSip:samplingb}).

\begin{example}\label{weigth:letsip}
Let $\items = \{0, \ldots, 6\}$ be a set of items and 
$Q = \{p_1, p_2,p_3, p_4\}$ a sample of $four$ patterns, where $p_1 = \{4,6\}, p_2 = \{1,6\}, p_3 = \{0\}$ and $p_4 = \{3\}$. 
Let's consider a user ranking $U = \{ p_2 \succ p_1 \succ p_3 \succ p_4\}$ as feedback and suppose that $\freq(p_1) = 0.54$, $\freq(p_2) = 0.18$ and $\freq(p_3) = 0.36$. Using items and frequency as features, we have $\mathbf{F}_1 = (0,0,0,0,1,0,1,0.54)$, $\mathbf{F}_2 = (0,1,0,0,0,0,1,0.18)$, $\ldots$ To learn the  vector of weights $\mathbf{w}$, $U$ is translated into pairwise rankings $\{$($p_2 \succ p_1$), ($p_2 \succ p_3$), $\ldots\}$ and distances between feature vectors for each pair are calculated. 
So, given the preference $p_2\succ p_1$, we have $\mathbf{F}_2-\mathbf{F}_1 = (0,1,0,0,-1,0,0,-0.36)$. After the first iteration of \letsip, the learned weight vector is  $\mathbf{w} = (-0.33, 0.99, 0, -0.99, 0.33, 0, 1.33,0.15)$. 
 \end{example}
\section{Sampling Diverse Frequent Patterns}
\label{sec:contrib:cdflexics}

\letsip’s sampling component is based on \flexics. An interactive system seeks to ensure faster learning of accurate models by targeted selection of patterns to show to the user. An important aspect is that the user be quickly presented with diverse results. If patterns are too similar to each other, deciding which one to prefer can become challenging, and if they appear in several successive iterations, it eventually becomes a slog. 

We propose in this section a modification to \flexics, called \cdflexics, to explicitly control the diversity of sampled patterns, thus ensuring pattern diversity within each query and thus sufficient exploration. Our key idea is to exploit the diversity constraint \closedx~as an oracle to select the patterns in the different cells (partitions). 
We start by presenting our strategy for integrating \closedx~into \flexics in section~\ref{sec:cdflexics:integration}, then we detail the associated filtering algorithm in section~\ref{sec:cdflexics:prop}.

\subsection{Integrating the diversity constraint into \flexics}
\label{sec:cdflexics:integration}
In order to turn \closedx~into a suitable oracle, we propose to control the diversity of patterns sampled 
from each cell as follows. We maintain a local history $\store_{loc}$ for each cell. 
Initially, $\store_{loc}$ is initialized to empty. Each time a new diverse pattern is mined from the current cell, it is added to $\store_{loc}$ and the exploration of the remaining search space of the cell continues using the new updated history. Finally, the pattern sampled for that cell is picked among the diverse patterns in $\store_{loc}$,  with a probability proportional to a given quality measure, e.g., frequency. This process is repeated for each new cell until the required number of samples is reached. With a such strategy, the exploration of the different cells become more efficient and faster, as their size is greatly reduced thank to the filtering of \closedx~which avoids the discovery of non diverse patterns within each cell. 

\begin{center}
	\begin{figure}[!t]\centering
		\begin{tabular}{ccccc}
			\begin{minipage}{.16\textwidth} \label{prop:step:1} 
					\begin{center} \footnotesize
						$
						\left\{
						\begin{array}{l}
						x_{1} \oplus x_{5} = 0 \\
						x_{1} \oplus x_{3} \oplus x_{5} = 0 \\
						x_{4} \oplus x_{5} = 1 \\
						\end{array}
						\right.
						$
				{\footnotesize 1) Random XOR \\ Constraints}
				\end{center}
			\end{minipage}
			&
			\begin{minipage}{.15\textwidth} 
				\begin{center}\footnotesize
						\begin{tabular}{*{2}{c|c}}
							1  0  0  0  1 & 0 \\
							1  0  1  0  1 & 0 \\
							0  0  0  1  1 & 1 \\
						\end{tabular}
					{\footnotesize 2) Initial \\ constraint matrix}
				\end{center}
			\label{fig:xor:step:2}
			\end{minipage}
			&
			\begin{minipage}{.22\textwidth} \label{propagate:step-3}
				\begin{center}\footnotesize
						\begin{tabular}{*{2}{c|c}}
							\quad 1  0  0  0  1 & 0 \\
							$\rightarrow$ 0  0  {\bf 1}  0  0 & {\bf 0} \\
							\quad 0  0  0  1  1 & 1 \\
						\end{tabular} 
					{\footnotesize 3) Echelonized matrix: \\ $x_3 = 0$ is derived}
				\end{center}
			\end{minipage}
			&
			\begin{minipage}{.20\textwidth}  \label{propagate:step-4}
				\begin{center}\footnotesize
						\begin{tabular}{*{2}{c|c}}
							1  0  0  0  1 & 0 \\
							0  0  0  1  1 & 1 \\
							0  0  0  0  0 & 0 \\
						\end{tabular} 
					{\footnotesize 4) Updated matrix 
						(rows 2 and 3 are swapped)}
				\end{center}
			\end{minipage}
			&
			\begin{minipage}{.18\textwidth} \label{propagate:step-5}
				\begin{center} \footnotesize
						\quad \quad \;\; $\downarrow$ $\downarrow$ \\
						\begin{tabular}{*{2}{c|c}}
							\quad 1  0  0  0 {\bf 0} & {\bf 1} \\
							$\rightarrow$ 0  0  0  {\bf 0 0} & {\bf 1} \\
							\quad 0  0  0  0  0 & 0 \\
						\end{tabular}
					{\footnotesize 5) If $x_4$ et $x_5$ are set to 1, the system is unsat.}
				\end{center}
			\end{minipage} \\
		\end{tabular}
		
		\caption{Propagation example of XOR constraints using Gaussian elimination. \label{propagate-example}}
	\end{figure}
\end{center}

\subsection{Filtering of \cdflexics}
\label{sec:cdflexics:prop} 
The filtering of \cdflexics combines two propagators, that of \closedx~(see \cite{HienLALLOZ20} for more details) and the propagator for XOR constraints. In CP, constraints interact through shared variables, i.e., as soon as a constraint has been propagated (no more values can be pruned), and at least a value has been eliminated from the domain of a variable, say $v$, then the propagation algorithms of all other constraints involving $v$ are triggered. 
For instance, when the propagator of \closedx~modifies the domain of item variables, these modifications must be propagated to XOR constraints. Now, we detail the filtering algorithm of the XOR constraints. This propagator is based on the method of {\it Gaussian elimination} \cite{GomesHSS07}, a classical algorithm for solving systems of linear equations. All coefficients $b_{i} \in \{0, 1\}$ of the XOR constraints form a binary matrix $\mathcal{M}$ of size $(n,m+1)$, where $n$ is the number of Individual XOR constraints and $m$ is the number of items in the dataset, the last column $b_{0} \in \{0, 1\}$ represents the parity bit. 
Thus, for each constraint XOR $k$, if a variable $x_i$ is involved in this constraint, then $\mathcal{M}[k,i] = 1$, otherwise $\mathcal{M}[k,i] = 0$. Figure~\ref{propagate-example}.2 shows the augmented matrix representation of the XOR constraints system of Figure~\ref{propagate-example}.1 on a dataset of $5$ items. 

\begin{algorithm}\footnotesize 
	\caption{Filtering for XOR constraints \label{propagate-xor}}
	\algrenewcommand\algorithmicrequire{\textbf{In}:}
	\algrenewcommand\algorithmicensure{\textbf{InOut}:}
	\algrenewcommand\algorithmicforall{\textbf{foreach}}
	
	\algblock[Name]{Begin}{End}
	\begin{algorithmic}[1]
		\Require $\mbox{Coefficient matrix} \mathcal{M}$
		\Ensure Vector of Boolean variables $(x_1\ldots x_n)$	     
		\Begin
		\State {$x^+ \gets \{i \vert x_i=1\}, \, x^- \gets \{i \vert x_i=0\}, \, x^* \gets \{i \vert i \notin x^+ \cup x^-\}$}
		
		\ForAll{($i \in x^+ \ \wedge \mbox{row}\ r \in \mathcal{M}$)} \label{alg:update-M} 
		
		\If{($\mathcal{M}[r][i] = 1$)}
		\State $b_{0}^r \gets$ $1-b_{0}^r$
		\EndIf
		\State $\mathcal{M}[r][i] \gets 0$
		\EndFor 
		
		\If{ \textsc{CheckInconsistency()}} \Return $fail$
		\ElsIf{($x^* = \emptyset$)} \Return $true$
		\Else 
		\State \textsc{Echlonize($\mathcal{M}$)} \label{alg:echelonize}
		\If{\textsc{CheckInconsistency()}} \Return $fail$
		\Else \ {\sc FixVariables()}  
		\EndIf
		\EndIf
		\State \Return $true$ 
		
		\End
		
		\Function{\sc CheckInconsistency} {  }
		\ForAll{$row$ $r \in \mathcal{M}$} 
		\If{($b_{0}^r = 1$ $\wedge$ $\sum_{i=1}^{n}{\mathcal{M}[r][i]} = 0$)}
		\State \Return $true$
		\EndIf
		\EndFor 
		\State \Return $false$ 
		\EndFunction
		
		\Procedure{\sc FixVariables} {  }
		\ForAll{$row$ $r \in \mathcal{M}$}  		
		\If{($\sum_{i=1}^{n}{\mathcal{M}[r][i]} = 1$)}
		\State $dom(x_i) \gets dom(x_i)-\{1-b_0^r\};$ 
		\State $x^+\gets x^+ \cup \{i\}$; $\vee$ $x^-\gets x^- \cup \{i\}$
		\EndIf
		\EndFor 
		\EndProcedure 	
	\end{algorithmic}
\end{algorithm}

At each step~(see Algorithm~$\ref{propagate-xor}$), the matrix $\mathcal{M}$ 
is updated with the latest variable assignments $x^+$: for each variable $x_i$ set to $1$, 
the parity bit of the constraint where the variable appears is inverted and the coefficient 
of this variable in the matrix $\mathcal{M}$ is set to $0$. 
Figure~\ref{propagate-example}.5 shows this transformation where the parity bit of line $1$ becomes 1 and the coefficient of variable $x_5$ is set $0$. Then, we check if the updated matrix does not lead to an inconsistency, i.e., if a row becomes empty while its right hand side is equal to $1$. In this case, the XOR constraints system 
is unsatisfiable and the current search branch terminates (Figure~\ref{propagate-example}.5). Otherwise, an {\it echelonization\footnote{This operation is performed using the $m4ri$ library.}} operation (line~$\ref{alg:echelonize}$) is performed with the Gauss method to obtain an echelonized matrix : all ones are on or above the main diagonal and all non-zero rows are above any rows of all zero~(Figure~\ref{propagate-example}.3). During echelonization, two situations allow propagation. If a row
becomes empty while its right hand side is equal to $1$, the system is unsatisfiable. If a row contains only one
free variable, it is assigned the right hand side of the row (Figure~\ref{propagate-example}.3). 

\begin{algorithm}[t]\footnotesize
	\caption{\letsip~extension (new steps are in bold font) \label{algo:letsip}}
	\algrenewcommand\algorithmicrequire{\textbf{In}:}
	\algrenewcommand\algorithmicensure{\textbf{Parameters}:}
	\algblock[Name]{Begin}{End}
	\begin{algorithmic}[1]
		\Require Dataset $\SDB$, minimal frequency threshold $\theta$
		\Ensure Query of size $k$, number of iterations $T$, query retention $\ell$, range $A$, \flexics~error tolerance $\kappa$, Patterns feature representations $\mathbf{F}$ 
		\Begin
		\State Weight vector $\mathbf{w}^0 \gets \mathbf{0}$, 
		\State Feedback $U \gets \emptyset$, 
		\State Ranked patterns $Q_0^* \gets \emptyset$ 
		\State \Comment Initialization
		\State Ranking function $h_0 \gets \textsc{Logistic}(\mathbf{w}^0, A)$
		\For{$t = 1, 2 \ldots T$} {\Comment \footnotesize{Mine, Interact, Learn, Repeat loop}}
		
		\State $R \gets \textsc{TakeFirst}(Q^*_{t-1}, \ell)$ \label{letSip:samplinga} 
		\State {\Comment \footnotesize{Retain the top $\ell$ patterns from the previous iteration}}
		\State Query $Q_t \gets R \cup \textsc{SamplePatterns}(\SDB, h_{t-1}) \times (k-\card{R})$ \label{letSip:samplingb} \State {\Comment \footnotesize{Call SamplePatterns $(k-\card{R})$ times}}
		\State $\mathcal{P}_{t} \gets \textsc{Rank}(Q_{t})$; $U \gets U \cup \mathcal{P}_{t}$ 
		\State {\Comment \footnotesize{Ask user to rank patterns in $Q_t$}}
		\State $\mathbf{F} \gets \textsc{ConvertToFeaturesVectors}(\mathcal{P}_t, \SDB)$
		\State {\Comment \footnotesize{Convert patterns to feature representations}}
		\State ${p_{disc}} \gets$ {\bf \textsc{LearnDiscriminatingPattern}}$(\mathcal{P}_t)$ \label{letSip:discrPattern} \State {\Comment \footnotesize{Extract discriminating pattern (Algorithm~\ref{algo:acx:learn})}}
		\State $\mathbf{F}^* \gets$ {\bf \textsc{AddDiscrPatternFeature}}$(\mathbf{F}, {p_{disc}}, \SDB)$ \label{letSip:AddDiscPattern}
		\State {\Comment \footnotesize{Update pattern features}}
		\State $\mathbf{w}^t \gets $ {\bf \textsc{Aggregate}}(\textsc{LearnWeights}$(U, \lambda, \mathbf{w}^{t-1}, \mathbf{F}^*))$ \label{letSip:updateWeights} 
		\State {\Comment \footnotesize{Update pattern features weights}}
		\State $\mathbf{F} \gets$ {\bf \textsc{RemoveDiscrPatternFeature}}$(\mathbf{F}^*,{p_{disc}},\SDB)$\label{letSip:update:features}
		\State {\Comment \footnotesize{remove discr. pattern feature}}
		
		\State $h_t \gets \textsc{Logistic}(\mathbf{w}^t, A)$
		\EndFor
		\End
		
		\Function{\sc SamplePatterns}{$\SDB$, Sampling weight $\mathbf{w}: L \rightarrow [A,1]$}
		\State $C \gets \textbf{\cdflexics} (\SDB, freq(.) \geq \theta, \mathbf{w}; \kappa)$
		\State \Return {\sc Top}($m$) highest-wheighted patterns from $C$ \label{letSip:top:m}
		\EndFunction
	\end{algorithmic}
\end{algorithm}
\section{Preference learning based on explainable ranking}
\label{sec:technical-contribution}
\subsection{Towards more expressive and learnable pattern representations}

Features for pattern representation involve indicator variables for included items or sub-graphs, or for covered transactions \cite{Dzyuba:letsip,DBLP:journals/sadm/BhuiyanH16}, or pattern length or frequency \cite{Dzyuba:letsip}. The issue is that such features are treated as if they were independent, whether in the logistic function mentioned above, or multiplicative functions \cite{DBLP:conf/icml/Rueping09,DBLP:journals/sadm/BhuiyanH16}. While this allows to identify pattern components that are \emph{globally} interesting for the user, it is impossible to learn relationships such as "the user is interested in item $i_1$ if item $i_3$ is present but item $i_4$ is absent".
In addition, the pattern elements whose inclusion is indicated are defined before-hand, and the user feedback has no influence on them.
We therefore propose to learn more expressive features  in order to improve the learning of user preferences. 

In this work, we propose to consider {\it discriminating sub-patterns} that better capture (or explain) these preferences. 
Those features exploit ranking-correlated patterns, \ie patterns that influence the user ranking either by allowing some patterns to be well ranked or the opposite.

\subsection{Interclass Variance}
As explained above, our goal is to mine sub-patterns that discriminate between patterns that have been given a high user ranking and those that received a low one. An intuitive way of modelling this problem consists of considering the numerical ranks given to individual patterns as numerical labels and the mining setting as akin to regression. We are not aiming to build a full regression model but only to mine an individual pattern that correlates with the numerical label. 
For this purpose, we use  the interclass variance measure as proposed by \cite{morishita}.

\begin{definition}
	\label{def:icv}
	Let $\mathcal{P}$ be a set of ranked patterns of size $k$ according to user ranking $U$, and $\mathcal{P}^*$ be a subset of $\mathcal{P}$. 
	We define by $\mu(\mathcal{P}^*)$ the average rank of patterns $p \in \mathcal{P}^*$:
	\begin{equation}
	\label{eqn:muP}
	\mu(\mathcal{P}^*) = \frac{1}{\card{\mathcal{P}^*}}\cdot\sum_{p \in \mathcal{P}^*}r(p), 
	\end{equation}
	where $r(p)$ is the rank of $p$ according to a user feedback.
\end{definition}

\noindent
Let $\mathcal{P}_{y} = \{p \in \mathcal{P} \,\lvert\, p \supseteq y \}$: the subset of patterns $p$ in $\mathcal{P}$ in which the sub-pattern $y$ is present, and $\overline{\mathcal{P}}_{y} = \mathcal{P} \backslash  \mathcal{P}_{y}$.
The interclass variance of the sub-pattern $y$ is defined by:

\begin{equation}
\label{eqn:icv}
\icv(y) = \card{\mathcal{P}_{y}} \cdot (\mu(\mathcal{P}) - \mu(\mathcal{P}_{y}))^2 + \card{\overline{\mathcal{P}}_{y}} \cdot  (\mu(\mathcal{P}) - \mu(\overline{\mathcal{P}}_{y}))^2
\end{equation}

The sub-pattern $y$ having the largest \icv~is then the one which is the most strongly correlated with the user ranking of the patterns.

\begin{example}\label{icv-letsip}
	Consider again our example with a query $Q = \{p_1, p_2, p_3, p_4\}$ and the user feedback $U = \{p_2 \succ p_1 \succ p_3 \succ p_4\}$.  Let $\mathcal{P}$ the corresponding ranked patterns w.r.t. $U$. 
	For $y = \{1\}$, we get $\mathcal{P}_{y} = \{p_2\}$ (since $\{1,6\} \supseteq \{1\}$), $\overline{\mathcal{P}}_{y} = \{p_1, p_3, p_4\}$, $\mu(\mathcal{P}_y) = 1$; $\mu(\overline{\mathcal{P}}_y) = \frac{r(p_1)+r(p_3)+r(p_4)}{3}=\frac{(2+3+4)}{3} = 3$ and $\mu(\mathcal{P}) = 2.5$. Applying equation~(\ref{eqn:icv}), we get $\icv(\{1\}) = (2.5-1)^2 + 3\cdot(2.5-3)^2 = 3$. 
	Similarly, $\icv(\{0\})=0.33$. 
\end{example}

\subsection{Learning Discriminating Sub-Patterns}
To find the sub-pattern $y \subseteq p$ with the greatest interclass variance \icv~to be used as a new feature  
($p \in \mathcal{P}$, st. $\mathcal{P}$ being the set of ranked patterns), we systematically search the pattern space spanned by the items involved in patterns in $\mathcal{P}$. 
Semantically, this is the sub-pattern whose presence in one or more itemsets $p$ has influenced their ranking.
So, if $y \subseteq p$, we can say that the ranking of $p$ at the $i^{th}$ position in $\mathcal{P}$ is more likely to be explained by the presence of sub-pattern $y$. 


\begin{algorithm} \footnotesize
	\caption{The \textsc{LearnDiscriminatingPattern} algorithm} \label{algo:acx:learn}
	\algrenewcommand\algorithmicforall{\textbf{foreach}}
	
	\algblock[Name]{Begin}{End}
	\begin{algorithmic}[1]
		\Function{LearnDiscriminatingPattern}{set of ranked patterns $\mathcal{P}$}
		\State	$\icv_{max} \gets 0$, $p_{disc} \gets \emptyset$ 		
		\State  $allItems \gets items(\cup p \in \mathcal{P})$		
		\State  $subPatterns \gets allItems$
		\ForAll{$item \in allItems$}
		\label{loop1_begin}
		\If{$\icv(item) \geq \icv_{max}$}\label{alg:R2:f}
		\State $\icv_{max} \gets \icv(item)$ 
		\State $p_{disc} \gets \{item\}$
		\EndIf
		\EndFor
		\label{loop1_end}
		\ForAll{$elt \in subPatterns$}
		\label{loop2_begin}
		\While{($(item \in allItems \backslash elt) \wedge (\exists p \in  \mathcal{P}\; st.\; \{elt \cup item\} \subseteq p)$)} \label{loop3_begin}
		\If{$\icv(elt \cup item) \geq \icv_{max}$}
		\label{if:newDescriptor}
		\State $\icv_{max} \gets \icv(elt \cup \{item\})$
		\State $p_{disc} \gets elt \cup \{item\}$ 	\label{best:subDescriptor}
		\State $subPatterns \gets subPatternss \cup \,p_{disc}$  \label{dyn:set}
		\EndIf
		\EndWhile
		\EndFor
		\label{loop2_end}
		\State \Return $p_{disc}$ 	\label{return:res}
		\EndFunction
	\end{algorithmic}
\end{algorithm}

Algorithm~\ref{algo:acx:learn} implements the function {\bf \textsc{LearnDiscriminatingPattern}} (see Algorithm~\ref{algo:letsip}, line \ref{letSip:discrPattern}), 
which learns the best discriminating pattern as a feature. 
Its accepts as input a set of ranked patterns $\mathcal{P}$, 
and returns the sub-itemset with the highest ICV. 
Its starts by calculating the \icv~of all items of the patterns $p \in \mathcal{P}$ (loop~\ref{loop1_begin}$-$\ref{loop1_end}). 
Then, it iteratively combines the items to form a larger and finer-grained discriminating pattern~(loop~\ref{loop2_begin}-\ref{loop2_end})~.
Obviously, before combining sub-itemsets, 
we should ensure that the resulting sub-pattern belongs to an existing pattern $p \in \mathcal{P}$ (line~\ref{loop3_begin}). 
If such a pattern exists (line~\ref{if:newDescriptor}), we update the $\icv_{max}$, we save the best discriminating pattern computed so far (i.e. $p_{disc}$, line~\ref{best:subDescriptor}) and we extend the set of sub-itemsets with the $p_{disc}$ pattern for further improvements (line~\ref{dyn:set}). 
Finally, the best discriminating pattern is returned at line~\ref{return:res}.

\begin{proposition}[Time complexity]
	In the worst case, Algorithm \ref{algo:acx:learn} runs in a quadratic time $\mathcal{O}(n^2)$, where $n = \vert items(\cup p \in \mathcal{P}) \vert$.
\end{proposition}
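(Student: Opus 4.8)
The plan is to bound the total number of elementary operations performed by \textsc{LearnDiscriminatingPattern}, namely the evaluations of $\icv$ together with the associated comparison and set operations, since these dominate the running time. Throughout I would treat the query size $k = \card{\mathcal{P}}$ as a fixed constant, so that a single evaluation of $\icv(y)$ — which only partitions the constant-size set $\mathcal{P}$ into $\mathcal{P}_y$ and $\overline{\mathcal{P}}_y$ and averages their ranks — costs $O(1)$. Under this (standard) convention the whole argument reduces to counting how many sub-patterns $y$ are ever passed to $\icv$, and I would make the constant-cost assumption on $\icv$ explicit so the reader sees that $n$ is the only growing parameter.

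First I would dispatch the initialization and the first loop (lines~\ref{loop1_begin}--\ref{loop1_end}). Computing $allItems = items(\cup_{p\in\mathcal{P}}p)$ is a single linear scan and yields exactly $n$ items by the definition of $n$. The loop then performs one $\icv$ evaluation and one comparison per item, i.e.\ exactly $n$ iterations, contributing $O(n)$ and hence never dominating the claimed bound.

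The core of the argument is the nested loop (lines~\ref{loop2_begin}--\ref{loop2_end}). For a fixed element $elt$ of $subPatterns$, the inner \textbf{while} loop (line~\ref{loop3_begin}) only considers extensions $elt \cup \{item\}$ with $item \in allItems \setminus elt$ that still form a sub-pattern of some $p \in \mathcal{P}$; there are at most $\card{allItems} = n$ candidate items, so this loop runs $O(n)$ iterations, each of $O(1)$ cost. It therefore suffices to show that the outer \textbf{foreach} processes only $O(n)$ elements, i.e.\ that $\card{subPatterns}$ stays in $O(n)$ despite being extended at line~\ref{dyn:set}. I would establish this by arguing that every itemset inserted into $subPatterns$ is the value of $p_{disc}$ recorded at line~\ref{best:subDescriptor} at a moment where $\icv(elt\cup\{item\}) \ge \icv_{max}$ triggers an update of $\icv_{max}$, and that these recorded best itemsets form a growing chain whose length is bounded by the maximal admissible pattern size, itself at most $n$ since no sub-pattern of a $p\in\mathcal{P}$ can contain more than the $n$ available items. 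Hence at most $O(n)$ itemsets are ever added to the $n$ initial singletons, giving $\card{subPatterns} = O(n)$ and a total of $O(n)\cdot O(n) = O(n^2)$ operations across both loops.

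The step I expect to be the main obstacle is precisely this bound on $\card{subPatterns}$: the set is mutated inside the very loop that iterates over it, so a careless accounting would permit geometric blow-up, with each of the $O(n)$ inner iterations inserting a fresh itemset that feeds yet more outer iterations. Making the chain argument rigorous requires pinning down the informal \textbf{while}-loop semantics — specifically, that for each $elt$ a given candidate $item$ is examined at most once, and that an insertion at line~\ref{dyn:set} can only accompany an update of $\icv_{max}$ at line~\ref{if:newDescriptor} — so that the number of insertions is governed by the length of the monotone sequence of best discriminating patterns rather than by the raw number of inner iterations. Once that structural invariant is fixed, the quadratic count follows immediately from the two linear factors established above.
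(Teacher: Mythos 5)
Your proof is correct and follows essentially the same route as the paper's: the $\mathcal{O}(n)$ first loop, the $\mathcal{O}(n)$ bound on each inner \textbf{while} loop, and your bound $\card{subPatterns} = \mathcal{O}(n)$ correspond exactly to the paper's accounting of $\mathcal{O}(\frac{n^2}{2})$ for the $n$ initial singletons plus $\mathcal{O}(n \times \vert p_{max}\vert)$ for the at most $\vert p_{max}\vert \leq n$ elements added at line~\ref{dyn:set}. The invariant you flag as the main obstacle --- that insertions into $subPatterns$ track a monotone sequence of best discriminating patterns whose length is bounded by the maximal pattern size --- is precisely the step the paper also relies on (``$subPatterns$ evolves with at-most $\vert p_{max}\vert$ elements, since $\vert p_{disc}\vert \leq \vert p_{max}\vert$''), stated there with the same level of informality.
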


\begin{proof} 
	The first loop runs in $\mathcal{O}(n)$. 
	For the $i^{th}$ iteration of the outer loop (6-14), we have, for a fixed size $subitemsets$, at-most $(n - i)$ iterations for the inner loop. 
	Thus, giving $\mathcal{O}(\frac{n^2}{2})$ for both loops. 
	However, $subPatterns$ evolves with at-most $\vert p_{max}\vert$ elements, since $\vert p_{disc}\vert \leq \vert p_{max}\vert$, which gives additional complexity of $\mathcal{O}(n \times \vert p_{max}\vert)$. Since $\vert p_{max}\vert \leq n$, the worst case complexity is $\mathcal{O}(n^2)$.
\end{proof}

\begin{example}\label{running-example}
	Consider again the set of ranked patterns $\mathcal{P} = \{p_2, p_1, p_3, p_4\}$ w.r.t. $U = \{p_2 \succ p_1 \succ p_3 \succ p_4\}$.  
	After the first loop of Algorithm~\ref{algo:acx:learn} we get $subPatterns = \{0,1,3,4,6\}$, 
	$\icv({1}) = \icv({3}) = 3$, $\icv({0}) = \icv({4}) = 0.33$ and $\icv({6}) = 4$. Thus, $\icv_{max} = 4$ and $p_{disc} = \{6\}$. The second loop~(lines~\ref{loop2_begin}-\ref{loop2_end}) seeks to find the best combination of subitemsets that maximizes the interclass variance value $\icv_{max}$. 
	
	To illustrate this reasoning, let us look at the second iteration of the outer loop, where $elt = \{1\}$ and $allItems = \{0,1,3,4,6\}$. 
	In this case, the inner \textbf{while} loop will only consider 
	items $item \in allItems \backslash \{1\}$, 
	such that $elt \cup \{item\}$ belongs to an existing pattern $p \in  \mathcal{P}$. 
	Thus, we find $item = \{6\}$ for which $elt \cup \{item\} \subseteq p_2$. 
	As $\icv(\{1,6\}) = 3 < \icv_{max}$, $p_{disc}$ remains unchanged.  
	Note that for $item = \{3\}$ or $\{4\}$, $\nexists p \in  \mathcal{P}\; st.\; (\{1\} \cup \{item\}) \subseteq p$. For our example, Algorithm~\ref{algo:acx:learn} returns 
	$p_{disc} = \{6\}$. 
\end{example}

\section{Learning weights for $\phi_{logistic}$ from discriminating sub-patterns}
\label{sec:aggr:func}
\subsection{Exploiting the ICV-based Features}

Learning the discriminating sub-pattern as a new feature in \letsip~brings meaningful knowledge to consider during an interactive preference learning. 
In fact, this sub-pattern correlated with the user's ranking emphasizes the items of interest related to his ranking. Now, we describe how these discriminating patterns can be used in order to improve 
the  learning function $ \varphi_{logistic}$ for patterns. 

A direct way of exploiting discriminating sub-patterns $p_{disc}$ (explaining user rankings) consists of adding them to the feature vector representing patterns. 
However, this increases the size of feature vectors, introduces additional cost and most probably leads to over-fitting and generalization issues of the learning function $\phi_{logistic}$.

Instead, we temporarily add new features representing the discriminating sub-pattern $p_{disc}$, denoted $\mathbf{F}^{disc}$, to the initial feature representation $\mathbf{F}$ of individual patterns only to learn and update the weights of $\mathbf{F}$ (see Algrithm~\ref{algo:letsip}, line~\ref{letSip:AddDiscPattern}):  
\begin{itemize}
	\item $disc^{Patt}$: a binary feature for each pattern $p \in \mathcal{P}$; 
	equals 1 iff the corresponding discriminating sub-pattern $p_{disc}$ belongs to $p$; 
	$$disc^{Patt}(p,p_{disc}) = [p_{disc} \subseteq p]$$,  
	\item $disc^{Freq}$: numerical feature; the frequency of the discriminating sub-pattern belonging to $p$;
	
	$$
	disc^{Freq}(p, p_{disc}) = \left\{
	\begin{array}{ll}
	\freq(p_{disc})/\card{\SDB}\ if\ p_{disc} \subseteq p\\
	0\ otherwise
	\end{array}\right.
	$$

	\item $disc^{Size}$: numerical feature; the relative size of the discriminating sub-pattern belonging to $p$;
	  
	  $$
	  disc^{Size}(p, p_{disc}) = \left\{
	  \begin{array}{ll}
	  \card{p_{disc}}/\card{\items}\ if\ p_{disc} \subseteq p\\
	  0\ otherwise
	  \end{array}\right.
	  $$
\end{itemize}

Let $\mathbf{F}^* = \mathbf{F} \mathbin\Vert \mathbf{F}^{disc}$ the concatenated feature representation st. $\mathbf{F}^{disc} = (disc^{Patt}, disc^{Freq}, disc^{Size})$. As the size of  $\mathbf{F}$ is $\card{\items} + \card{\SDB} + 2$ and the size of $\mathbf{F}^{disc}$ is $3$, thus 
we have in total at most $\card{\items} + \card{\SDB} + 5$ features. 

Now, similar to \letsip, we need to learn a vector of feature weights defining the ranking function $\phi_{logistic}$. 
We proceed in three steps:  
\begin{enumerate}
	\item[(i)] First, we use $\mathbf{F}^*$ to learn the weight vector $\mathbf{w}_{\mathbf{F}}$ associated to  $\mathbf{F}$ (as in \letsip) and the weight vector $\mathbf{w}_{\mathbf{F}^{disc}}$ associated to $\mathbf{F}^{disc}$. 
	
	\item[(ii)] Second, using $\mathbf{w}_{\mathbf{F}^{disc}}$, we update the weight $\mathbf{w}_{f} \in \mathbf{w}_{\mathbf{F}}$ of each feature $f \in \mathbf{F}$ by mean of a multiplicative aggregation function (see Algrithm~\ref{algo:letsip}, line~\ref{letSip:updateWeights}). 
	
	\item[(iii)] Finally, to learn from new discriminating sub-patterns, we remove the features $\mathbf{F}^{disc}$ from $\mathbf{F}^{*}$ (see Algrithm~\ref{algo:letsip}, line~\ref{letSip:update:features}).
\end{enumerate} 

The next section shows how to update the feature weights $\mathbf{w}_{\mathbf{F}}$ using the learned feature weights $\mathbf{w}_{\mathbf{F}^{disc}}$. 

\begin{example} 
\label{exple:icvUsage} 
Let us consider example~\ref{running-example} and the discriminating sub-pattern $p_{disc} = \{6\}$ computed using Algorithm~\ref{algo:acx:learn}. Let us assume that $\mathbf{F}$ represents items and frequency features, 
while $\mathbf{F}^{disc} = (disc^{Patt}, disc^{Freq})$.  
According to example~\ref{weigth:letsip}, pattern $p_1 = \{4, 6\}$ is represented by the feature vector $\mathbf{F}_1 = (0,0,0,0,1,0,1,0.54)$, while pattern $p_3$ is represented by $\mathbf{F}_3 = (1,0,0,0,0,0,0,0.36)$. Now, using 
the two features 
$disc^{Patt}$ and $disc^{Freq}$, we obtain for $p_1$ the full feature vector 
$\mathbf{F}_1^{*} = (0, 0, 0, 0, 1, 0, 1, 0.54, \textbf{1}, \textbf{0.63})$, 
since $\mathbf{F_1}^{disc} = (1, 0.63)$, 
i.e., $disc^{Patt}(p_1,\{6\}) = 1$ and $disc^{Freq}(p_1,\{6\}) = 0.63$. Similarly, for $p_3 = \{0\}$ we get $\mathbf{F}_3^{*} = (1, 0, 0, 0, 0, 0, 0, 0.36, \textbf{0}, \textbf{0.0})$, since $\mathbf{F_3}^{disc} = (0, 0.0)$, i.e., $disc^{Patt}(p_3,\{6\}) = 0$, and $disc^F(p_3,\{6\}) = 0.0$. 
\end{example}

\subsection{Updating feature weights of patterns}
\label{sec:update:agg}

Given a complete features representation $\mathbf{F}^* = \mathbf{F} \mathbin\Vert \mathbf{F}^{disc}$ for which we learned two weight vectors: $\mathbf{w}_{\mathbf{F}}$ for features in $\mathbf{F}$ 
and $\mathbf{w}_{\mathbf{F}^{disc}}$ for new features in $\mathbf{F}^{disc}$, i.e., $disc^{Patt}$, $disc^{Sup}$, and  $disc^{Size}$.
Here, we show how to combine the weight vector $\mathbf{w}_{\mathbf{F}^{disc}}$, so as to reveal features (items, transactions, ...) in $\mathbf{F}$ that are seen as making better ranking prediction than others. 

Unlike \letsip, our approach aims to identify those features in terms of discriminating sub-pattern $p_{disc}$ and seeks to reward (or penalize) features in $\mathbf{F}$ by updating their current weight $\mathbf{w}_f$ ($f \in \mathbf{F}$) using a multiplicative aggregation function. 
Let $p_{disc}$ be the discriminating sub-pattern and let 
$p \in \mathcal{P}$. For a given feature $f_{disc} \in \mathbf{F}^{disc}$ and its associated learned weight $\mathbf{w}_{f_{disc}}$, the weight $\mathbf{w}_f$ is updated as follows: 
\begin{itemize}
	\item $(f_d \equiv disc^{Patt})$
	\vspace*{-0.2cm}
	\begin{itemize}
		\item for each \textit{item} feature $f_i \in \mathbf{F}$ s.t. $i \in p_{disc}\ \wedge$\ $p_{disc} \subseteq p$, $\mathbf{w}_{f_{i}} = \mathbf{w}_{f_{i}} \cdot \Phi(\mathbf{w}_{f_{d}})$
		\item for each \textit{transaction} feature $f_t \in \mathbf{F}$ s.t. $ t \in \cover(p_{disc}) \cap \cover(p)$, $\mathbf{w}_{f_{t}} = \mathbf{w}_{f_{t}} \cdot \Phi(\mathbf{w}_{f_{d}})$
	\end{itemize}
	
	\item $(f_d \equiv disc^{Sup} \vee f_d \equiv disc^{Size})$
	\vspace*{-0.2cm}\begin{itemize}
		\item for the \textit{frequency} feature $f_{sup} \in \mathbf{F}$ s.t. $p_{disc} \subseteq p$, $\mathbf{w}_{f_{sup}} = \mathbf{w}_{f_{sup}} \cdot \Phi(\mathbf{w}_{f_{d}})$
		\item for the \textit{size} feature $f_{size} \in \mathbf{F}$ s.t. $p_{disc} \subseteq p$, $\mathbf{w}_{f_{size}} = \mathbf{w}_{f_{size}} \cdot \Phi(\mathbf{w}_{f_{d}})$
	\end{itemize}
\end{itemize}

Intuitively, this updating procedure works since it assigns higher weights to the features that better explain the user ranking over time, thus increasing the probability of being picked in the next iteration.



\smallskip
\noindent
\textbf{Aggregation functions}
Our approach is inspired by~\cite{DBLP:journals/toc/AroraHK12}, which proposes two multiplicative functions for aggregating the learned weights. The updating scheme described above is performed after each ranking. 
At iteration $t> 0$, for each feature $f \in \mathbf{F} $, its weight $\mathbf{w}_f^{(t)}$ is updated using a multiplicative aggregation function $\Phi$: 
$$\mathbf{w}_f^{(t)} = \mathbf{w}_f^{(t)} \cdot \Phi(m, \eta)$$ 
where $m$ is a reward (or a loss) applied to $f$ and $\eta \in ]0, \frac{1}{2}]$ is parameter. We use two multiplicative factors~\cite{DBLP:journals/toc/AroraHK12}: 
 (1)~a {\it linear factor} $\Phi_1(m, \eta) = 1 + \eta m$; and 
 (2)~an {\it exponential factor} $\Phi_2(m, \eta) = \exp^{\eta m}$, where $m$ stands for $\mathbf{w}_{f_{d}}^{t}$, the learned feature weight in $\mathbf{F}^{disc}$. 
 Thus, we have 
 \begin{equation}
 \label{eqn:aggr:fct}
 \mathbf{w}_f^{(t)} = 
 \left\{
 \begin{array}{l}
 \mathbf{w}_f^{(t)} \cdot (1 + \eta \mathbf{w}_{f_{d}}^{(t)})\\
 \vee \\
 \mathbf{w}_f^{(t)} \cdot \exp^{\eta \mathbf{w}_{f_{d}}^{(t)}} 
 \end{array}
 \right.
 \end{equation}
In our experiments (see Section \ref{sec:exps}), we compare both aggregation functions to update features' weights. 

\begin{example} 
\label{exple:aggregation}
Let us consider example~\ref{exple:icvUsage}. 
At iteration $1$, we get $p_{disc} = \{6\}$, the learned weight vector $\mathbf{w}_\mathbf{F}^{(1)} = (-0.33, 0.99, 0, -0.99, 0.33, 0, 1.33,0.15)$ 
for items and frequency features of patterns (see example~\ref{weigth:letsip}),  
and $\mathbf{w}_{\mathbf{F}^{disc}}^{(1)} = (\mathbf{w}_{disc^{Patt}}^{(1)}, \mathbf{w}_{disc^{Sup}}^{(1)})$ $= (1.33, 0.84)$.
For our example, we consider the linear factor and we set $\eta$ to $0.2$. 
For the items feature, since, $p_{disc} = \{6\} \subseteq p_1$, we have $\mathbf{w}_6^{(1)} = \mathbf{w}_{6}^{(1)} \cdot (1 + \eta \mathbf{w}_{disc^{Patt}}^{(1)})$. 
As $\mathbf{w}_6^{(1)} = 1.33$ and $\mathbf{w}_{disc^{Patt}}^1 = 1.33$, after the updating step, 
we get a new value for $\mathbf{w}_6^{(1)} = 1.68$. 
For the frequency feature, we have $\mathbf{w}_{f_{sup}}^{(1)} = \mathbf{w}_{f_{sup}}^{(1)} \cdot (1 + \eta \mathbf{w}_{disc^{F}}^{(1)})$. 
As $\mathbf{w}_{f_{sup}}^{(1)} = 0.15$ and $\mathbf{w}_{disc^{Sup}}^{(1)} = 0.84$, we get $\mathbf{w}_{f_{sup}}^{(1)} = 0.175$. 
The resulting weights are fed into the $\phi_{logistic}$ function to extract new patterns for the next iteration.
\end{example}

\section{Experiments}
\label{sec:exps}

In this section, we experimentally evaluate our approach for interactive learning user models from feedback. The evaluation focuses on the effectiveness of learning using discriminating sub-patterns and \cdflexics. 
We denote by 
\begin{itemize}
	\item \newletsip~our extension of \letsip~using discriminating sub-patterns; 
	\item \letsipcdf~our extension of \letsip~using \cdflexics;   
	\item \newletsipcdf~our extension of \letsip~using both discriminating sub-patterns and \cdflexics. 
\end{itemize}

We address the following research questions: 
\begin{enumerate}
	\item [$Q_1:$] What effect do LetSIP’s parameters have on the quality of patterns learned by \newletsip? 
	
	\item [$Q_2:$] What effect do LetSIP’s parameters have on the quality of sampled patterns using \cdflexics? 
	
	\item [$Q_3:$] How do our different extensions compare to \letsip? 
\end{enumerate}

\subsection{Evaluation methodology}
Evaluating interactive data mining algorithms is hard, for domain experts are scarce and it is impossible to collect enough data for drawing reliable conclusions. To perform an extensive evaluation, we emulate user feedback using  a (hidden) quality measure $\varphi$, which is not known to the learning algorithm. 
We follow the same protocol used in \cite{Dzyuba:letsip}: for each dataset, a set $\mathcal{S}$ of frequent patterns is mined without prior user knowledge.
We assume that there exists a user ranking $R^*$ on the set $\mathcal{S}$, derived from $\varphi$, i.e. $p \succ q \Leftrightarrow \varphi(p) > \varphi(q)$. Thus, the task is to learn to sample frequent patterns proportional to $\varphi$. As $\varphi$, we use frequency $\freq$, and surprisingness $\surp$, where 
$\surp(p) = \max\{\freq(p) - \prod_{i=1}^{\card{p}} \freq(\{i\}),0\}$. 


\begin{table} \centering
 	\begin{tabular}{|l||r|r|r|r|}
		\hline
 		Dataset         & $\card{\SDB}$ & $\card{\items}$ & $\theta$ &  $\#\mathcal{S}$\\ \hline \hline
 		Anneal          & $812$             & $89$       & $660$    & $149\, 331$       \\ \hline
 		Chess           & $3\, 196$         & $75$       & $2\,014$      & $155118$        \\ \hline
 		German          & $1\, 000$         & $110$      & $350$    & $161\, 858$       \\ \hline
 		Heart-cleveland & $296$             & $95$       & $115$    & $153\, 214$       \\ \hline
 		Heart			& $270$				& $38$		 &	$5$		& $137\,832$				\\ \hline
 		Hepatitis       & $137$             & $68$       & $35$     & $148\, 289$       \\ \hline	
 		Hypo			& $3\, 163$			& $47$		 &	$428$		& $158\,910$					\\ \hline
 		Kr-vs-kp        & $3\, 196$         & $73$       & $2\,014$    & $155118$       \\ \hline
 		Lymph           & $148$             & $68$       & $48$     & $146\, 969$       \\ \hline
 		Mushroom        & $8\, 124$         & $112$      & $813$     & $155657$       \\ \hline
 		Soybean         & $630$             & $50$       & $28$     & $143\, 519$       \\ \hline
 		Vehicle         & $846$				& $58$		 & $46$			&	$154\,847$				\\ \hline
 		Vote            & $435$             & $48$       & $25$     & $142\, 095$       \\ \hline 		
 		Wine			& $178$				& $45$		 & $3$			&	$175\,042$				\\ \hline 
 		Zoo-1           & $101$             & $36$       & $10$     & $151\, 806$       \\ \hline
 	\end{tabular}
\caption{Dataset Characteristics. $\theta$ represents an absolute value.}\label{tab:datasets}
\end{table}

To compare the performance of our approaches (\newletsip, \letsipcdf~and \newletsipcdf) with \letsip, we use \textit{cumulative regret}, which is the difference between the ideal value of a certain measure $M$ and its observed value, summed over iterations for each dataset. At each iteration, we evaluate the regret of ranking pattern $p_i$ by 
$\varphi$. For that, we compute the percentile rank $pct.rank(p_i)$ by $\varphi$ of each pattern $p_{i} \,(1 \leq i \leq k)$ of the query as follows: 
$$
pct.rank(p_i) = \frac{\card{q\in \mathcal{S}, \, \varphi(q) \leq \varphi(p_i)}}{\card{\mathcal{S}}}
$$

Thus, the ideal value is $1$ (e.g., the highest possible value of $\varphi$ over all frequent patterns has the percentile rank of $1$). The regret is then defined as 
$$1 - M_{(1 \leq i \leq k)}(pct.rank(p_i))$$ 
where $M \in \{\max, \avg\}$. We repeat each experiment $10$ times with different random seeds; the average obtained cumulative regret is reported. 
To ensure that all methods are sampled on the same pattern bases, at each iteration of \letsip, the same sets of XOR constraints are generated in both approaches. 

For the evaluation we used UCI data-sets, available at the CP4IM repository\footnote{\url{https://dtai.cs.kuleuven.be/CP4IM/datasets/}}. For each dataset, we set the minimal support threshold such that 
the size of $\mathcal{S}$ is approximately 1,400,000 frequent patterns. Table~\ref{tab:datasets} shows the data set statistics. Each experiment involves $100$ iterations (queries). We use the default values suggested by \cite{Dzyuba:letsip} for the parameters of \letsip: $\lambda = 0.001$, $\kappa = 0.9$, $A = 0.1$ and {\sc Top}($1$).

The implementation used in this article is available online\footnote{The link will be available in the final submission.}. All experiments were conducted as single-threaded runs on AMD Opteron 6174 processors (2.2GHz) with a RAM of 256 GB and a time limit of 24 hours.

\begin{figure}[t]\centering
	\begin{minipage}{.4\textwidth}\centering
		\subfloat[\newletsiplin: varying $k$, $\max.\varphi$\label{fig:top0:max:lin}]{
			\includegraphics[scale=0.3]{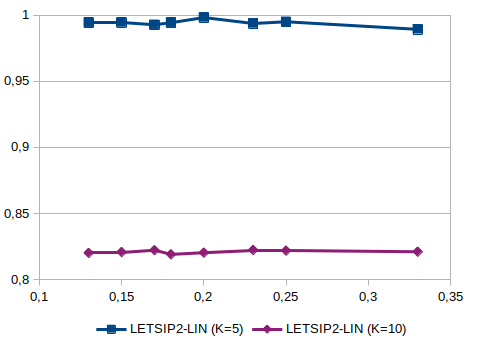}
		}
	\end{minipage} 
	\begin{minipage}{.45\textwidth}
		\centering
		\subfloat[{\newletsipexp: varying $k$, $\max.\varphi$}\label{fig:top0:max:exp}]{
			\includegraphics[scale=0.3]{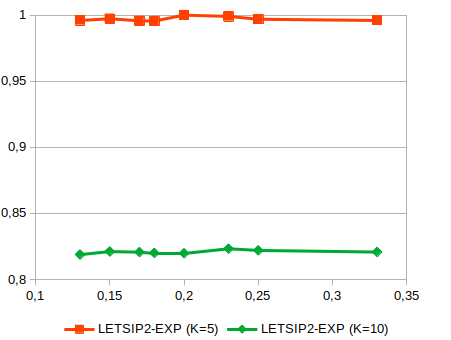}
		} 
	\end{minipage}
	\begin{minipage}{.4\textwidth}\centering
		\subfloat[\newletsiplin: varying $k$, $\avg.\varphi$.\label{fig:top0:avg:lin}]{
			\includegraphics[scale=0.3]{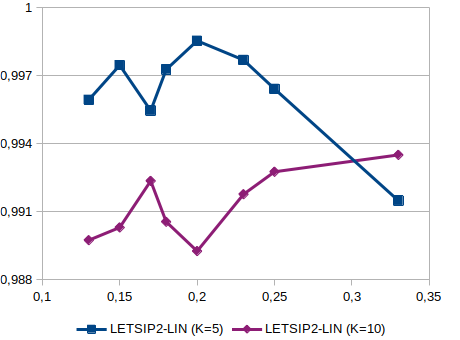}
		}
	\end{minipage} 
	\begin{minipage}{.45\textwidth}
		\centering
		\subfloat[{\newletsipexp: varying $k$, $\avg.\varphi$}\label{fig:top0:avg:exp}]{
			\includegraphics[scale=0.3]{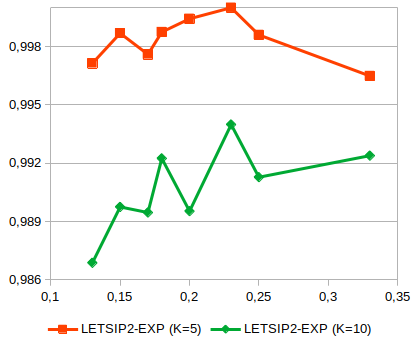}
		} 
	\end{minipage}
\begin{minipage}{.4\textwidth}\centering
	\subfloat[\newlin \ vs. \newexp \ w.r.t. $\max.\varphi$.\label{fig:top0:max:exp:lin}]{
		\includegraphics[scale=0.3]{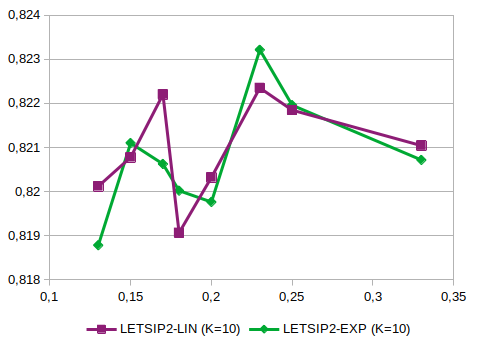}
	}
\end{minipage} 
\begin{minipage}{.45\textwidth}
	\centering
	\subfloat[{\newlin \ vs. \newexp \ w.r.t. $\avg.\varphi$.}\label{fig:top0:avg:exp:lin}]{
		\includegraphics[scale=0.3]{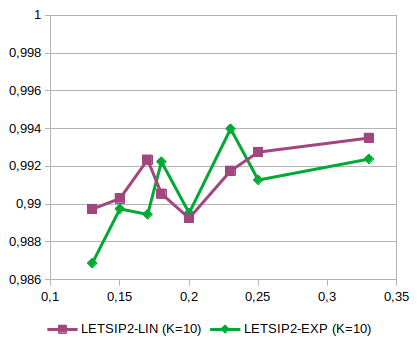}
	} 
\end{minipage}
	\caption{Effect of \newletsip's parameters on regret w.r.t. two performance measures ($\max.\varphi$ and $\avg.\varphi$) and query retention $\ell$ = 0. Results are aggregated over data sets and the three feature combinations I, IT and ILFT. Regret values are normalized to the range [0,1]. \label{fig:letsip2:l:0}}
\end{figure}

\begin{figure}[t]\centering
\begin{minipage}{.4\textwidth}\centering
	\subfloat[\newletsiplin: varying $k$, $\max~\varphi$.\label{fig:top1:max:lin}]{
		\includegraphics[scale=0.3]{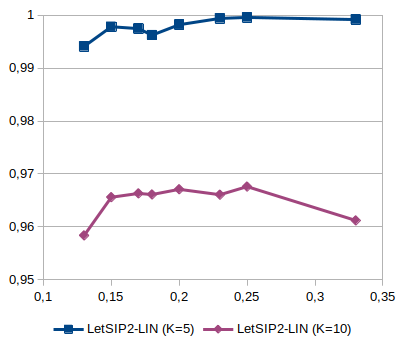}
	}
\end{minipage} 
\begin{minipage}{.45\textwidth}
	\centering
	\subfloat[{\newletsipexp: varying $k$, $\max~\varphi$}\label{fig:top1:max:exp}]{
		\includegraphics[scale=0.3]{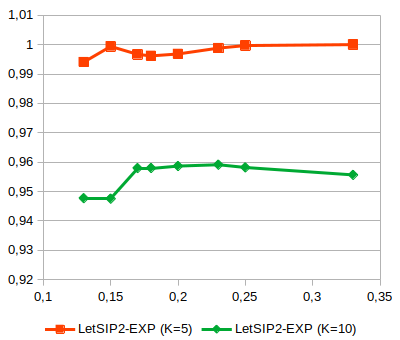}
	} 
\end{minipage}
\begin{minipage}{.4\textwidth}\centering
	\subfloat[\newletsiplin: varying $k$ $\avg~\varphi$.\label{fig:top1:avg:lin}]{
		\includegraphics[scale=0.3]{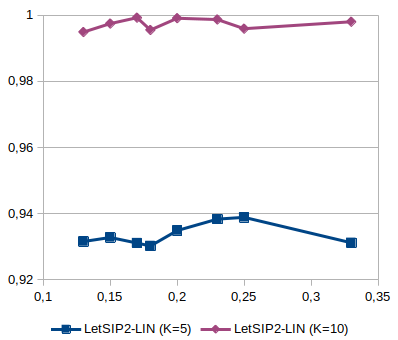}
	}
\end{minipage} 
\begin{minipage}{.45\textwidth}
	\centering
	\subfloat[{\newletsipexp: varying $k$ $\avg~\varphi$}\label{fig:top1:avg:lin:exp}]{
		\includegraphics[scale=0.3]{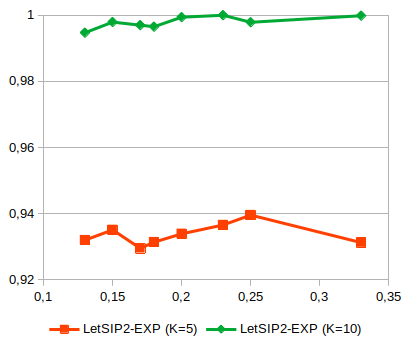}
	} 
\end{minipage}
\begin{minipage}{.4\textwidth}\centering
	\subfloat[\newlin vs. \newexp w.r.t. $\max.\varphi$.\label{fig:top1:max:exp:lin}]{
		\includegraphics[scale=0.3]{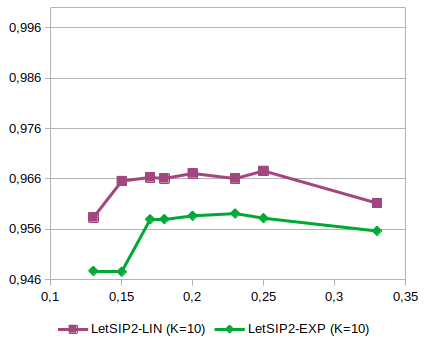}
	}
\end{minipage} 
\begin{minipage}{.45\textwidth}
	\centering
	\subfloat[{\newlin vs. \newexp w.r.t. $\avg.\varphi$.}\label{fig:top1:avg:exp:lin}]{
		\includegraphics[scale=0.3]{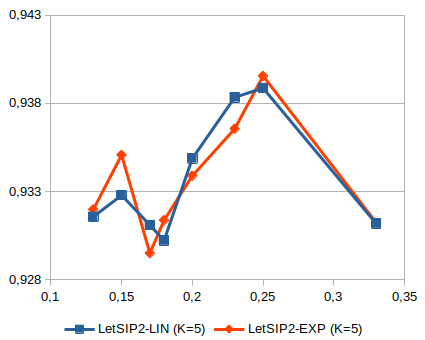}
	} 
\end{minipage}
\caption{Effect of \newletsip's parameters on regret w.r.t. two performance measures ($\max.\varphi$ and $\avg.\varphi$) and query retention $\ell$ = 1. Results are aggregated over data sets and the three feature combinations I, IT and ILFT. Regret values are normalized to the range [0,1]. \label{fig:letsip2:l:1}}
\end{figure}

\subsection{Experimental results}

\subsubsection{Evaluating Aspects of $\newletsip$}
\label{xp:eval:letsip2}

In the first part of the experimental evaluation, we take a look at the effect of different parameter settings on \newletsip, the influence of different pattern features, and how it compares to \letsip{} itself.

\noindent\textbf{Evaluating the influence of parameter settings}
We evaluate the effects of the choice of parameters values on the performance of \newletsip, in particular query size $k \in \{5,10\}$, the aggregation function $\Phi$ and the range $\eta$\footnote{$\eta \in \{0.13, 0.15, 0.17, 0.18, 0.2,$ $0.23, 0.25, 0.33\}$}. We use the following feature combination: $Items$ (I); $Items+Transactions$ (IT); and $Items+Length+Frequency+Transactions$ (ILFT). We consider two settings for parameter $\ell$: $\ell = 0$ and $\ell = 1$. 

Figure~\ref{fig:letsip2:l:0} shows the effect of \newletsip's parameters on regret w.r.t. two  performance measures ($\max.\varphi$ and $\avg.\varphi$) and query retention $\ell$ = 0. 
Figures~\ref{fig:top0:max:lin} and \ref{fig:top0:max:exp} show that both aggregation functions ensure the lowest quality regrets with $k = 10$ w.r.t. the $maximal$ aggregator. This indicates that our approach is able to identify the properties of the target ranking from ordered lists of patterns even when the query size increases. Additionally, the exponential function yields lower regret and $\eta = 0.13$ seems the best value.
Regarding the $average$ quality regret (see Figures~\ref{fig:top0:avg:lin} and \ref{fig:top0:avg:exp}), $k=10$ continues to be the better query size for the exponential aggregation function but for the linear function, this depends on the value of $\eta$. In addition, as Figures~\ref{fig:top0:max:exp:lin} and \ref{fig:top0:avg:exp:lin} show, while the exponential function lead to lower maximal regret with $\eta = 0.13$, the linear function can lead to lower average regret for higher values of $\eta$. 

Figure~\ref{fig:letsip2:l:1} shows the effect of \newletsip's parameters on regret for query retention $\ell =1$. Retaining one highest-ranked pattern from the previous query w.r.t. $maximal$ quality does not affect the conclusions drawn previously: $k = 10$ being the better query size and exponential function results in the lowest regret with $\eta = 0.13$. However, we can see the opposite behaviour w.r.t. $average$ quality (see Figures~\ref{fig:top1:avg:lin} and \ref{fig:top1:avg:lin:exp}): querying $5$ patterns allows attaining low regret values for both functions, linear and exponential. Interestingly, as Figure~\ref{fig:top1:max:exp:lin} shows, the exponential function outperforms linear function on almost all values of $\eta$. Based on these findings, we use the following parameters in the next experiments:  $k= 10$, $aggregation$ = exponential and $\eta = 0.13$. 


\begin{table}[t]
	\begin{center}
		\begin{minipage}{\textwidth}
			\caption{Evaluation of the importance of pattern features and comparison of \newletsipexp~($\eta = 0.13$) vs. \letsip~w.r.t. two performance measures. $k = 10$. 
				Results are aggregated over representative datasets: Anneal, Chess, German, Heart-cleveland, Hepatitis, Kr-vr-vp, Lymph, Mushroom, Soybean, Vote and Zoo-1.}\label{tab:features:comp:letsip2}
			\scalebox{.70}{
				\begin{tabular*}{\textwidth}{@{\extracolsep{\fill}}lccccccccc@{\extracolsep{\fill}}}
					\cmidrule{1-10}%
					& \multicolumn{4}{@{}c@{}}{$\ell=0$ (fully random queries)}  && \multicolumn{4}{@{}c@{}}{$\ell=1$ (retain $top$ $\ell$ patterns from the previous query)} \\\cmidrule{2-5}\cmidrule{7-10}%
					& \multicolumn{2}{@{}c@{}}{Regret: $\max.\varphi$}& \multicolumn{2}{@{}c@{}}{Regret: $\avg.\varphi$}  && \multicolumn{2}{@{}c@{}}{Regret: $\max.\varphi$}& \multicolumn{2}{@{}c@{}}{Regret: $\avg.\varphi$}\\\cmidrule{2-5}\cmidrule{7-10}%
					{Features} & $\letsip$& $\newletsipexp$ & $\letsip$ & $\newletsipexp$ && $\letsip$ & $\newletsipexp$ & $\letsip$ & $\newletsipexp$ \\
					\cmidrule{2-5}\cmidrule{7-10}
					I  & {\bf 269.419}& {271.434}    & {617.328} & {\bf 615.994} && {\bf 196.655} & {200.975} & {\bf 574.396} & {575.46}\\
					IT & {265.718 }   & {\bf 251.613}& {610.294} & {\bf 566.790} && {194.396} & {1\bf 90.524} & {563.549} & {\bf 534.723}\\
					ITLF & {263.067}    & {\bf 249.577}& {605.779} & {\bf 561.667} && {196.161} & {\bf 193.554} & {568.200} & {\bf 534.129} \\
					\cmidrule{1-10}%
				\end{tabular*}
			}
		\end{minipage}
	\end{center}
\end{table}

\begin{figure}[t]
	\centering
	
	\begin{tabular}{c}
		\subfloat[][Comparing \newletsipexp{} with \letsip: cumulative regret. ]{\includegraphics[scale=0.2]{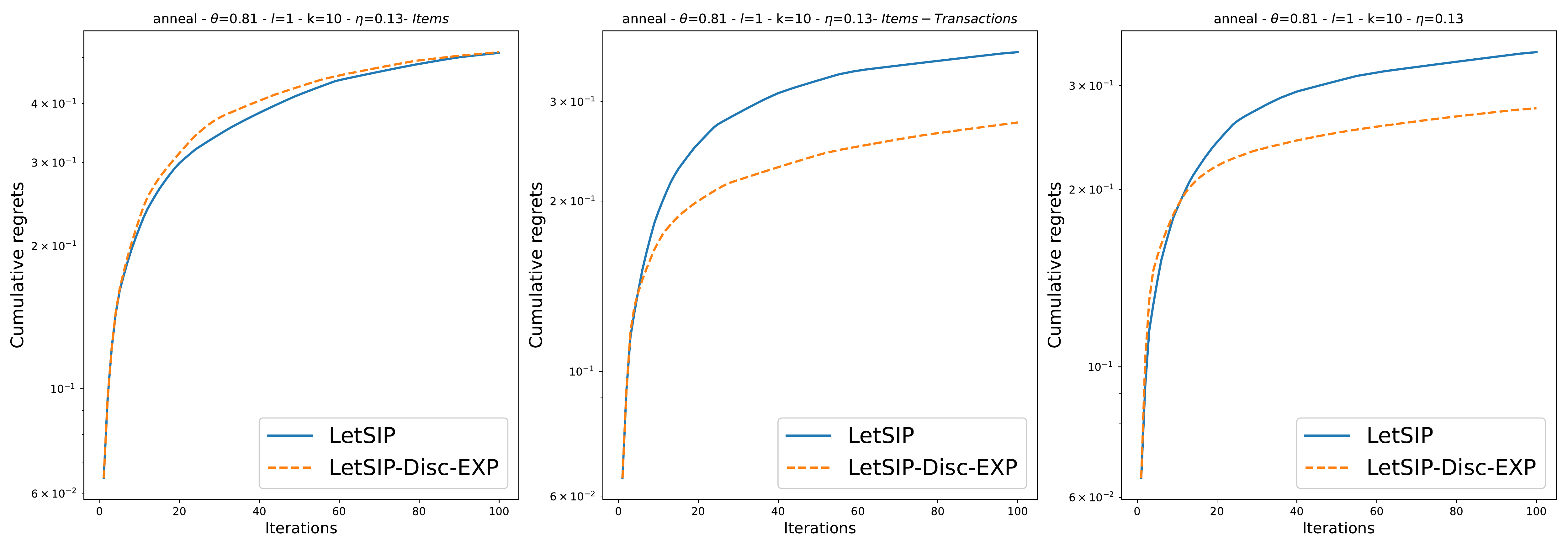}\label{fig:anneal:comp:cum:max}} \\
		\subfloat[][Comparing \newletsipexp{} with \letsip: non cumulative regret. ]{\includegraphics[scale=0.2]{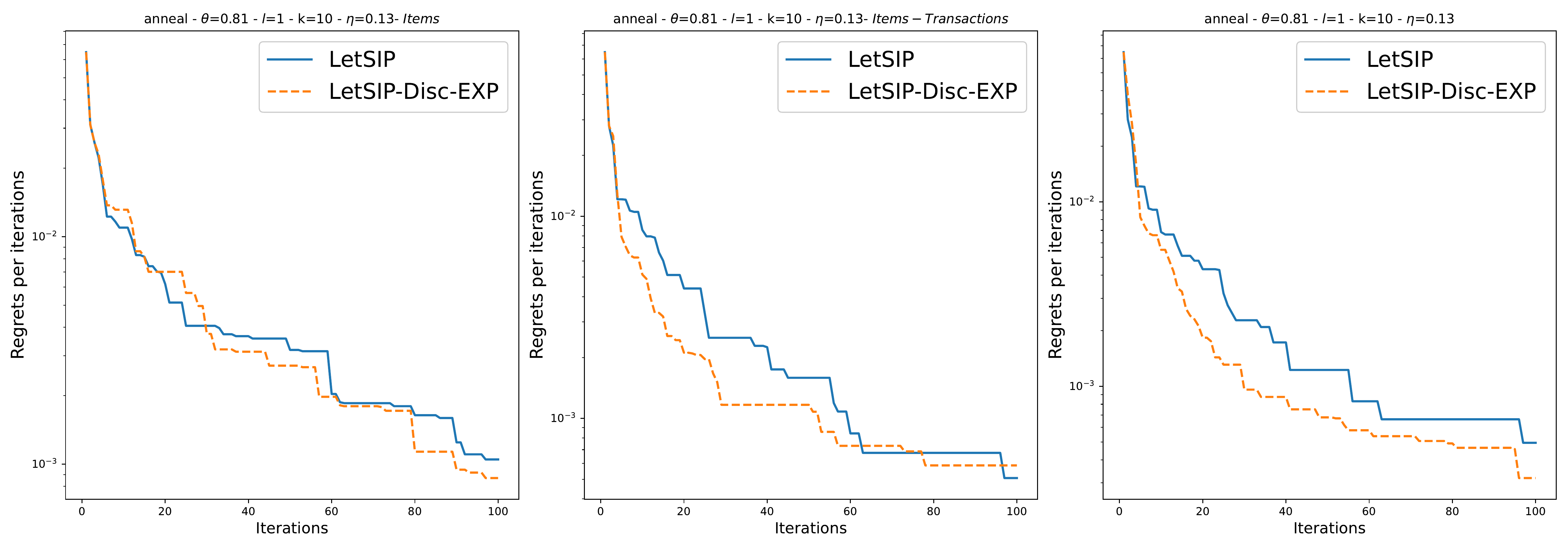}\label{fig:anneal:comp:non:cum:max}}
		
	\end{tabular}
	\caption{Anneal dataset: A detailed view of comparison between \newletsipexp{} and \letsip{} (cumulative and non-cumulative regret) for different pattern features w.r.t. $maximal$ quality, $k = 10$ and $\ell = 1$. Other results are reported in the Appendix~\ref{secA1}.}
	\label{fig:anneal:comp:max:top:0}
\end{figure}

\begin{figure}[t]
	\centering
	
	\begin{tabular}{cc}
		\subfloat[][German-credit]{\includegraphics[height=3.0cm,scale=0.75]{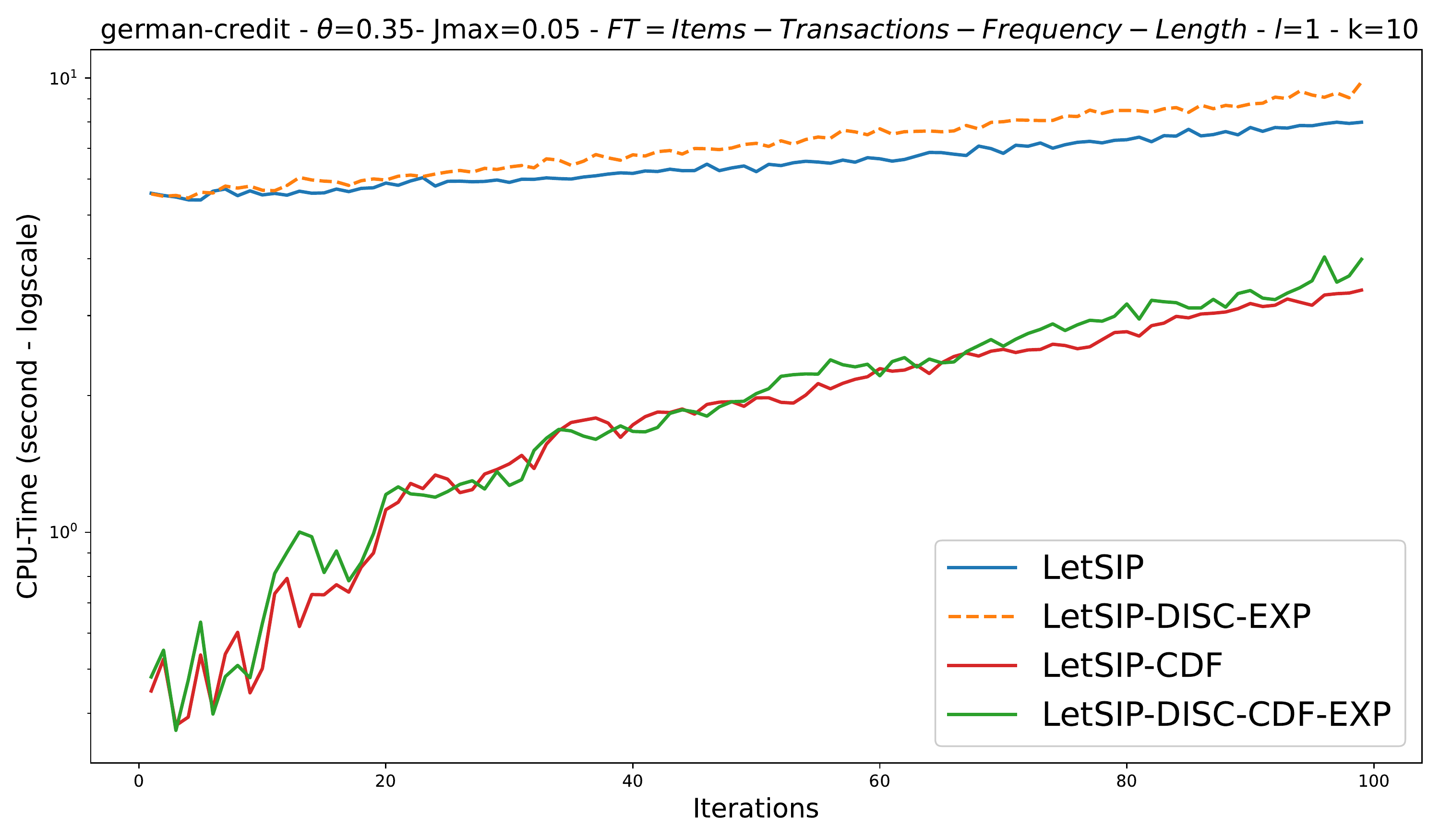}\label{fig:german:max:ILFT:time}}
		&
		\subfloat[][Chess]{\includegraphics[height=3.0cm,scale=0.75]{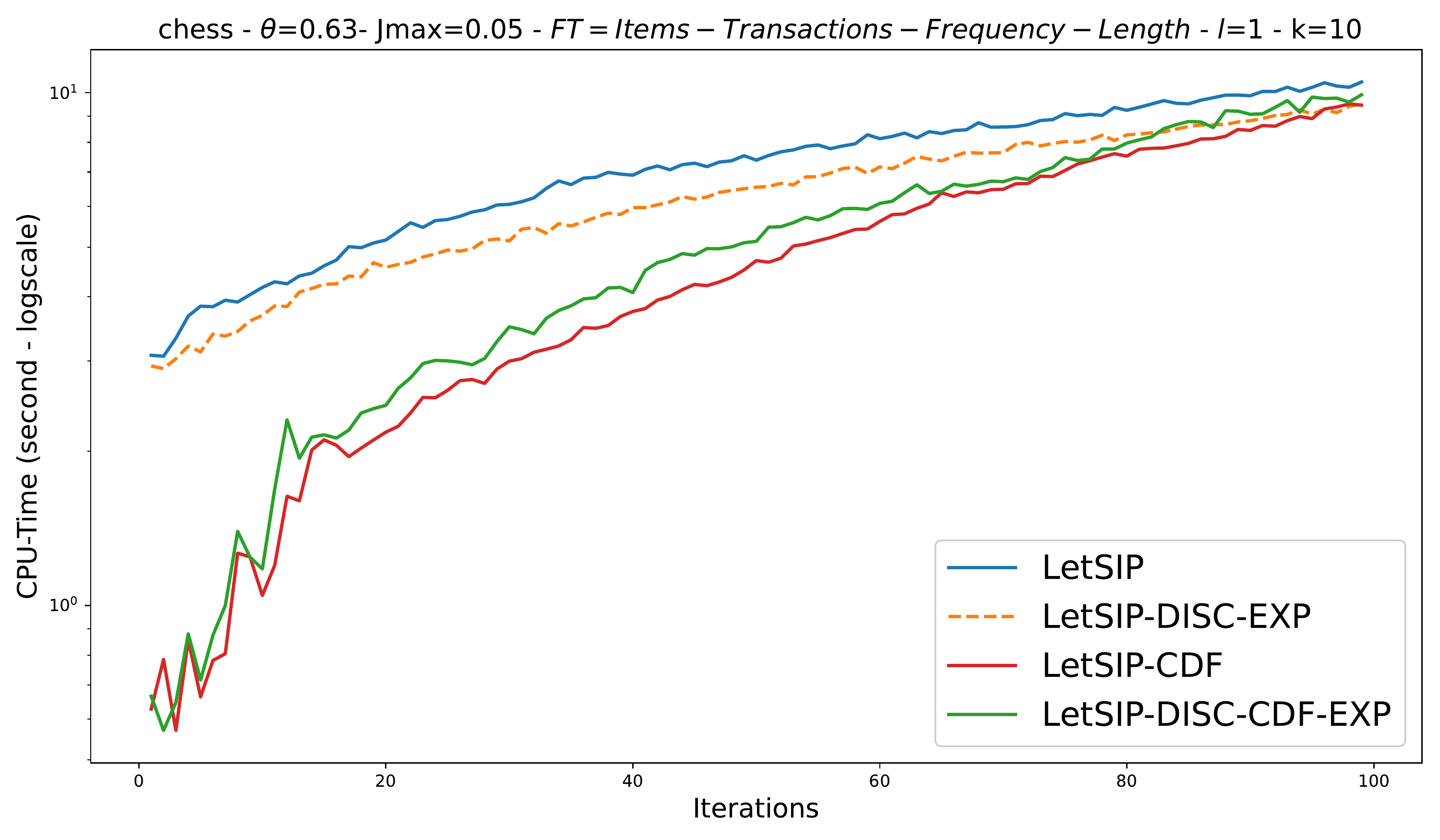}\label{fig:chess:max:ILFT:time}}
		\\ 
		\subfloat[][Heart-cleveland]{\includegraphics[height=3.0cm,scale=0.75]{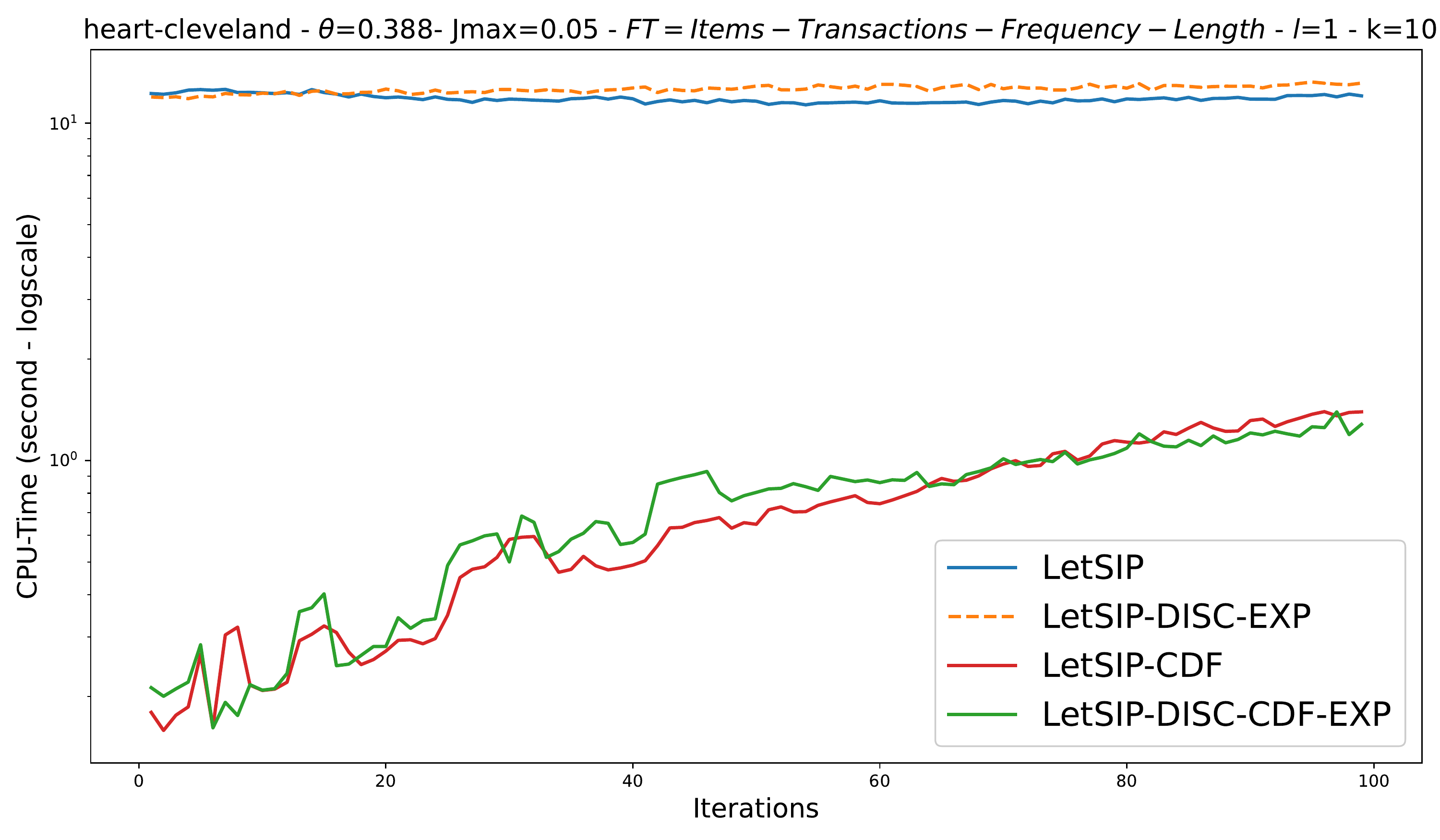}\label{fig:heart-cl:max:ILFT:time}}
		&
		\subfloat[][Zoo-1]{\includegraphics[height=3.0cm,scale=0.75]{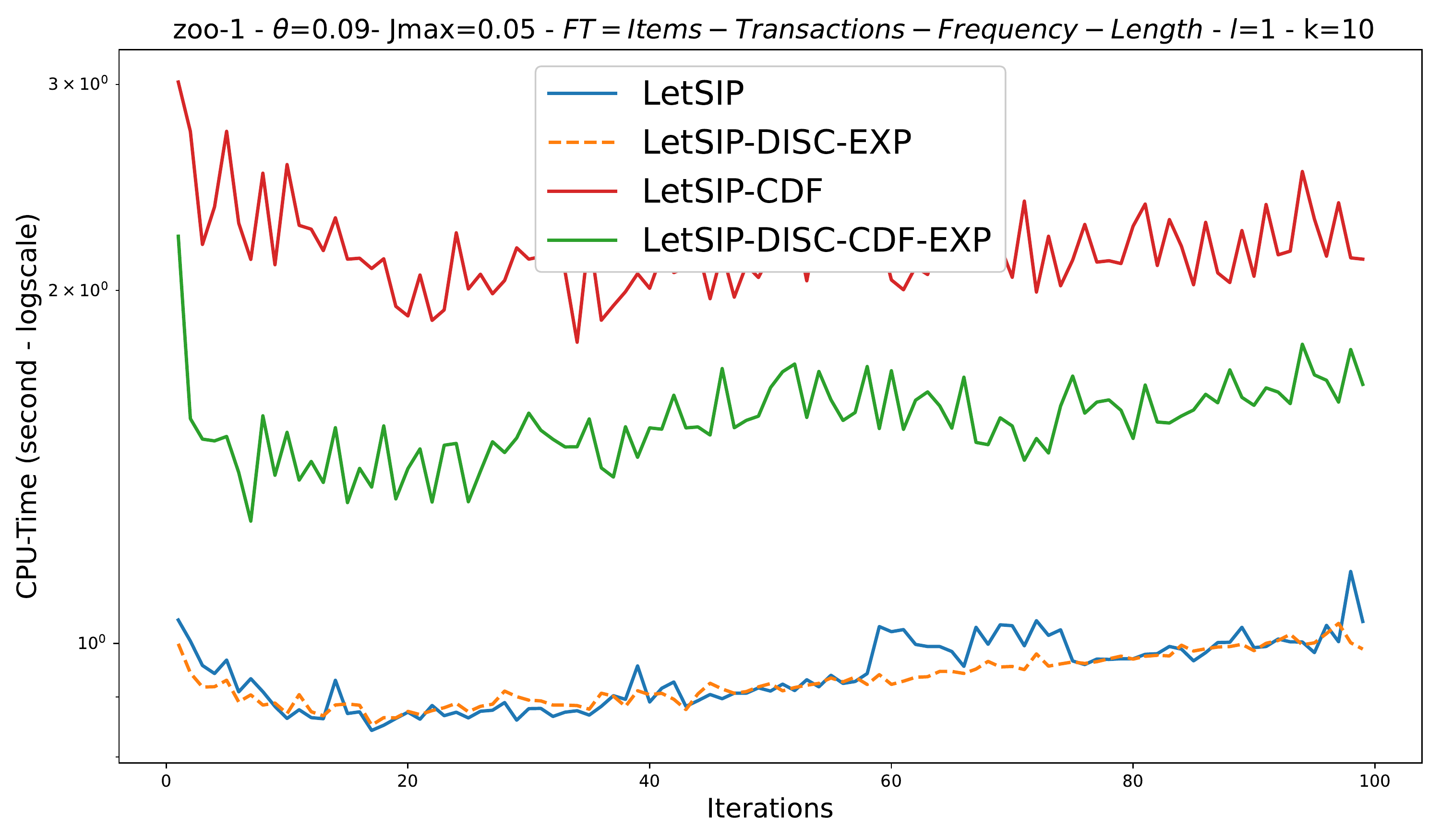}\label{fig:zoo-1:max:ILFT:time}}
	\end{tabular}
	\caption{CPU-time analysis (\textit{sec.}) of \letsip{}, \letsipcdf, \newletsip{} and \newletsipcdfexp{} w.r.t. the ILFT feature combination. $k = 10$ and $\ell = 1$. Other results are reported in the Appendix~\ref{secC1}.}
	\label{fig:comp:time}
\end{figure}

\begin{table}[h]
	\begin{center}
		\begin{minipage}{\textwidth}
			\caption{Evaluation of the importance of pattern features and comparison of \letsipcdf~vs. \letsip~w.r.t. different values of $\jmax$. Results are aggregated over $11$ datasets : Lymph, Hepatitis, Heart-cleveland, Zoo-1, German-credit, Vote, Heart, Hypo, Soybean, Vehicle and Wine.}\label{tab:features:comp:lessipcdf:a}
			\scalebox{.70}{
				\begin{tabular*}{\textwidth}{@{\extracolsep{\fill}}lccccccccc@{\extracolsep{\fill}}}
					\cmidrule{1-10}%
					& \multicolumn{8}{@{}c@{}}{$\jmax=0.05 \quad \wedge \quad \ell=1$} \\\cmidrule{2-10}
					& \multicolumn{4}{@{}c@{}}{$k=5$}  && \multicolumn{4}{@{}c@{}}{$k=10$} \\\cmidrule{2-5}\cmidrule{7-10}%
					& \multicolumn{2}{@{}c@{}}{Regret: $\max.\varphi$}& \multicolumn{2}{@{}c@{}}{Regret: $\avg.\varphi$}  && \multicolumn{2}{@{}c@{}}{Regret: $\max.\varphi$}& \multicolumn{2}{@{}c@{}}{Regret: $\avg.\varphi$}\\\cmidrule{2-5}\cmidrule{7-10}%
					{Features} & $\letsip$& $\letsipcdf$ & $\letsip$ & $\letsipcdf$ && $\letsip$ & $\letsipcdf$ & $\letsip$ & $\letsipcdf$ \\
					\cmidrule{2-5}\cmidrule{7-10}
					I  & {204.595}& {\bf 13.860}    & {424.472} & {\bf 337.069} && {196.444} & {\bf 10.986} & {433.142} & {\bf 376.380}\\
					IT & {202.929}   & {\bf15.334}& {386.053} & {\bf 319.262} && {195.497} & {\bf 11.401} & {394.804} & {\bf 356.897}\\
					ITLF & {206.064}    & {\bf 15.517}& {383.799} & {\bf 320.838} && {197.422} & {\bf 11.521} & {393.147} & {\bf 356.395} \\
					\cmidrule{1-10}
					\cmidrule{1-10}%
					& \multicolumn{8}{@{}c@{}}{$\jmax=0.05 \quad \wedge \quad \ell=0$} \\\cmidrule{2-10}
					& \multicolumn{4}{@{}c@{}}{$k=5$}  && \multicolumn{4}{@{}c@{}}{$k=10$} \\\cmidrule{2-5}\cmidrule{7-10}%
					& \multicolumn{2}{@{}c@{}}{Regret: $\max.\varphi$}& \multicolumn{2}{@{}c@{}}{Regret: $\avg.\varphi$}  && \multicolumn{2}{@{}c@{}}{Regret: $\max.\varphi$}& \multicolumn{2}{@{}c@{}}{Regret: $\avg.\varphi$}\\\cmidrule{2-5}\cmidrule{7-10}%
					{Features} & $\letsip$& $\letsipcdf$ & $\letsip$ & $\letsipcdf$ && $\letsip$ & $\letsipcdf$ & $\letsip$ & $\letsipcdf$ \\
					\cmidrule{2-5}\cmidrule{7-10}
					I  & {278.654}& {\bf 106.825}    & {475.504} & {\bf 418.096} && {236.058} & {\bf 52.951} & {450.757} & {\bf 418.395}\\
					IT & {264.425}   & {\bf103.516}& {428.160} & {\bf 392.522} && {235.397} & {\bf 53.939} & {414.606} & {\bf 395.671}\\
					ITLF & {265.064}    & {\bf 102.076}& {427.891} & {\bf 390.341} && {233.159} & {\bf 54.099} & {411.115} & {\bf 394.695} \\
					\cmidrule{1-10}%
					\cmidrule{1-10}%
					& \multicolumn{8}{@{}c@{}}{$\jmax=0.1 \quad \wedge \quad \ell=1$} \\\cmidrule{2-10}
					& \multicolumn{4}{@{}c@{}}{$k=5$}  && \multicolumn{4}{@{}c@{}}{$k=10$} \\\cmidrule{2-5}\cmidrule{7-10}%
					& \multicolumn{2}{@{}c@{}}{Regret: $\max.\varphi$}& \multicolumn{2}{@{}c@{}}{Regret: $\avg.\varphi$}  && \multicolumn{2}{@{}c@{}}{Regret: $\max.\varphi$}& \multicolumn{2}{@{}c@{}}{Regret: $\avg.\varphi$}\\\cmidrule{2-5}\cmidrule{7-10}%
					{Features} & $\letsip$& $\letsipcdf$ & $\letsip$ & $\letsipcdf$ && $\letsip$ & $\letsipcdf$ & $\letsip$ & $\letsipcdf$ \\
					\cmidrule{2-5}\cmidrule{7-10}
					I  & {204.595}& {\bf 8.647}    & {424.472} & {\bf 348.129} && {196.444} & {\bf 4.360} & {433.142} & {\bf 386.818}\\
					IT & {202.929}   & {\bf 8.577}& {386.053} & {\bf 327.095} && {195.497} & {\bf 5.688} & {394.804} & {\bf 365.700}\\
					ITLF & {206.064}    & {\bf 8.514}& {383.799} & {\bf 327.133} && {197.422} & {\bf 5.595} & {393.147} & {\bf 366.117} \\
					\cmidrule{1-10}%
					\cmidrule{1-10}%
					& \multicolumn{8}{@{}c@{}}{$\jmax=0.1 \quad \wedge \quad \ell=0$} \\\cmidrule{2-10}
					& \multicolumn{4}{@{}c@{}}{$k=5$}  && \multicolumn{4}{@{}c@{}}{$k=10$} \\\cmidrule{2-5}\cmidrule{7-10}%
					& \multicolumn{2}{@{}c@{}}{Regret: $\max.\varphi$}& \multicolumn{2}{@{}c@{}}{Regret: $\avg.\varphi$}  && \multicolumn{2}{@{}c@{}}{Regret: $\max.\varphi$}& \multicolumn{2}{@{}c@{}}{Regret: $\avg.\varphi$}\\\cmidrule{2-5}\cmidrule{7-10}%
					{Features} & $\letsip$& $\letsipcdf$ & $\letsip$ & $\letsipcdf$ && $\letsip$ & $\letsipcdf$ & $\letsip$ & $\letsipcdf$ \\
					\cmidrule{2-5}\cmidrule{7-10}
					I  & {278.654}& {\bf 114.012}    & {475.504} & {\bf 431.636} && {236.058} & {\bf 54.659} & {450.757} & {\bf 430.048}\\
					IT & {264.425}   & {\bf96.029}& {428.160} & {\bf 406.527} && {235.397} & {\bf 43.409} & {414.606} & {\bf 408.651}\\
					ITLF & {265.064}    & {\bf 96.724}& {427.891} & {\bf 407.117} && {233.159} & {\bf 43.419} & {411.115} & {\bf 409.534} \\
					
				\end{tabular*}
			}
		\end{minipage}
	\end{center}
\end{table}
\begin{table}[h]
	\begin{center}
		\begin{minipage}{\textwidth}
			\caption{Evaluation of the importance of pattern features and comparison of \letsipcdf~vs. \letsip~w.r.t. different values of $\jmax$. Results are aggregated over $4$ datasets : Kr-vs-kp, Chess, Mushroom and Anneal.}\label{tab:features:comp:lessipcdf:b}
			\scalebox{.70}{
				\begin{tabular*}{\textwidth}{@{\extracolsep{\fill}}lccccccccc@{\extracolsep{\fill}}}
					\cmidrule{1-10}%
					& \multicolumn{8}{@{}c@{}}{$\jmax=0.05 \quad \wedge \quad \ell=1$} \\\cmidrule{2-10}
					& \multicolumn{4}{@{}c@{}}{$k=5$}  && \multicolumn{4}{@{}c@{}}{$k=10$} \\\cmidrule{2-5}\cmidrule{7-10}%
					& \multicolumn{2}{@{}c@{}}{Regret: $\max.\varphi$}& \multicolumn{2}{@{}c@{}}{Regret: $\avg.\varphi$}  && \multicolumn{2}{@{}c@{}}{Regret: $\max.\varphi$}& \multicolumn{2}{@{}c@{}}{Regret: $\avg.\varphi$}\\\cmidrule{2-5}\cmidrule{7-10}%
					{Features} & $\letsip$& $\letsipcdf$ & $\letsip$ & $\letsipcdf$ && $\letsip$ & $\letsipcdf$ & $\letsip$ & $\letsipcdf$ \\
					\cmidrule{2-5}\cmidrule{7-10}
					I  & {\bf 4.846}& {277.351}    & {\bf 153.230} & {338.841} && {\bf 2.052} & {276.587} & {\bf 173.482} & {346.336}\\
					IT & {\bf 3.721}   & {277.083}& {\bf 136.091} & {337.744} && {\bf 1.805} & {276.389} & {\bf 159.646} & {344.876}\\
					ITLF & {\bf 3.813}    & {277.083}& {\bf 139.184} & {337.744} && {\bf 1.765} & {276.395} & {\bf 165.982} & {344.702} \\
					\cmidrule{1-10}
					\cmidrule{1-10}%
					& \multicolumn{8}{@{}c@{}}{$\jmax=0.05 \quad \wedge \quad \ell=0$} \\\cmidrule{2-10}
					& \multicolumn{4}{@{}c@{}}{$k=5$}  && \multicolumn{4}{@{}c@{}}{$k=10$} \\\cmidrule{2-5}\cmidrule{7-10}%
					& \multicolumn{2}{@{}c@{}}{Regret: $\max.\varphi$}& \multicolumn{2}{@{}c@{}}{Regret: $\avg.\varphi$}  && \multicolumn{2}{@{}c@{}}{Regret: $\max.\varphi$}& \multicolumn{2}{@{}c@{}}{Regret: $\avg.\varphi$}\\\cmidrule{2-5}\cmidrule{7-10}%
					{Features} & $\letsip$& $\letsipcdf$ & $\letsip$ & $\letsipcdf$ && $\letsip$ & $\letsipcdf$ & $\letsip$ & $\letsipcdf$ \\
					\cmidrule{2-5}\cmidrule{7-10}
					I  & {\bf 63.471}& {309.137}    & {\bf 193.516} & {353.573} && {\bf 34.480} & {291.099} & {\bf 192.332} & {353.918}\\
					IT & {\bf 52.438}   & {307.783}& {\bf 179.867} & {352.255} && {\bf 28.021} & {291.124} & {\bf 184.793} & {352.418}\\
					ITLF & {\bf 52.554}    & {307.707}& {\bf 179.214} & {352.238} && {\bf 27.751} & {290.265} & {\bf 183.861} & {351.850} \\
					\cmidrule{1-10}%
					\cmidrule{1-10}%
					& \multicolumn{8}{@{}c@{}}{$\jmax=0.1 \quad \wedge \quad \ell=1$} \\\cmidrule{2-10}
					& \multicolumn{4}{@{}c@{}}{$k=5$}  && \multicolumn{4}{@{}c@{}}{$k=10$} \\\cmidrule{2-5}\cmidrule{7-10}%
					& \multicolumn{2}{@{}c@{}}{Regret: $\max.\varphi$}& \multicolumn{2}{@{}c@{}}{Regret: $\avg.\varphi$}  && \multicolumn{2}{@{}c@{}}{Regret: $\max.\varphi$}& \multicolumn{2}{@{}c@{}}{Regret: $\avg.\varphi$}\\\cmidrule{2-5}\cmidrule{7-10}%
					{Features} & $\letsip$& $\letsipcdf$ & $\letsip$ & $\letsipcdf$ && $\letsip$ & $\letsipcdf$ & $\letsip$ & $\letsipcdf$ \\
					\cmidrule{2-5}\cmidrule{7-10}
					I  & {\bf 4.846}& {277.948}    & {\bf 153.230} & {338.357} && {\bf 2.052} & {276.515} & {\bf 173.482} & {345.652}\\
					IT & {\bf 3.721}   & {277.414}& {\bf 136.091} & {336.254} && {\bf 1.805} & {276.432} & {\bf 159.646} & {344.235}\\
					ITLF & {\bf 3.813}    & {277.412}& {\bf 139.184} & {336.137} && {\bf 1.765} & {276.426} & {\bf 165.982} & {344.243} \\
					\cmidrule{1-10}%
					\cmidrule{1-10}%
					& \multicolumn{8}{@{}c@{}}{$\jmax=0.1 \quad \wedge \quad \ell=0$} \\\cmidrule{2-10}
					& \multicolumn{4}{@{}c@{}}{$k=5$}  && \multicolumn{4}{@{}c@{}}{$k=10$} \\\cmidrule{2-5}\cmidrule{7-10}%
					& \multicolumn{2}{@{}c@{}}{Regret: $\max.\varphi$}& \multicolumn{2}{@{}c@{}}{Regret: $\avg.\varphi$}  && \multicolumn{2}{@{}c@{}}{Regret: $\max.\varphi$}& \multicolumn{2}{@{}c@{}}{Regret: $\avg.\varphi$}\\\cmidrule{2-5}\cmidrule{7-10}%
					{Features} & $\letsip$& $\letsipcdf$ & $\letsip$ & $\letsipcdf$ && $\letsip$ & $\letsipcdf$ & $\letsip$ & $\letsipcdf$ \\
					\cmidrule{2-5}\cmidrule{7-10}
					I  & {\bf 63.471}& {309.973}    & {\bf 193.516} & {353.422} && {\bf 34.480} & {291.349} & {\bf 192.332} & {353.411}\\
					IT & {\bf 52.438}   & {305.787}& {\bf 179.867} & {352.103} && {\bf 28.021} & {290.251} & {\bf 184.793} & {352.371}\\
					ITLF & {\bf 52.554}    & {306.007}& {\bf 179.214} & {352.039} && {\bf 27.751} & {290.893} & {\bf 183.861} & {352.192} \\
					\cmidrule{1-10}%
					
				\end{tabular*}
			}
		\end{minipage}
	\end{center}
\end{table}

\smallskip
\noindent
\textbf{Evaluating the importance of pattern features}. In order to evaluate the importance of various feature sets we performed the following procedure. We construct feature representations of patterns incrementally, i.e. we start with an empty representation and add feature sets one by one, based on the improvement of learning accurate pattern rankings that they enable. Table~\ref{tab:features:comp:letsip2} shows the results for two settings of query retention $\ell$. As we can observe, additional features provide valuable 
information to learn more accurate pattern rankings, particularly for \newletsipexp{} where the regret values all decrease. 
These results are consistent with those observed in \cite{Dzyuba:letsip}. However, the importance of features depends on the pattern type and the target measure $\varphi$~\cite{DzyubaLNR14}. For surprising pattern mining, $Length$ is the most likely to be included in the best feature set, because long patterns tend to have higher values of Surprisingness. $Items$ are important as well, because individual item frequencies are directly included in the formula of Surprisingness. $Transactions$ are important because this feature set helps capture interactions between other features, albeit indirectly. 

\begin{figure}[t]
	\centering
	
	\begin{tabular}{c}
		\subfloat[][Comparing \newletsipexp{} with \letsip: cumulative regret. ]{\includegraphics[scale=0.2]{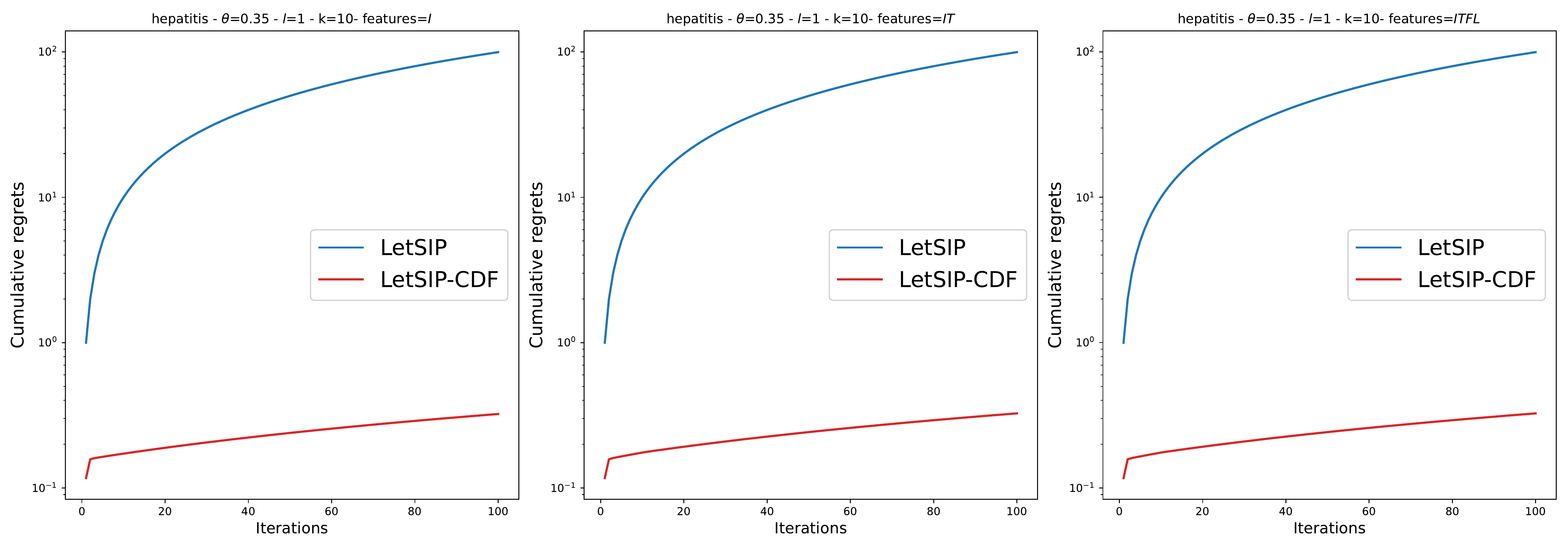}\label{fig:hepatitis:comp:cum:max:2}} \\
		\subfloat[][Comparing \newletsipexp{} with \letsip: non cumulative regret. ]{\includegraphics[scale=0.2]{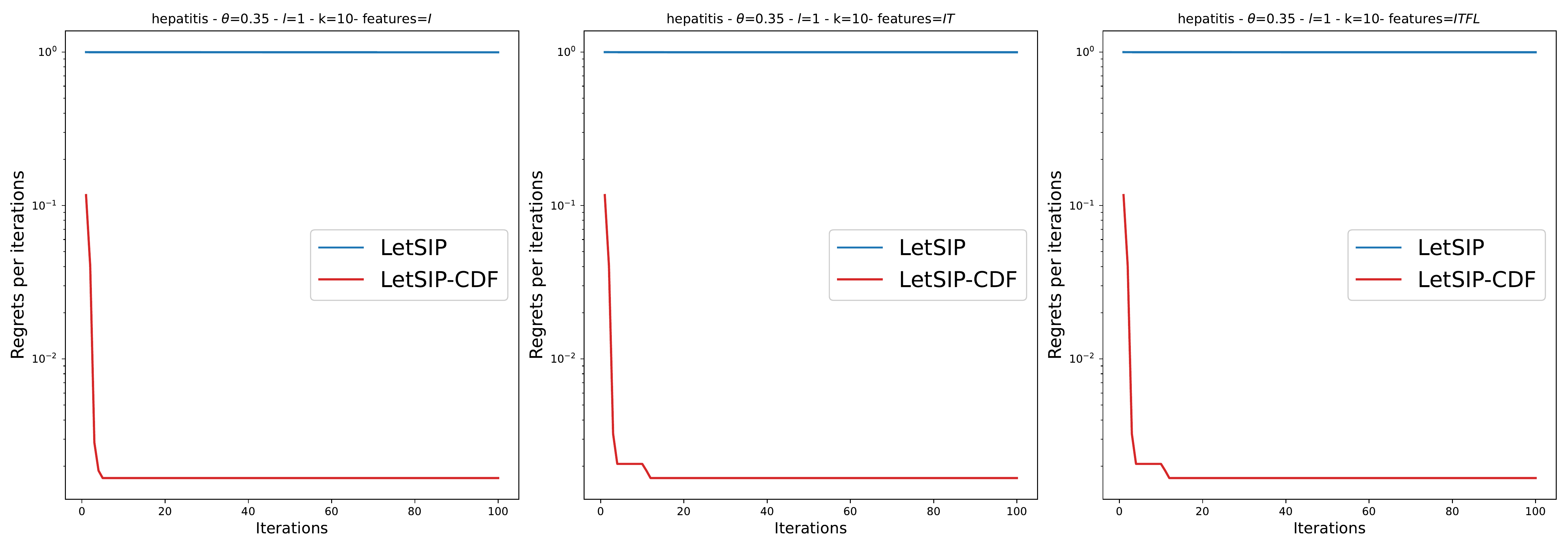}\label{fig:hepatitis:comp:non:cum:max:2}}
		
	\end{tabular}
	\caption{Hepatitis dataset: A detailed view of comparison between \letsipcdf{} and \letsip{} (cumulative and non-cumulative regret) for different pattern features w.r.t. $maximal$ quality, $k = 10$ and $\ell = 1$. Other results are reported in the Appendix~\ref{secB1}.}
	\label{fig:hepatitis:comp:max:2}
\end{figure}


\smallskip
\noindent
\textbf{Comparing quality regrets of \newletsip~with \letsip}. \cite{Dzyuba:letsip} showed that \letsip~outperforms two state-of-the-art interactive methods,~\aple~\cite{DzyubaLNR14}, another approach based on active preference learning to learn a linear ranking function using \ranksvm, and \ipm~\cite{DBLP:journals/sadm/BhuiyanH16}, an MCMC-based interactive sampling framework. Thus, we compare \newletsipexp{} with \letsip. We use query size $k$ and feature representations identical to \newletsipexp{} and we vary the query retention $\ell$. Table~\ref{tab:features:comp:letsip2} reports the regret values. When considering $Items$ as feature representation for patterns, \letsip{} performs the best w.r.t. $maximal$ quality. However, selecting queries uniformly at random allows \newletsipexp{} to ensure the lowest quality regrets w.r.t. $average$ quality. Regarding the other features  (IT and ITLF), \newletsipexp{} always outperforms \letsip{} whatever the value of the query retention $\ell$ and the performance measure considered. 
On the other hand, we observe that the improvements are more pronounced w.r.t. $average$ regret. These results indicate that the learned ranks by \newletsip{} of the $k$ selected patterns of each query in the target ranking are more accurate compared to those learned by \letsip; average values of surprisingness increase substantially compared to \letsip{} rankings. 

Finally, retaining one highest-ranked pattern ($\ell = 1$) results in the lowest regret whatever the feature representations of patterns. As explained in \cite{Dzyuba:letsip}, fully random queries ($\ell = 0$) do not 
enable sufficient exploitation for learning accurate weights.

Figure~\ref{fig:anneal:comp:max:top:0} presents a detailed view of the comparison between \newletsipcdfexp{} and \letsip{} on Anneal dataset (other results are given in the Appendix~\ref{secA1}). Curves show the evolution of the regret (cumulative and non cumulative) in the resulting frequent patterns over $100$ iterations of learning. The results confirm that the learned ranking function of \newletsipexp{} have the capacity to identify frequent patterns with 
higher quality (i.e., of lowest regrets).

\smallskip
\noindent
\textbf{CPU-times analysis}. Finally, regarding the CPU-times, Figure~\ref{fig:comp:time} compares the performance of \newletsipexp{} and \letsip{} on German-credit, Chess, Heart-Cleveland and Zoo-1 datasets. Overall, learning more complex descriptors from the user-defined pattern ranking does not add significantly to the runtimes of our approach. \newletsipexp{} clearly beats \letsip{} on four datasets (i.e., Chess, Heart-cleveland, Kr-vs-kp, and Soybean) and performs similarly on three other ones (see complementary results in the Appendix~\ref{secC1}), while \letsip{} performs better on three datasets (i.e., German-credit, Hepatitis, Zoo). 

On average, one iteration of \newletsipexp{} takes 8.84s (8.81s for \letsip) on AMD Opteron 6174 with 2.2 GHz. The largest proportion of LetSIP’s runtime costs is associated with the sampling component (costs of weight learning are low). As it will be shown in the next Section, exploiting \cdflexics{} within \letsip{} enables to improve the running times. 

\subsubsection{Evaluating Aspects of $\letsipcdf$}
\label{xp:eval:letsipcdf}
We investigate the effects of the choice of features and parameter values on the performance of \letsipcdf{} and we compare it to the original \letsip. We use the same protocol as in Section~\ref{xp:eval:letsip2}. We consider two settings for parameter $\jmax \in \{0.05, 0.1\}$. Tables~\ref{tab:features:comp:lessipcdf:a} and~\ref{tab:features:comp:lessipcdf:b} show detailed results. 

As observed previously, in almost of the cases, adding new features allow to decrease the value of the regret. Interestingly, increasing the query size decreases the maximal quality regret about twofold, while for the average quality regret $k = 5$ seems to be the best setting. 
Regarding the parameter $\jmax$, the performance depends on the target measure $\varphi$: $\jmax = 0.1$ results in the lowest regret with respect to the $maximal$ quality, while  $\jmax = 0.05$ ensures the lowest $average$ quality regrets. Finally, for the choice of values for query retention $\ell$, we observe the same conclusions drawn previously: $\ell = 1$ being the better query retention value. 

\smallskip
\noindent
\textbf{Comparing quality regrets of \letsipcdf~with \letsip}. \letsipcdf{} exhibits two different behaviours.  On $11$  datasets (see Table~\ref{tab:features:comp:lessipcdf:a}), the regret of \letsipcdf{} is substantially lower than that of \letsip, particularly for the $maximal$ quality and for $\ell = 1$ where the decrease is very impressive (an average gain of $94\%$). However, it is also results in the highest quality regrets on four datasets (see Table~\ref{tab:features:comp:lessipcdf:b}), which negates the previous results.


An analysis of the results on these four data sets shows us that $\closedx$ mines very few patterns for each query. This is due to the use of a rather low diversity threshold (\jmax=0.05) which considerably reduces the many redundancies in these data sets. The trade-off is, however, that it becomes extremely difficult to improve the regret given the few patterns.

Figure~\ref{fig:hepatitis:comp:max:2} presents a detailed view of the comparison between \letsipcdf{} and \letsip{} (cumulative and non-cumulative regret) on Hepatitis dataset (other results are given in the Appendix~\ref{secB1}). 
These results confirm once again the interest of ensuring higher diversity for learning accurate weights.

\smallskip
\noindent
\textbf{CPU-times analysis}. Figure~\ref{fig:comp:time} also compares the performance of \letsipcdf{} and \letsip{} on German-credit, Chess, Heart-Cleveland and Zoo-1 data sets. Other results are given in the Appendix~\ref{secC1}. Clearly, \letsipcdf{} is more efficient than \letsip{}. This is explainable by the large reduction of the size of each cell performed by \closedx{} to avoid the mining of non diverse patterns, which speeds exploration up. The only exception are the two data sets Zoo-1 and Vote where \letsip{} is faster. When comparing to \newletsipcdfexp{}, we can observe that \letsipcdf{} remains better, except again for the two data sets Zoo-1 and Vote.

\subsubsection{Evaluating Aspects of $\newletsipcdf$}
\label{xp:eval:newletsipcdf}
Our last experiments evaluates the combination of our two improvements to \letsip{}, i.e., the improved sampling of \cdflexics{} and the more complex features learned dynamically during the iterative mining process. We report detailed results for different features combination, and different parameter settings (query size $k$, query retention $\ell$, and 
$\eta$). We only show the results for $\jmax = 0.05$. 
As previously, \newletsipcdf{} clearly beats \letsipcdf{} on $11$ datasets (see Tables~\ref{tab:features:comp:newletsipcdf:a}) whatever the feature representations of patterns and the value of parameters $k$ and $\eta$. When comparing to \letsipcdf, the  choice of values for query retention $\ell$ clearly affects \newletsipcdf. Fully random queries ($\ell = 0$) seems to favour \newletsipcdf{}, yielding  small improvements. However, both aggregation functions \aggregexp{} and \aggregsum{} get similar results with a slight advantage for \aggregexp. 
The results of \newletsipcdf{} with $\ell = 1$ are comparable with those of \letsipcdf, but we can notice a very slight advantage to \newletsipcdf{} with ILFT as feature representation for patterns. As for \letsipcdf, \newletsipcdf{} 
performs badly on four datasets (see Table~\ref{tab:features:comp:newletsipcdf:a}). 

\smallskip
\noindent
\textbf{CPU-times analysis}. When analysing the running times (see  Figure~\ref{fig:comp:time}), we can see that \letsipcdf{} and \newletsipcdf{} perform very similarly on most of the instances, except on two datasets (Kr-vr-kp and Zoo-1) where \newletsipcdf{} is faster.

\begin{landscape}
\begin{table}[h]
	\begin{center}
		\begin{minipage}{\textwidth}
			\caption{Evaluation of the importance of pattern features and comparison of \newletsipcdf vs \letsipcdf~vs. \letsip~w.r.t. different values of $\jmax$. Results are aggregated over $11$ datasets : Lymph, Hepatitis, Heart-cleveland, Zoo, German-credit, Soybean, Vote, Heart, Hypo, Vehicle and Wine.}\label{tab:features:comp:newletsipcdf:a}
			\scalebox{.5}{
				\begin{tabular*}{\textwidth}{@{\extracolsep{\fill}}llccccccccccccccccccc@{\extracolsep{\fill}}}
					\cmidrule{1-21}%
					& & \multicolumn{19}{@{}c@{}}{$\jmax=0.05 \quad \wedge \quad \ell=1$} \\\cmidrule{3-21}
					& & \multicolumn{9}{@{}c@{}}{$k=5$}  && \multicolumn{9}{@{}c@{}}{$k=10$} \\\cmidrule{3-11}\cmidrule{13-21}%
					& & \multicolumn{4}{@{}c@{}}{Regret: $\max.\varphi$} && \multicolumn{4}{@{}c@{}}{Regret: $\avg.\varphi$}  && \multicolumn{4}{@{}c@{}}{Regret: $\max.\varphi$} && \multicolumn{4}{@{}c@{}}{Regret: $\avg.\varphi$}\\\cmidrule{3-6}\cmidrule{8-11}\cmidrule{13-16}\cmidrule{18-21}%
					\multirow{2}*{Features} & \multirow{2}*{$\eta$} & \multirow{2}*{$\letsip$} & \multirow{2}*{$\letsipcdf$} & \multicolumn{2}{@{}c@{}}{$\newletsipcdf$} && \multirow{2}*{$\letsip$} & \multirow{2}*{$\letsipcdf$} & \multicolumn{2}{@{}c@{}}{$\newletsipcdf$} && \multirow{2}*{$\letsip$} & \multirow{2}*{$\letsipcdf$} & \multicolumn{2}{@{}c@{}}{$\newletsipcdf$} && \multirow{2}*{$\letsip$} & \multirow{2}*{$\letsipcdf$} & \multicolumn{2}{@{}c@{}}{$\newletsipcdf$} \\
					\cmidrule{5-6}\cmidrule{10-11}\cmidrule{15-16}\cmidrule{20-21}
					
					& & & & $\aggregexp$ & $\aggregsum$ && & & $\aggregexp$ & $\aggregsum$ && & & $\aggregexp$ & $\aggregsum$ && & & $\aggregexp$ & $\aggregsum$ \\
					\cmidrule{3-11}\cmidrule{13-21}
					
					\multirow{3}*{I} & 0.25 & \multirow{3}*{204.595} & \multirow{3}*{13.860} & 13.934 & \cellcolor{G5}{\bf 13.806} && \multirow{3}*{424.472} & \multirow{3}*{\bf 337.069} & 341.575 & 340.623 && \multirow{3}*{196.444} & \multirow{3}*{\bf 10.986} & 11.127 & 11.167 && \multirow{3}*{433.142} & \multirow{3}*{\bf 376.380} & 378.948 & 378.690  \\
					
					& 0.2 & & & 14.018 & 14.017 &&  &  & 340.322 & 341.283 &&  &  & 11.229 & 11.122 &&  &  & 379.438 & 379.070 \\
					
					& 0.13 & & & 13.957 & 13.982 &&  &  & 339.773 & 340.131 &&  &  & 11.195 & 11.283 &&  &  & 377.142 & 377.700 \\
					\cmidrule{3-21}
					
					\multirow{3}*{IT} & 0.25 & \multirow{3}*{202.929} & \multirow{3}*{\bf 15.334} & 15.533 & 15.424 && \multirow{3}*{386.053} & \multirow{3}*{319.262} & 320.589 & 320.195 && \multirow{3}*{195.497} & \multirow{3}*{\bf 11.401} & 11.654 & 11.660 && \multirow{3}*{394.804} & \multirow{3}*{\bf 356.897} & 358.164 & 357.460 \\
					
					& 0.2 & &  & 15.857 & 15.798 &&  &  & 320.975 & 321.036 &&  &  & 11.596 & 11.510 &&  &  & 357.211 & 357.257 \\
					
					& 0.13 & &  & 15.698 & 15.404 &&  & & 319.366 & \cellcolor{G5}{\bf 318.690} &&  &  & 11.694 & 11.678 &&  &  & 358.401 & 358.065 \\
					\cmidrule{3-21}
					
					\multirow{3}*{ITLF} & 0.25 & \multirow{3}*{206.064} & \multirow{3}*{15.517} & 15.552 & 15.539 && \multirow{3}*{383.799} & \multirow{3}*{320.838} & {\bf 320.572} & 320.792 && \multirow{3}*{197.422} & \multirow{3}*{11.521} & \cellcolor{G5}{\bf 11.478} & \cellcolor{G5}{\bf 11.475} && \multirow{3}*{393.147} & \multirow{3}*{\bf 356.395} & 359.163 & 357.518 \\
					
					& 0.2 & & & \cellcolor{G5}{\bf 15.373} & 15.568 &&  &  & \cellcolor{G5}{\bf 319.694} & 320.782 &&  & & 11.562 & 11.577 &&  &  & 358.267 & 357.770 \\
					
					& 0.13 & &  & 15.504 & \cellcolor{G5}{\bf 15.432} &&  &  & {\bf 320.250} & 320.435 &&  &  & 11.568 & 11.603 &&  &  & 357.917 & 358.845 \\
					
					\cmidrule{1-21}
					\cmidrule{1-21}%
					
					& & \multicolumn{19}{@{}c@{}}{$\jmax=0.05 \quad \wedge \quad \ell=0$} \\\cmidrule{3-21}
					& & \multicolumn{9}{@{}c@{}}{$k=5$}  && \multicolumn{9}{@{}c@{}}{$k=10$} \\\cmidrule{3-11}\cmidrule{13-21}%
					& & \multicolumn{4}{@{}c@{}}{Regret: $\max.\varphi$} && \multicolumn{4}{@{}c@{}}{Regret: $\avg.\varphi$}  && \multicolumn{4}{@{}c@{}}{Regret: $\max.\varphi$} && \multicolumn{4}{@{}c@{}}{Regret: $\avg.\varphi$}\\\cmidrule{3-6}\cmidrule{8-11}\cmidrule{13-16}\cmidrule{18-21}%
					\multirow{2}*{Features} & \multirow{2}*{$\eta$} & \multirow{2}*{$\letsip$} & \multirow{2}*{$\letsipcdf$} & \multicolumn{2}{@{}c@{}}{$\newletsipcdf$} && \multirow{2}*{$\letsip$} & \multirow{2}*{$\letsipcdf$} & \multicolumn{2}{@{}c@{}}{$\newletsipcdf$} && \multirow{2}*{$\letsip$} & \multirow{2}*{$\letsipcdf$} & \multicolumn{2}{@{}c@{}}{$\newletsipcdf$} && \multirow{2}*{$\letsip$} & \multirow{2}*{$\letsipcdf$} & \multicolumn{2}{@{}c@{}}{$\newletsipcdf$} \\
					\cmidrule{5-6}\cmidrule{10-11}\cmidrule{15-16}\cmidrule{20-21}
					
					& & & & $\aggregexp$ & $\aggregsum$ && & & $\aggregexp$ & $\aggregsum$ && & & $\aggregexp$ & $\aggregsum$ && & & $\aggregexp$ & $\aggregsum$ \\
					\cmidrule{3-11}\cmidrule{13-21}
					
					\multirow{3}*{I} & 0.25 & \multirow{3}*{278.654} & \multirow{3}*{106.825} & {\bf 106.584} & 107.366 && \multirow{3}*{475.504} & \multirow{3}*{\bf 418.096} & 420.378 & 421.057 && \multirow{3}*{236.058} & \multirow{3}*{52.951} & \cellcolor{G5}{\bf 52.131} & \cellcolor{G5}{\bf52.678} && \multirow{3}*{450.757} & \multirow{3}*{\bf 418.395} & 419.840 & 420.121  \\
					
					& 0.2 &  &  & 106.865 & \cellcolor{G5}{\bf 105.939} &&  &  & 420.432 & 419.443 &&  &  & {\bf 52.811} & 53.053 &&  &  & 419.507 & 420.455 \\
					
					& 0.13 &  &  & \cellcolor{G5}{\bf 106.378} & 106.834 &&  &  & 418.817 & 419.686 &&  &  & {\bf 52.768} & 52.980 &&  &  & 419.344 & 419.514 \\
					\cmidrule{3-21}
					
					\multirow{3}*{IT} & 0.25 & \multirow{3}*{264.425} & \multirow{3}*{103.516} & {\bf 102.476} & 102.641 && \multirow{3}*{428.160} & \multirow{3}*{392.522} & {\bf 389.494} & 390.337 && \multirow{3}*{235.397} & \multirow{3}*{53.939} & 54.234 & 54.072 && \multirow{3}*{414.606} & \multirow{3}*{395.671} & \cellcolor{G5}{\bf 394.871} & 395.244 \\
					
					& 0.2 &  &  & \cellcolor{G5}{\bf 100.760} & 100.996 &&  &  & \cellcolor{G5}{\bf 388.681} & \cellcolor{G5}{\bf 388.808} &&  &  & {\bf 53.874} & {\bf 53.459} &&  &  & 395.652 & {\bf 395.075} \\
					
					& 0.13 &  &  & 102.946 & \cellcolor{G5}{\bf 101.420} &&  &  & {\bf 391.117} & {\bf 390.838} &&  &  & \cellcolor{G5}{\bf 53.466} & \cellcolor{G5}{\bf 53.450} &&  &  & 395.346 & \cellcolor{G5}{\bf 394.423} \\
					\cmidrule{3-21}
					
					\multirow{3}*{ITLF} & 0.25 & \multirow{3}*{265.064} & \multirow{3}*{102.076} & 102.838 & 102.197 && \multirow{3}*{427.891} & & 391.805 & 391.119 && \multirow{3}*{233.159} & \multirow{3}*{54.099} & {\bf 54.004} & 54.271 && \multirow{3}*{411.115} & \multirow{3}*{394.695} & 395.034 & 396.046 \\
					
					& 0.2 &  & & {\bf 101.825} & 102.152 &&  & \multirow{1}*{390.341} & {\bf 389.838} & 390.432 &&  &  & \cellcolor{G5}{\bf 53.599} & \cellcolor{G5}{\bf 53.312} &&  & & 394.887 & \cellcolor{G5}{\bf 394.219} \\
					
					& 0.13 &  &  & \cellcolor{G5}{\bf 101.531} & \cellcolor{G5}{\bf 101.638} &&  &  & \cellcolor{G5}{\bf 388.785} & \cellcolor{G5}{\bf 390.178} &&  &  & {\bf 53.764} & {\bf 53.730} &&  & & 394.854 & 395.121 \\
					
					\cmidrule{1-21}
					\cmidrule{1-21}%
					
				\end{tabular*}
			}
		\end{minipage}
	\end{center}
\end{table}


\begin{table}[h]
	\begin{center}
		\begin{minipage}{\textwidth}
			\caption{Evaluation of the importance of pattern features and comparison of \newletsipcdf vs \letsipcdf~vs. \letsip~w.r.t. different values of $\jmax$. Results are aggregated over $4$ datasets : Kr-vs-kp, Chess, Mushroom and Anneal.}\label{tab:features:comp:newletsipcdf:b}
			\scalebox{.5}{
				\begin{tabular*}{\textwidth}{@{\extracolsep{\fill}}llccccccccccccccccccc@{\extracolsep{\fill}}}
					\cmidrule{1-21}%
					& & \multicolumn{19}{@{}c@{}}{$\jmax=0.05 \quad \wedge \quad \ell=1$} \\\cmidrule{3-21}
					& & \multicolumn{9}{@{}c@{}}{$k=5$}  && \multicolumn{9}{@{}c@{}}{$k=10$} \\\cmidrule{3-11}\cmidrule{13-21}%
					& & \multicolumn{4}{@{}c@{}}{Regret: $\max.\varphi$} && \multicolumn{4}{@{}c@{}}{Regret: $\avg.\varphi$}  && \multicolumn{4}{@{}c@{}}{Regret: $\max.\varphi$} && \multicolumn{4}{@{}c@{}}{Regret: $\avg.\varphi$}\\\cmidrule{3-6}\cmidrule{8-11}\cmidrule{13-16}\cmidrule{18-21}%
					\multirow{2}*{Features} & \multirow{2}*{$\eta$} & \multirow{2}*{$\letsip$} & \multirow{2}*{$\letsipcdf$} & \multicolumn{2}{@{}c@{}}{$\newletsipcdf$} && \multirow{2}*{$\letsip$} & \multirow{2}*{$\letsipcdf$} & \multicolumn{2}{@{}c@{}}{$\newletsipcdf$} && \multirow{2}*{$\letsip$} & \multirow{2}*{$\letsipcdf$} & \multicolumn{2}{@{}c@{}}{$\newletsipcdf$} && \multirow{2}*{$\letsip$} & \multirow{2}*{$\letsipcdf$} & \multicolumn{2}{@{}c@{}}{$\newletsipcdf$} \\
					\cmidrule{5-6}\cmidrule{10-11}\cmidrule{15-16}\cmidrule{20-21}
					
					& & & & $\aggregexp$ & $\aggregsum$ && & & $\aggregexp$ & $\aggregsum$ && & & $\aggregexp$ & $\aggregsum$ && & & $\aggregexp$ & $\aggregsum$ \\
					\cmidrule{3-11}\cmidrule{13-21}
					
					\multirow{3}*{I} & 0.25 & \multirow{3}*{\bf 4.846} & \multirow{3}*{277.351} & 277.369 & 277.369 && \multirow{3}*{\bf 153.230} & \multirow{3}*{338.841} & 339.439 & 339.293 && \multirow{3}*{\bf 2.052} & \multirow{3}*{276.587} & 276.624 & 276.626 && \multirow{3}*{\bf 173.482} & \multirow{3}*{346.336} & 346.241 & 346.167  \\
					
					& 0.2 &  & & 277.289 & 277.289 && & & 338.776 & 338.635 && & & 276.653 & 276.653 && & & 346.442 & 346.521 \\
					
					& 0.13 &  & & 277.296 & 277.296 && & & 338.963 & 339.004 && & & 276.767 & 276.767 && & & 346.311 & 346.296 \\
					\cmidrule{3-21}
					
					\multirow{3}*{IT} & 0.25 & \multirow{3}*{\bf 3.721} & \multirow{3}*{277.083} & 277.114 & 277.114 && \multirow{3}*{136.091} & \multirow{3}*{337.744} & 337.976 & 337.976 && \multirow{3}*{\bf 1.805} & \multirow{3}*{276.389} & 276.599 & 276.599 && \multirow{3}*{159.646} & \multirow{3}*{344.876} & 344.324 & 344.294 \\
					
					& 0.2 & & & 277.116 & 277.113 && & & 337.834 & 337.756 && & & 276.559 & 276.559 && & & 344.605 & 344.690 \\
					
					& 0.13 & & & 277.116 & 277.116 && & & 338.160 & 338.160 && & & 276.593 & 276.593 && & & 344.557 & 344.550 \\
					\cmidrule{3-21}
					
					\multirow{3}*{ITLF} & 0.25 & \multirow{3}*{\bf 3.813} & \multirow{3}*{277.083} & 277.114 & 277.114 && \multirow{3}*{139.184} & \multirow{3}*{337.744} & 338.259 & 337.976 && \multirow{3}*{\bf 1.765} & \multirow{3}*{276.395} & 276.596 & 276.596 && \multirow{3}*{165.982} & \multirow{3}*{344.702} & 344.479 & 344.454 \\
					
					& 0.2 & & & 277.116 & 277.116 && & & 337.834 & 337.834 && & & 276.560 & 276.559 && & & 344.497 & 344.646 \\
					
					& 0.13 & & & 277.116 & 277.115 && & & 338.160 & 338.078 && & & 276.594 & 276.594 && & & 344.409 & 344.409 \\
					
					\cmidrule{1-19}
					\cmidrule{1-19}%
					
					& & \multicolumn{19}{@{}c@{}}{$\jmax=0.05 \quad \wedge \quad \ell=0$} \\\cmidrule{3-21}
					& & \multicolumn{9}{@{}c@{}}{$k=5$}  && \multicolumn{9}{@{}c@{}}{$k=10$} \\\cmidrule{3-11}\cmidrule{13-21}%
					& & \multicolumn{4}{@{}c@{}}{Regret: $\max.\varphi$} && \multicolumn{4}{@{}c@{}}{Regret: $\avg.\varphi$}  && \multicolumn{4}{@{}c@{}}{Regret: $\max.\varphi$} && \multicolumn{4}{@{}c@{}}{Regret: $\avg.\varphi$}\\\cmidrule{3-6}\cmidrule{8-11}\cmidrule{13-16}\cmidrule{18-21}%
					\multirow{2}*{Features} & \multirow{2}*{$\eta$} & \multirow{2}*{$\letsip$} & \multirow{2}*{$\letsipcdf$} & \multicolumn{2}{@{}c@{}}{$\newletsipcdf$} && \multirow{2}*{$\letsip$} & \multirow{2}*{$\letsipcdf$} & \multicolumn{2}{@{}c@{}}{$\newletsipcdf$} && \multirow{2}*{$\letsip$} & \multirow{2}*{$\letsipcdf$} & \multicolumn{2}{@{}c@{}}{$\newletsipcdf$} && \multirow{2}*{$\letsip$} & \multirow{2}*{$\letsipcdf$} & \multicolumn{2}{@{}c@{}}{$\newletsipcdf$} \\
					\cmidrule{5-6}\cmidrule{10-11}\cmidrule{15-16}\cmidrule{20-21}
					
					& & & & $\aggregexp$ & $\aggregsum$ && & & $\aggregexp$ & $\aggregsum$ && & & $\aggregexp$ & $\aggregsum$ && & & $\aggregexp$ & $\aggregsum$ \\
					\cmidrule{3-11}\cmidrule{13-21}
					
					\multirow{3}*{I} & 0.25 & \multirow{3}*{\bf 63.471} & \multirow{3}*{309.137} & 310.595 & 311.030 && \multirow{3}*{\bf 193.516} & \multirow{3}*{353.573} & 354.350 & 354.350 && \multirow{3}*{\bf 34.480} & \multirow{3}*{291.099} & 291.640 & 291.393 && \multirow{3}*{\bf 192.332} & \multirow{3}*{353.918} & 353.919 & 353.938  \\
					
					& 0.2 & & & 311.746 & 311.736 && & & 354.446 & 354.463 && & & 290.972 & 290.637 && & & 353.659 & 353.396 \\
					
					& 0.13 & & & 308.817 & 308.978 && & & 353.450 & 353.482 && & & 291.020 & 290.889 && & & 353.562 & 353.507 \\
					\cmidrule{3-21}
					
					\multirow{3}*{IT} & 0.25 & \multirow{3}*{\bf 52.438} & \multirow{3}*{307.783} & 308.256 & 308.041 && \multirow{3}*{179.867} & \multirow{3}*{352.255} & 352.338 & 352.426 && \multirow{3}*{\bf 28.021} & \multirow{3}*{291.124} & 291.428 & 291.454 && \multirow{3}*{\bf 184.793} & \multirow{3}*{352.418} & 352.716 & 352.693 \\
					
					& 0.2 & & & 308.661 & 308.294 && & & 352.516 & 352.405 && & & 290.208 & 290.646 &&  & & 352.482 & 352.413 \\
					
					& 0.13 & & & 307.111 & 307.111 && 179.867 & 352.255 & 351.959 & 351.959 && & & 290.909 & 290.909 && & & 352.092 & 352.092 \\
					\cmidrule{3-21}
					
					\multirow{3}*{ITLF} & 0.25 & \multirow{3}*{\bf 52.554} & \multirow{3}*{307.707} & 308.256 & 308.256 && \multirow{3}*{\bf 179.214} & \multirow{3}*{352.238} & 352.338 & 352.338 && \multirow{3}*{\bf 27.751} & \multirow{3}*{290.265} & 291.221 & 291.277 && \multirow{3}*{\bf 183.861} & \multirow{3}*{351.850} & 352.578 & 352.582 \\
					
					& 0.2 & & & 308.453 & 308.453 && & & 352.395 & 352.395 && & & 290.300 & 290.319 && & & 352.342 & 352.352 \\
					
					& 0.13 & & & 307.111 & 307.111 && & & 351.959 & 351.959 && & & 290.700 & 290.700 && & & 352.057 & 352.057 \\
					
					\cmidrule{1-21}
					\cmidrule{1-21}%

				\end{tabular*}
			}
		\end{minipage}
	\end{center}
\end{table}

\end{landscape}

\section{Related Work}
\label{sec:related_works}

Pattern mining has been faced with the problem of returning too large, and potentially uninteresting result sets since early on.

The first proposed solution consisted of condensed representations \cite{Lakhal,DBLP:conf/ideas/2001:free,DBLP:conf/pkdd/2002:non-derivable} by reducing the redudancy of the covers of patterns.
 Top-$k$ mining~\cite{morishita,WangHLT05} is efficient but results in strongly related, redundant patterns showing a lack of diversity.
 Pattern set mining~\cite{chosen-few,DBLP:conf/kdd/2006:miki,patternteams,DBLP:conf/sdm/RaedtZ07} takes into account the relationships between the patterns, which allows to control redundancy and lead to small sets. It requires that one mines an ideally large pattern set to select from, and typically doesn't involve the user.
 
 Constraint-based pattern mining has been leveraged as  a general, flexible solution to pattern mining for a while \cite{DBLP:conf/kdd/RaedtGN08,schaus2017coversize,}, and has since seen numerous improvements, including closed itemset mining \cite{lazaar2016global} and, eventually mining of diverse itemsets \cite{HienLALLOZ20}.
 
 The most recent proposal to dealing with the question of finding interesting patterns involves the user, via interactive pattern mining \cite{DBLP:conf/icml/Rueping09} often involving sampling \cite{DBLP:journals/sadm/BhuiyanH16}, with \letsip{} \cite{Dzyuba:letsip} one of the end points of this development.

\section{Conclusion}
\label{sec:conclusion}

In this paper, we have proposed an improvement to the state-of-the art of iterative pattern mining: instead of using static low-level features that have been pre-defined before the process starts, our approach learns more complex descriptors from the user-defined pattern ranking. These features allow to capture the importance of item interactions, and, as shown experimentally, lead to lower cumulative and individual regret than using low-level features. 
We have explored two multiplicative aggregation functions for mapping weights learned for complex features back to their component items, and find that the exponential multiplicative factor gives better results on most of the data sets we worked with.
Furthermore, we have proposed to use \cdflexics{} as the sampling component in \letsip{} to maximize the pattern diversity of each query and showed a straightforward combination of both improvements in \letsip{}. \cdflexics{} takes the XOR-constraint based data partitioning of \flexics{} and extends it with the diversity constraint \closedx{}, introduced in \cite{HienLALLOZ20}. The results show convincing improvements both in terms of learning more accurate pattern rankings and CPU-times. 
We have evaluated our proposal only on itemset data so far since the majority of existing work, including the method that we extended, is defined for this kind of data. But the importance of using complex dynamic features can be expected to be even higher when interactively mining complex, i.e. sequential, tree-, or graph-structured data. We will explore this direction in future work. 

\section*{Declarations}

\begin{itemize}
\item Conflict of interest/Competing interests: all authors declare not having any financial or non-financial interests that are directly or indirectly related to the work submitted for publication.	
\item Availability of data, materials and code: They will be available in the final submission. 
\end{itemize}

\eject
\bibliography{JDSA.bib}

\begin{appendices}	
\section{Detailed view of results for other datasets ($\newletsip$ vs. $\letsip$)}\label{secA1}

Figures \ref{fig:1}-\ref{fig:4a} plot a detailed view of comparison between \newletsipexp{} and \letsip{} (cumulative and non-cumulative regret) for different datasets w.r.t. $maximal$ quality, $k = 10$ and $\ell = 1$.

\begin{figure}[htbp]
	\centering
	\begin{tabular}{c}
		\subfloat[][Chess: cumulative regret. ]{\includegraphics[scale=0.2]{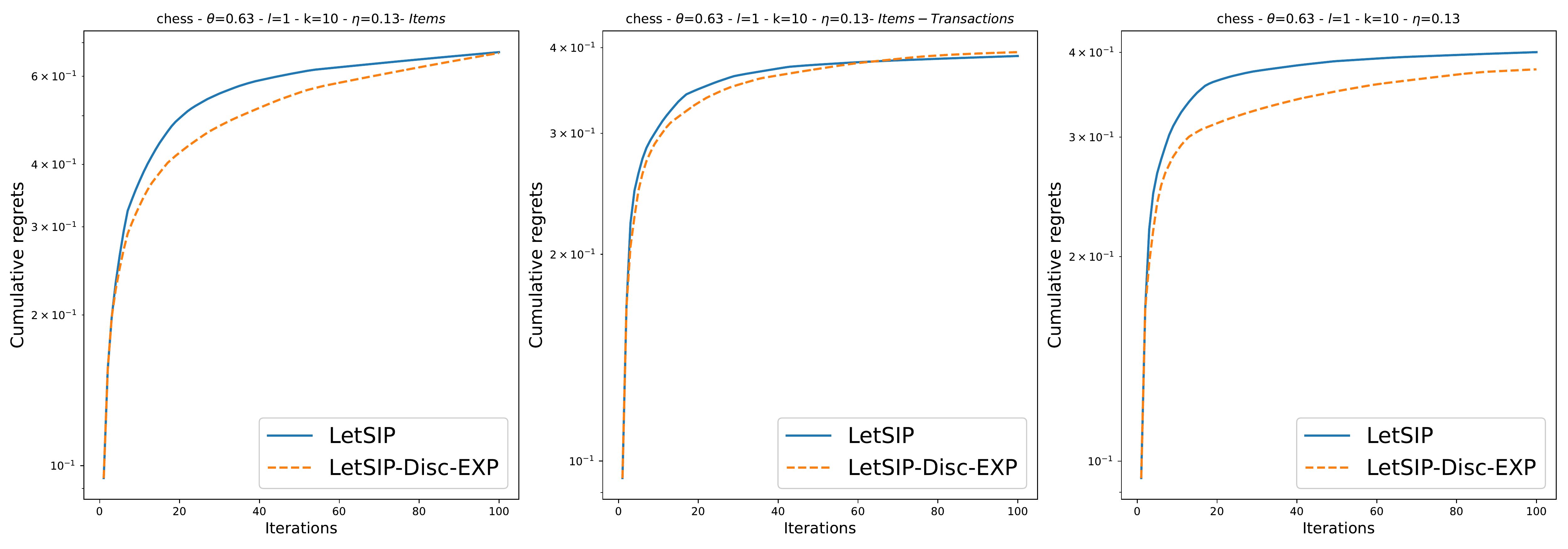}} \\
		\subfloat[][Chess: non cumulative regret. ]{\includegraphics[scale=0.2]{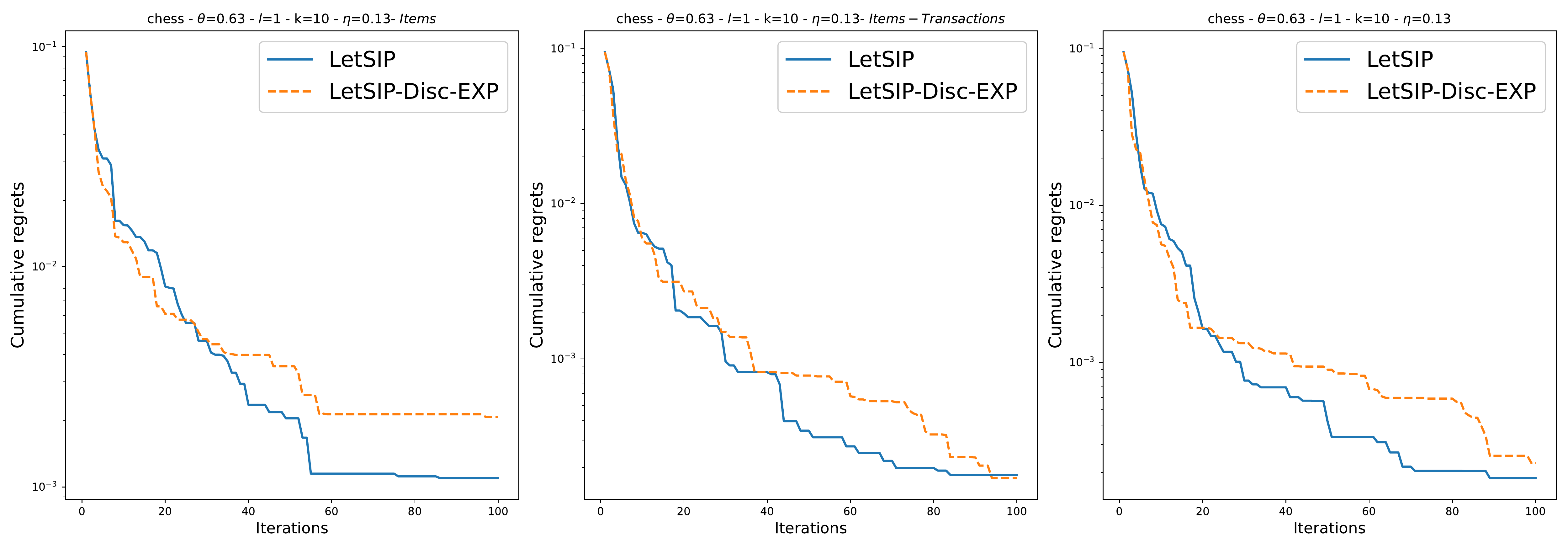}}
	\end{tabular}
	
	\begin{tabular}{c}
		\subfloat[][German-credit: cumulative regret. ]{\includegraphics[scale=0.2]{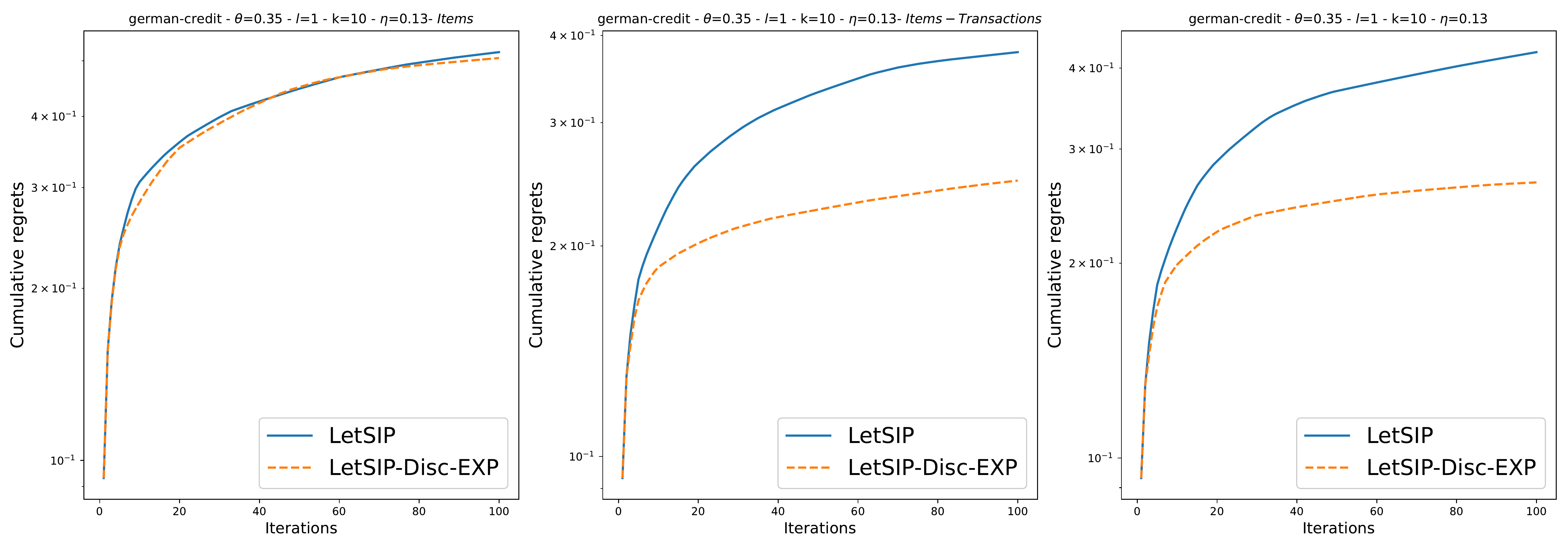}} \\
		\subfloat[][German-credit: non cumulative regret. ]{\includegraphics[scale=0.2]{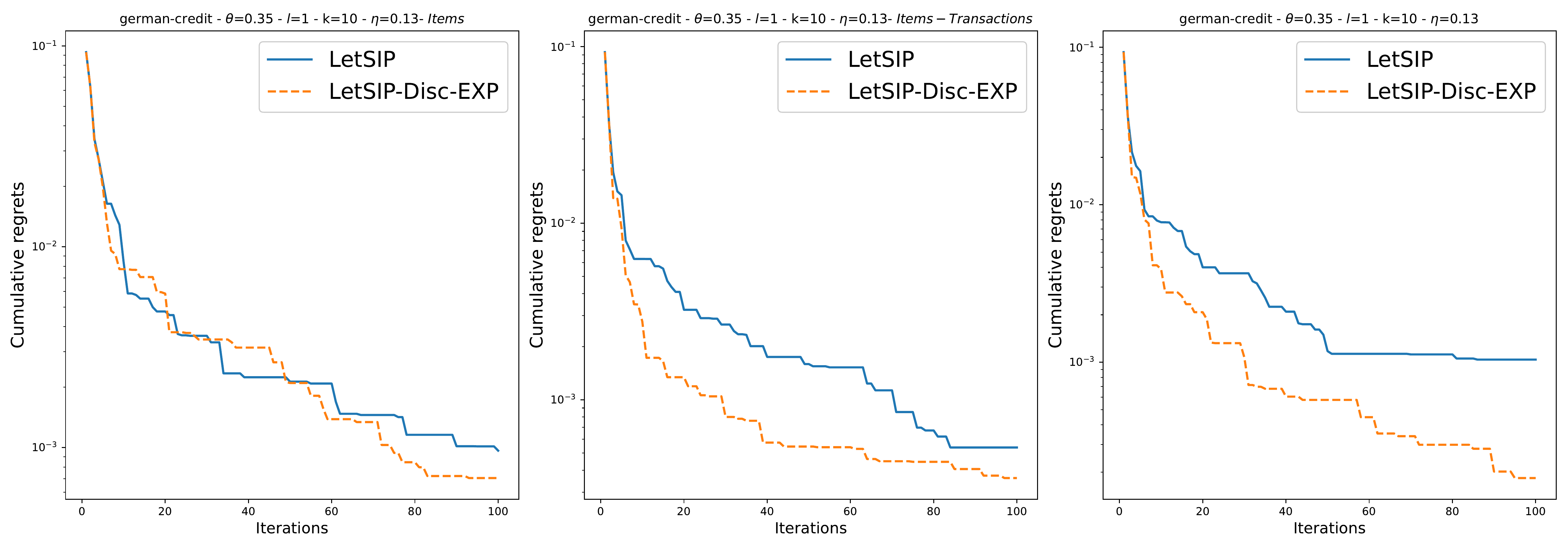}}	
	\end{tabular}
	\caption{A detailed view of comparison between \newletsipexp{} and \letsip{} (cumulative and non-cumulative regret) for different pattern features w.r.t. $maximal$ quality, $k = 10$ and $\ell = 1$.}
	\label{fig:1}
\end{figure}

\begin{figure}[t]
	\centering
	\begin{tabular}{c}
		\subfloat[][Heart-cleveland: cumulative regret. ]{\includegraphics[scale=0.2]{letsip_vs_letsip_disc_cumul_german-credit-0v35-eps-converted-to.pdf}} \\
		\subfloat[][Heart-cleveland: non cumulative regret. ]{\includegraphics[scale=0.2]{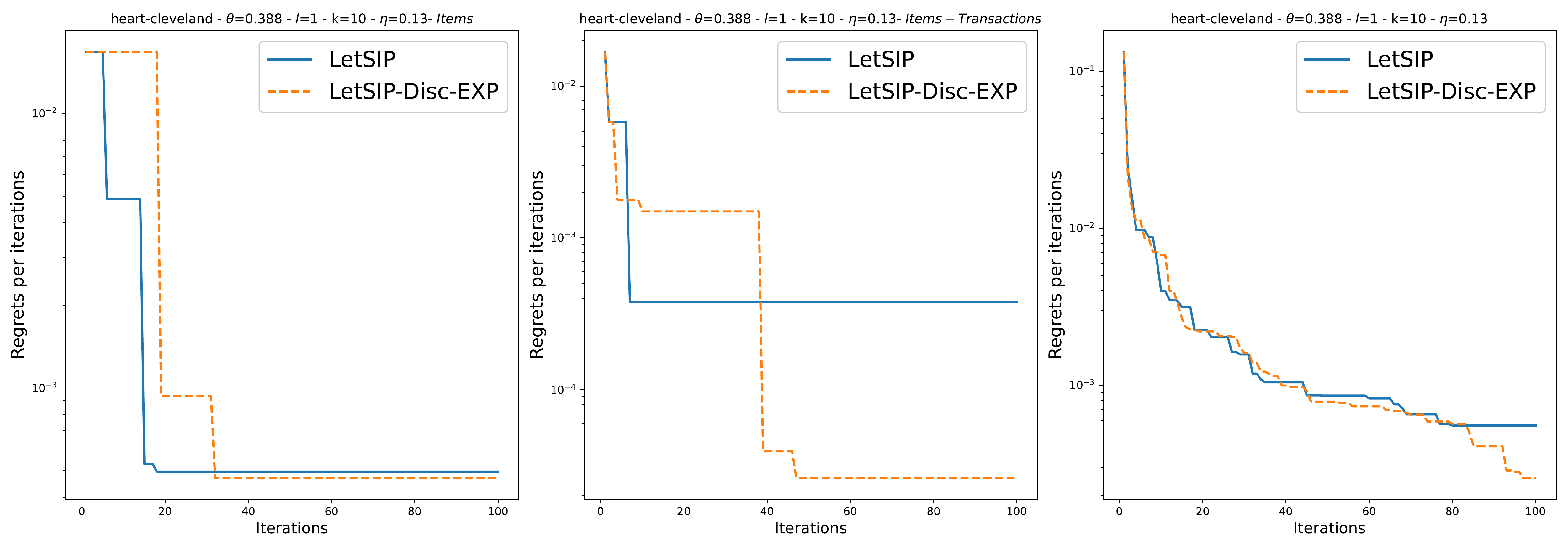}}	
	\end{tabular}
	
	\begin{tabular}{c}
		\subfloat[][Hepatitis: cumulative regret. ]{\includegraphics[scale=0.2]{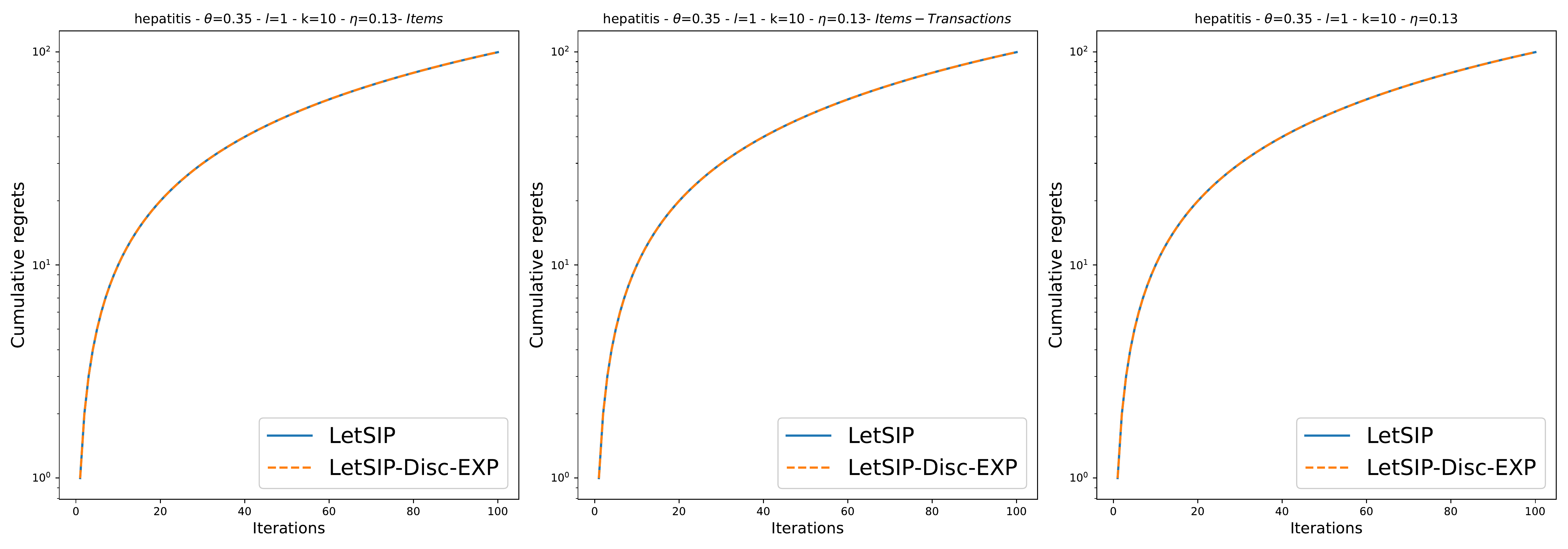}} \\
		\subfloat[][Hepatitis: non cumulative regret. ]{\includegraphics[scale=0.2]{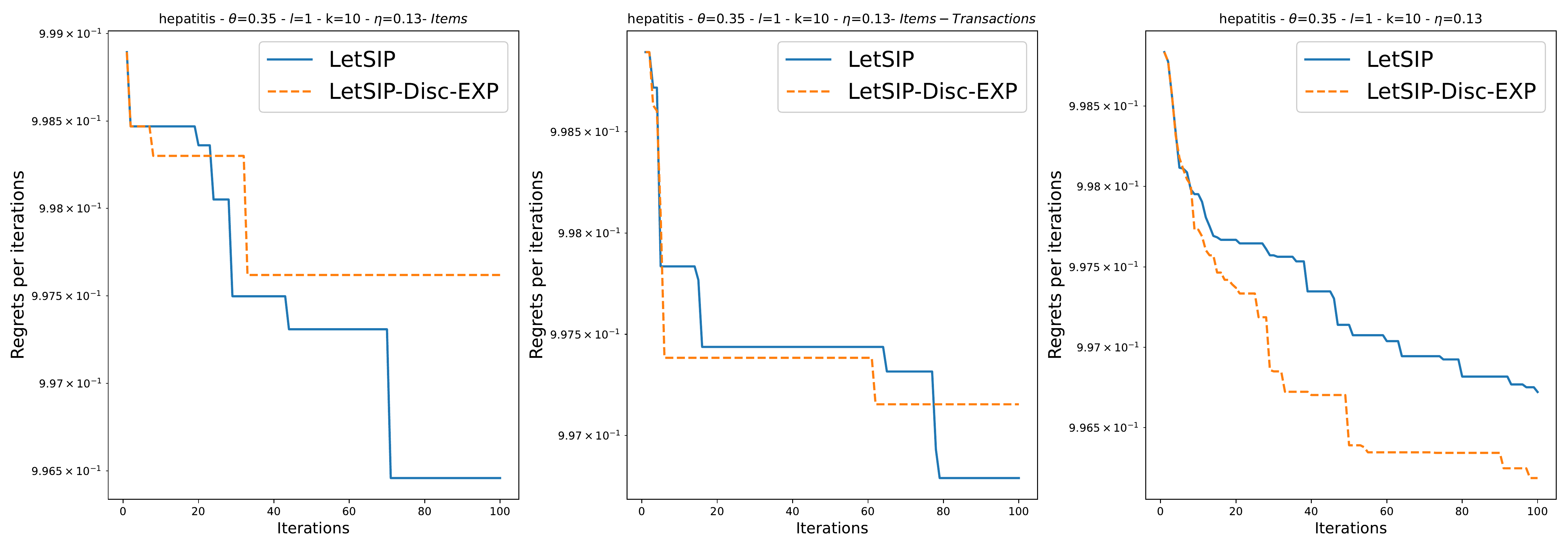}}	
	\end{tabular}
	\caption{A detailed view of comparison between \newletsipexp{} and \letsip{} (cumulative and non-cumulative regret) for different pattern features w.r.t. $maximal$ quality, $k = 10$ and $\ell = 1$.}
	\label{fig:2}
\end{figure}

\begin{figure}[t]
	\centering
	\begin{tabular}{c}
		\subfloat[][Kr-vs-kp: cumulative regret. ]{\includegraphics[scale=0.2]{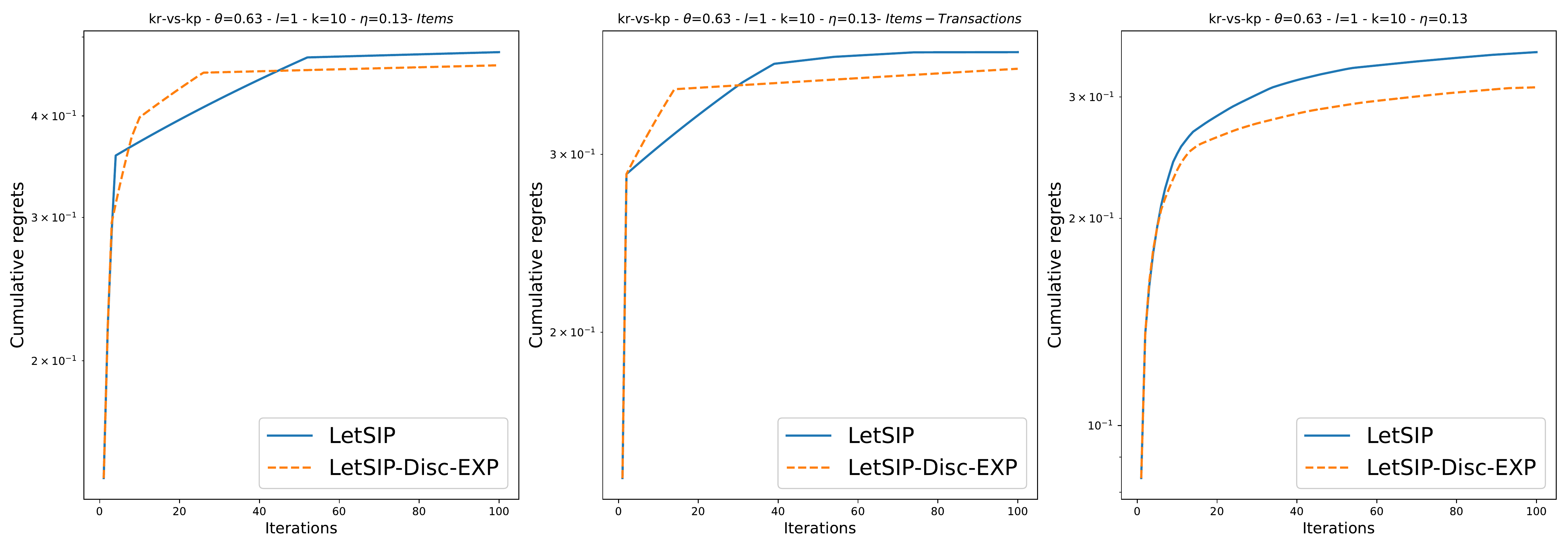}} \\
		\subfloat[][Kr-vs-kp: non cumulative regret. ]{\includegraphics[scale=0.2]{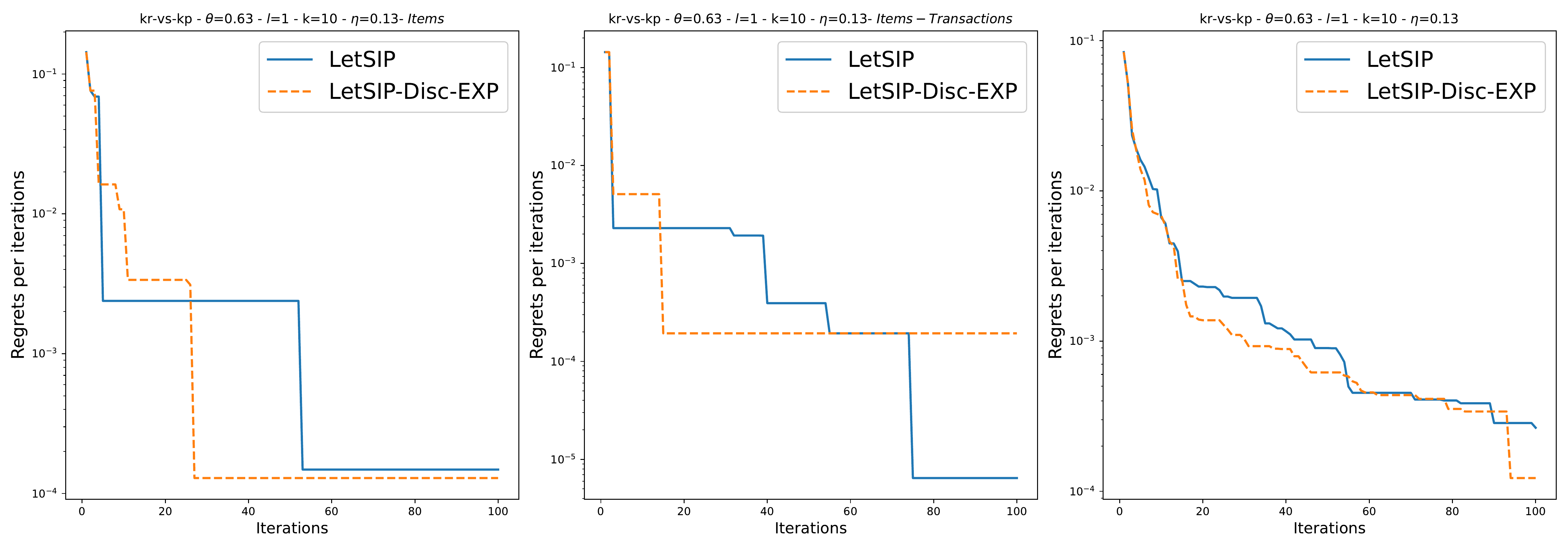}}	
	\end{tabular}
	
	\begin{tabular}{c}
		\subfloat[][Mushroom: cumulative regret. ]{\includegraphics[scale=0.2]{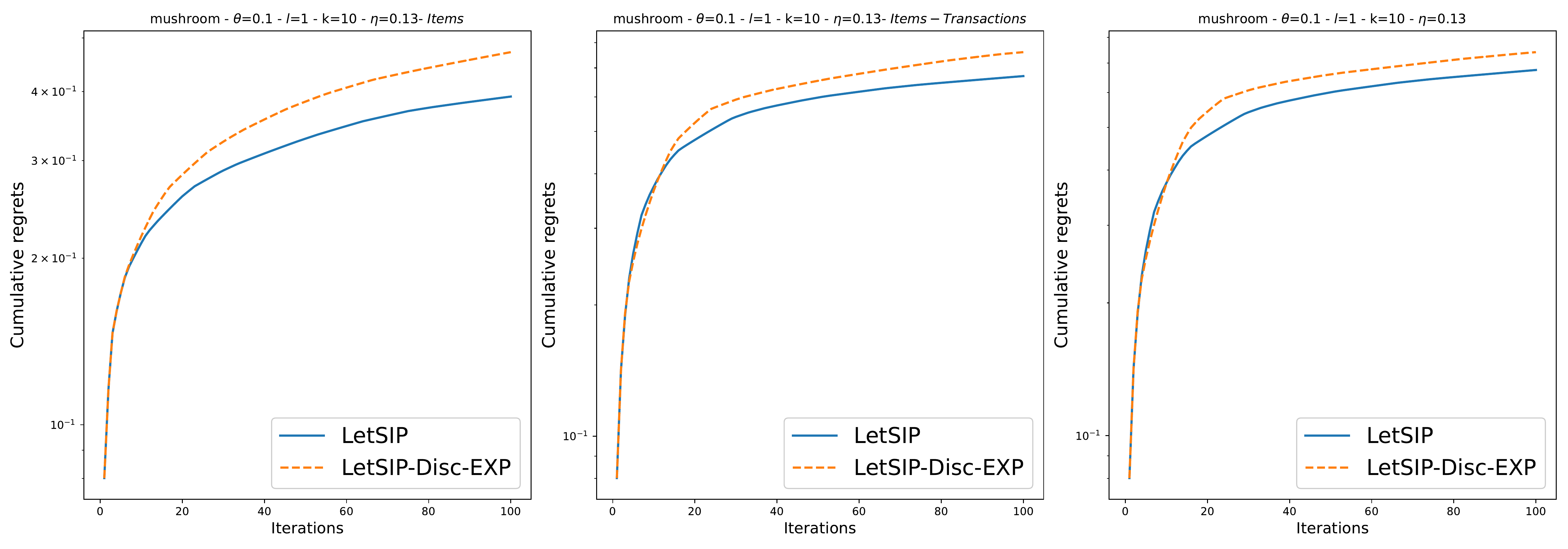}} \\
		\subfloat[][Mushroom: non cumulative regret. ]{\includegraphics[scale=0.2]{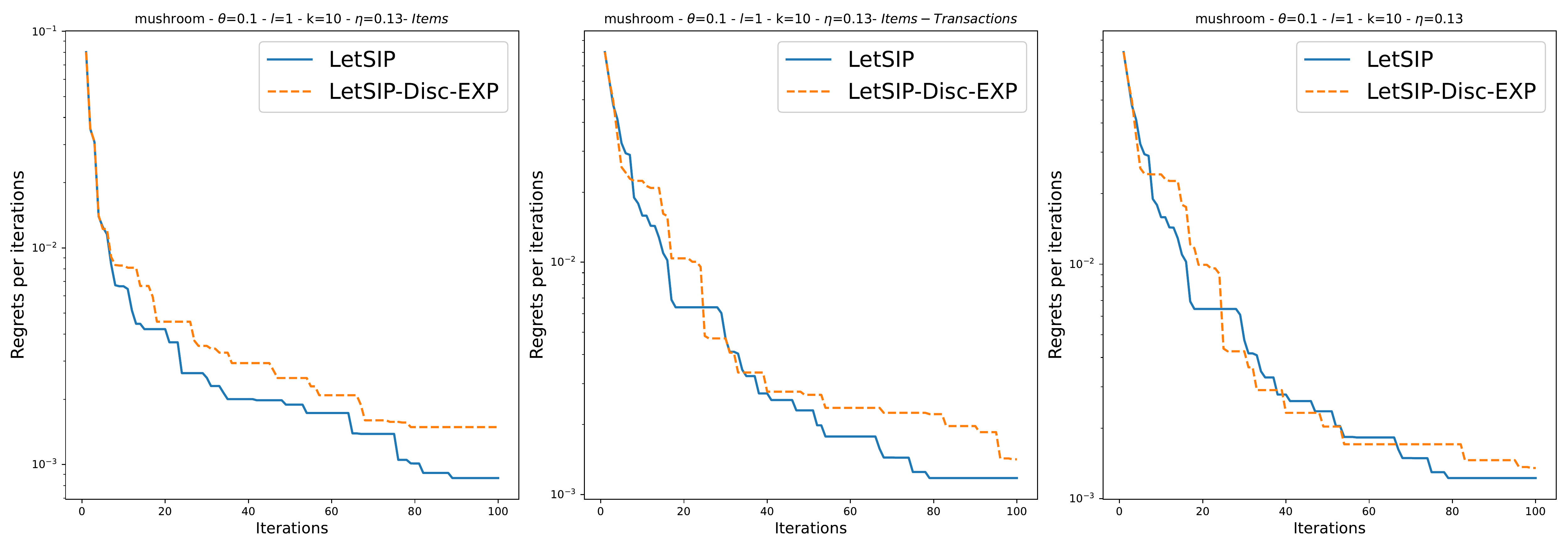}}	
	\end{tabular}
	\caption{A detailed view of comparison between \newletsipexp{} and \letsip{} (cumulative and non-cumulative regret) for different pattern features w.r.t. $maximal$ quality, $k = 10$ and $\ell = 1$.}
	\label{fig:3}
\end{figure}

\begin{figure}[t]
	\centering
	\begin{tabular}{c}
		\subfloat[][Soybean: cumulative regret. ]{\includegraphics[scale=0.2]{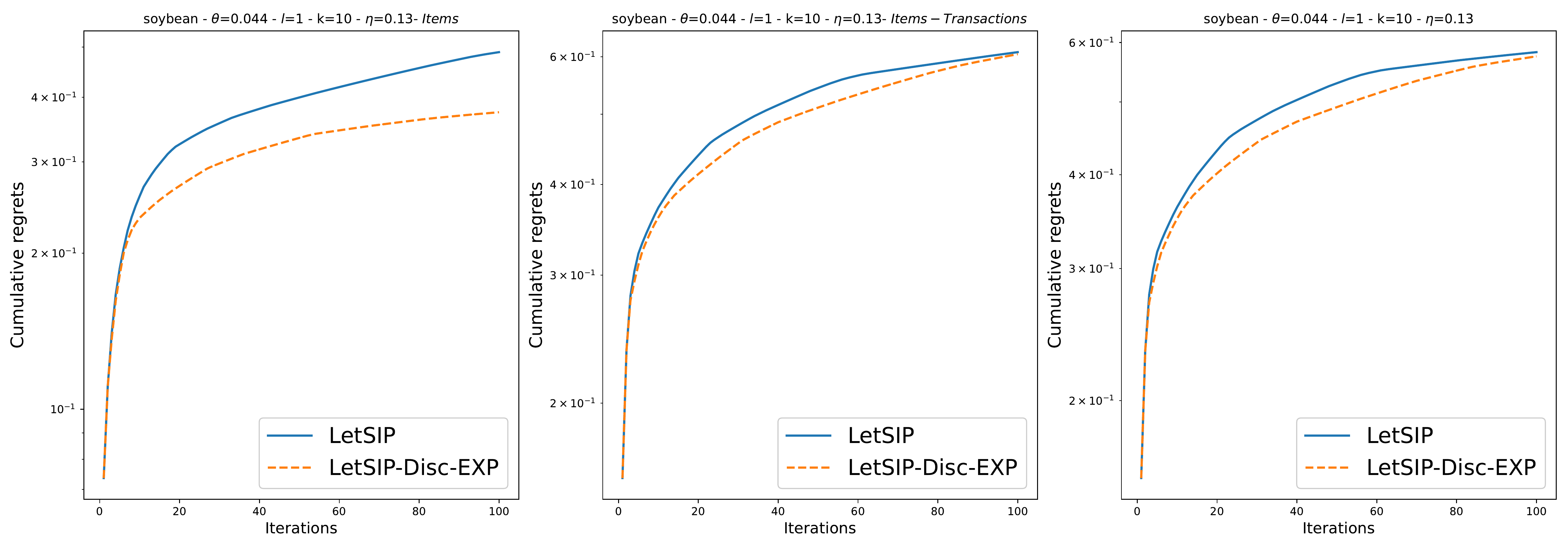}} \\
		\subfloat[][Soybean: non cumulative regret. ]{\includegraphics[scale=0.2]{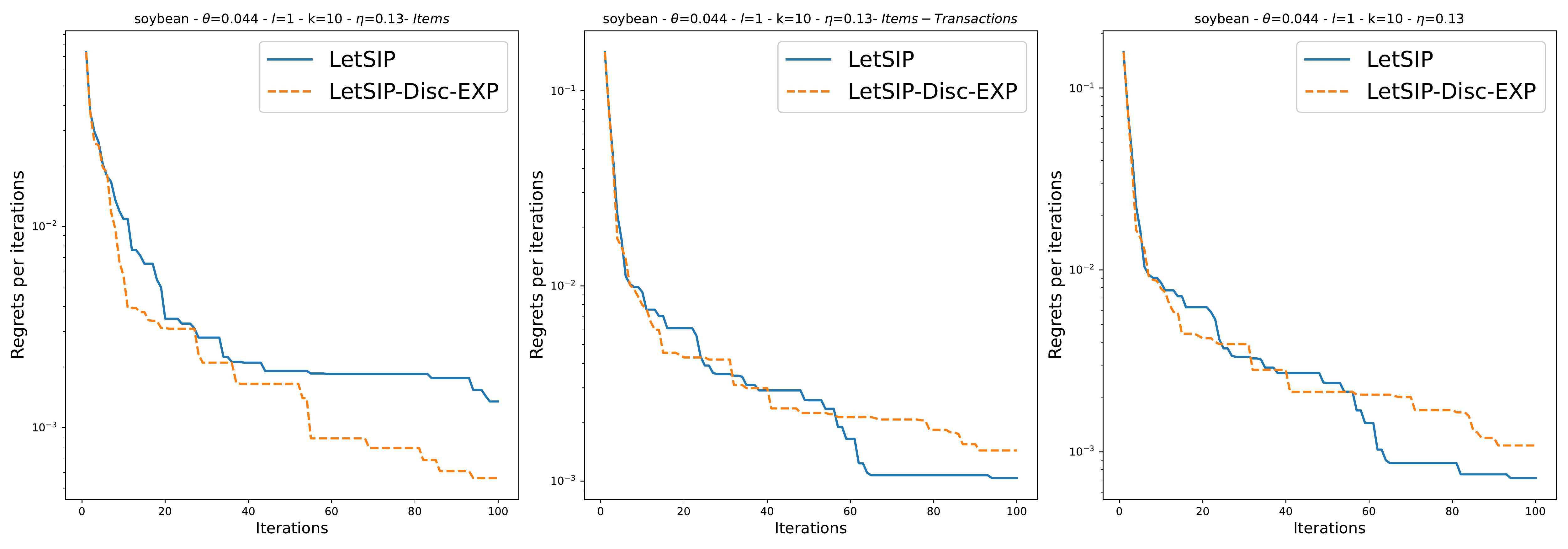}}	
	\end{tabular}
	
	\begin{tabular}{c}
		\subfloat[][Vote: cumulative regret. ]{\includegraphics[scale=0.2]{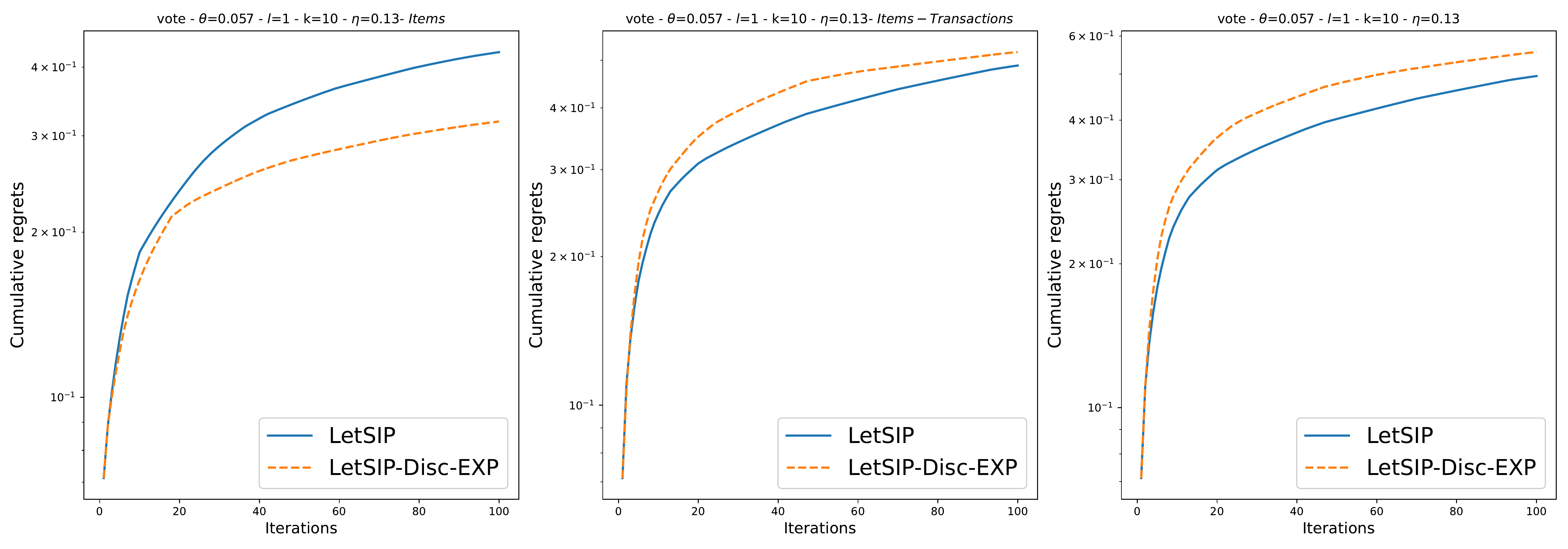}} \\
		\subfloat[][Vote: non cumulative regret. ]{\includegraphics[scale=0.2]{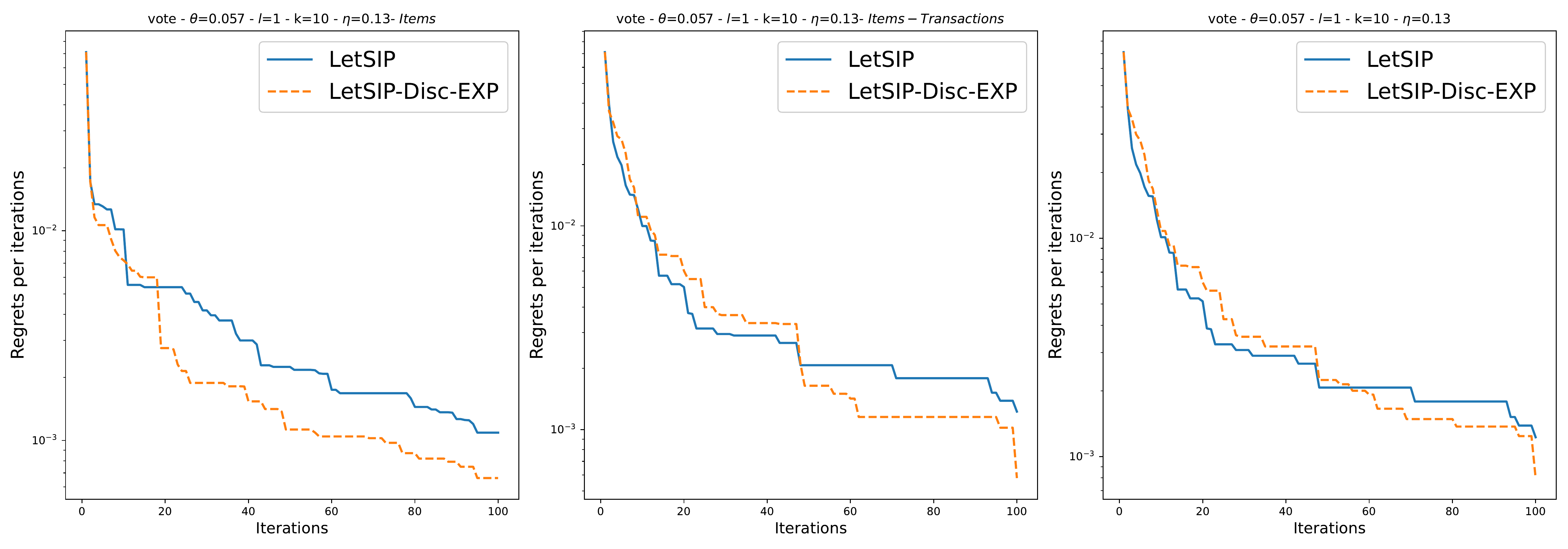}}	
	\end{tabular}
	\caption{A detailed view of comparison between \newletsipexp{} and \letsip{} (cumulative and non-cumulative regret) for different pattern features w.r.t. $maximal$ quality, $k = 10$ and $\ell = 1$.}
	\label{fig:4}
\end{figure}

\begin{figure}[t]
	\centering
	\begin{tabular}{c}
		\subfloat[][Zoo-1: cumulative regret. ]{\includegraphics[scale=0.2]{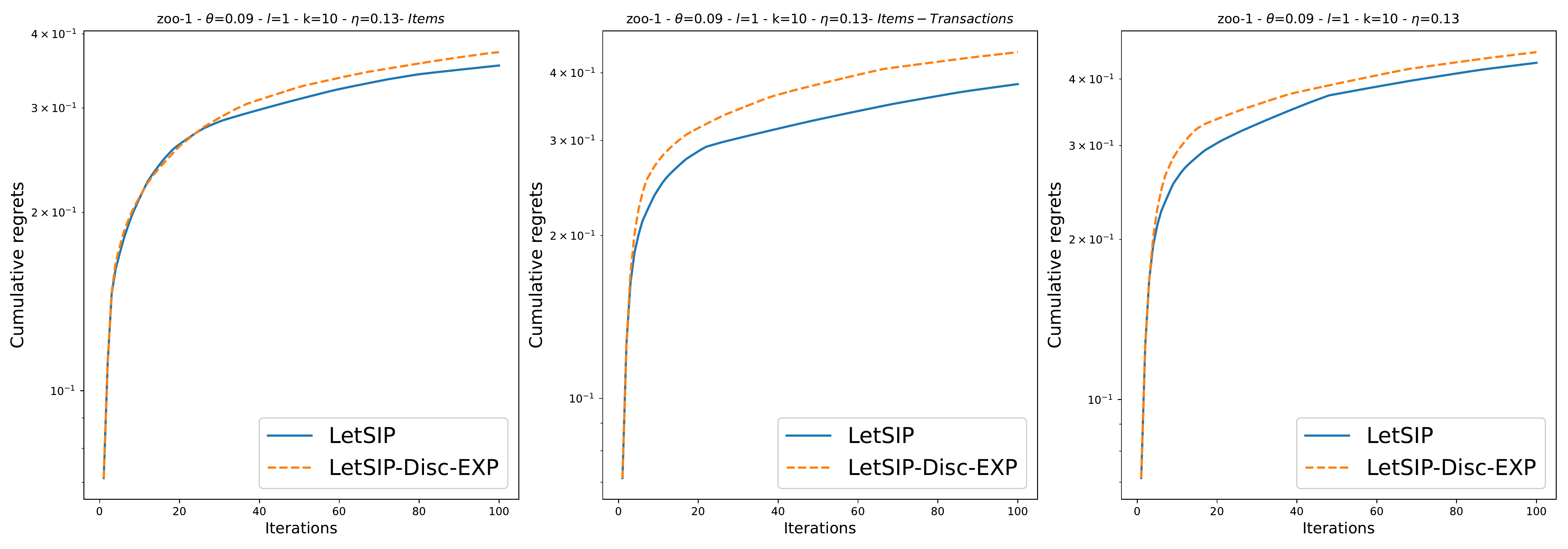}} \\
		\subfloat[][Zoo-1: non cumulative regret. ]{\includegraphics[scale=0.2]{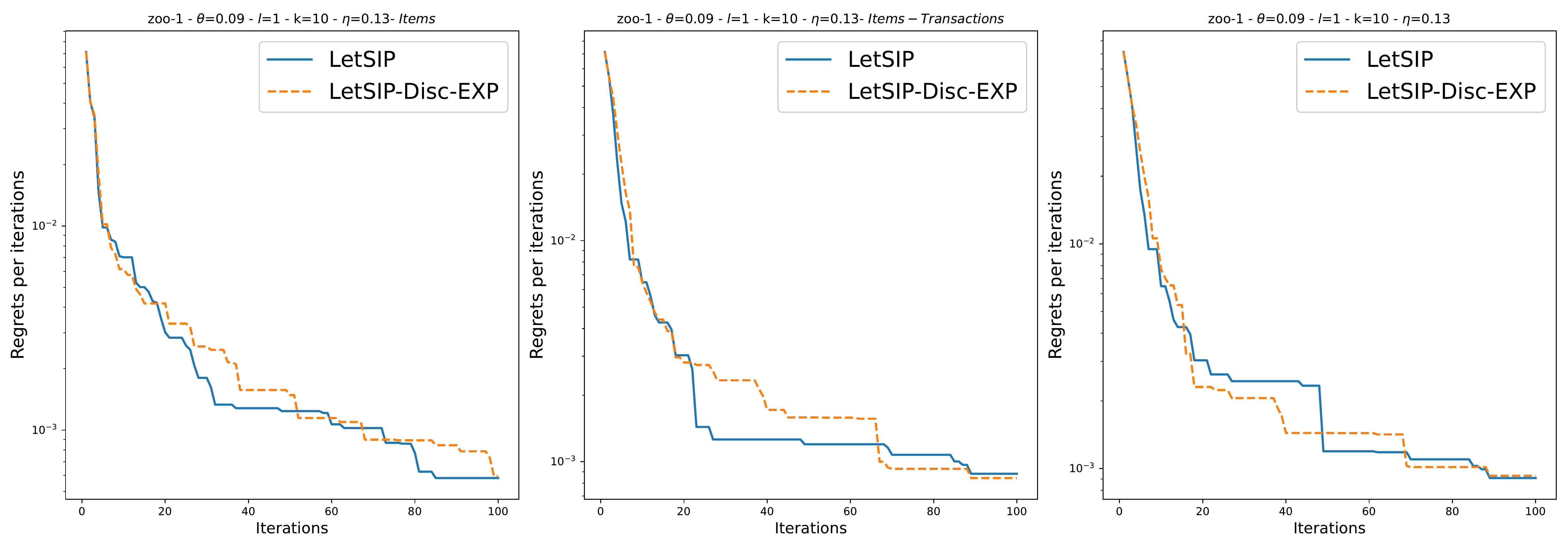}}	
	\end{tabular}
	\caption{A detailed view of comparison between \newletsipexp{} and \letsip{} (cumulative and non-cumulative regret) for different pattern features w.r.t. $maximal$ quality, $k = 10$ and $\ell = 1$.}
	\label{fig:4a}
\end{figure}


\section{Detailed view of results for other datasets ($\letsipcdf$ vs. $\letsip$)}\label{secB1}

Figures \ref{fig:5}-\ref{fig:8} plot a detailed view of comparison between \letsipcdf{} and \letsip{} (cumulative and non-cumulative regret) for different datasets w.r.t. $maximal$ quality, $k = 10$ and $\ell = 1$.

\begin{figure}[htbp]
	\centering
	\begin{tabular}{c}
		\subfloat[][Chess: cumulative regret. ]{\includegraphics[scale=0.2]{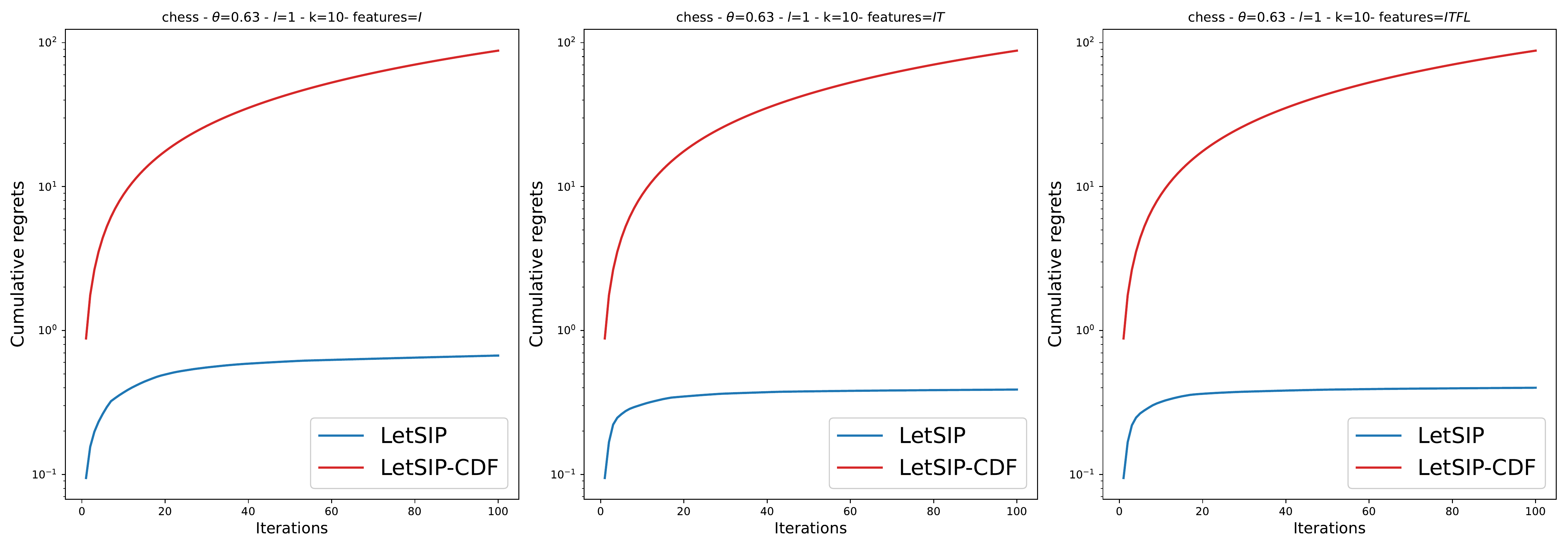}} \\
		\subfloat[][Chess: non cumulative regret. ]{\includegraphics[scale=0.2]{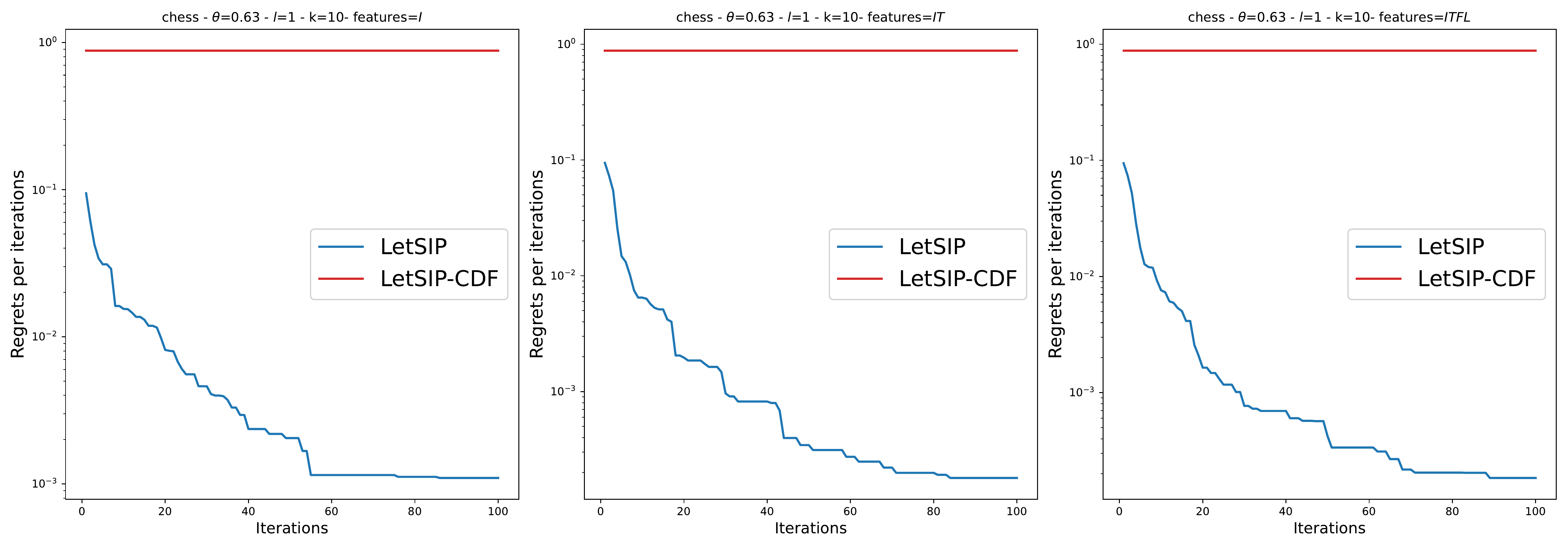}}
	\end{tabular}
	
	\begin{tabular}{c}
		\subfloat[][German-credit: cumulative regret. ]{\includegraphics[scale=0.2]{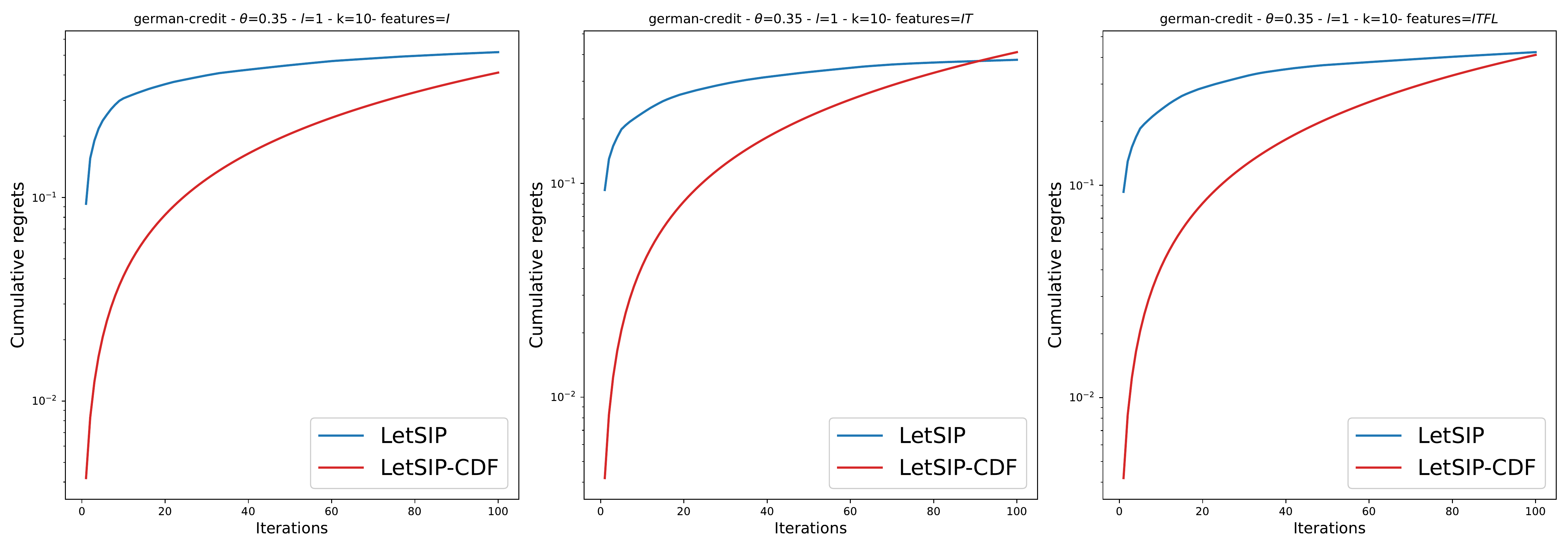}} \\
		\subfloat[][German-credit: non cumulative regret. ]{\includegraphics[scale=0.2]{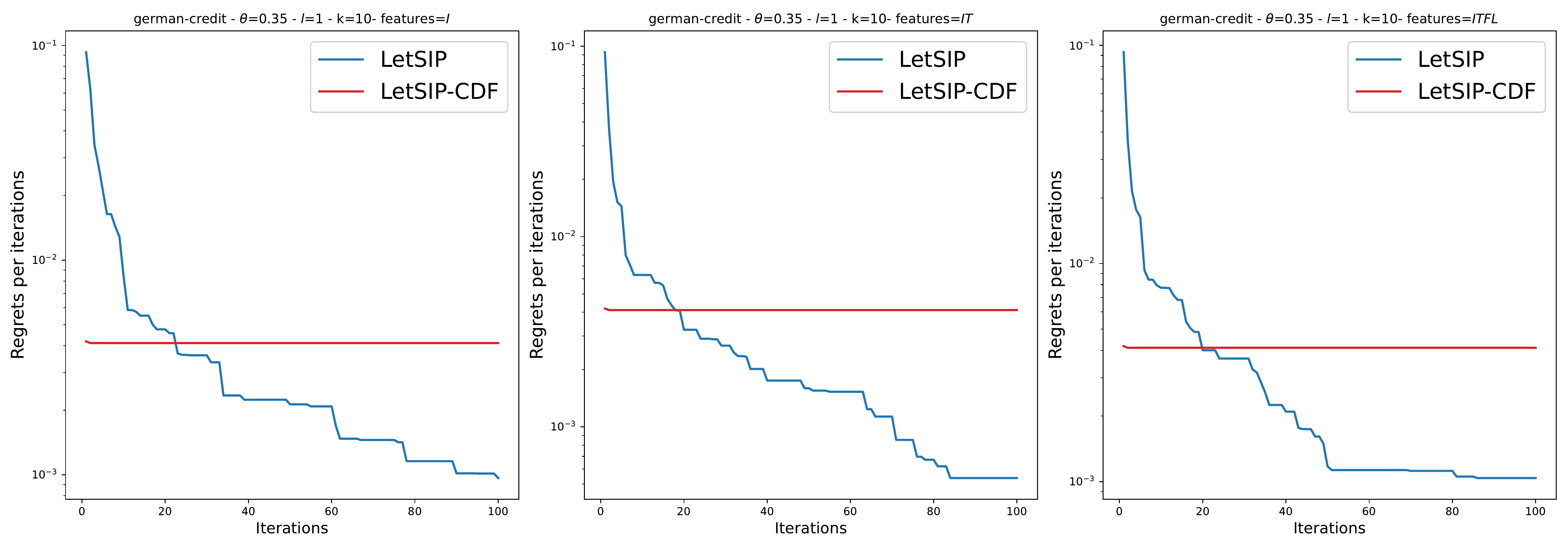}}	
	\end{tabular}
	\caption{A detailed view of comparison between \letsipcdf{} and \letsip{} (cumulative and non-cumulative regret) for different pattern features w.r.t. $maximal$ quality, $k = 10$ and $\ell = 1$.}
	\label{fig:5}
\end{figure}

\begin{figure}[t]
	\centering
	\begin{tabular}{c}
		\subfloat[][Heart-cleveland: cumulative regret. ]{\includegraphics[scale=0.2]{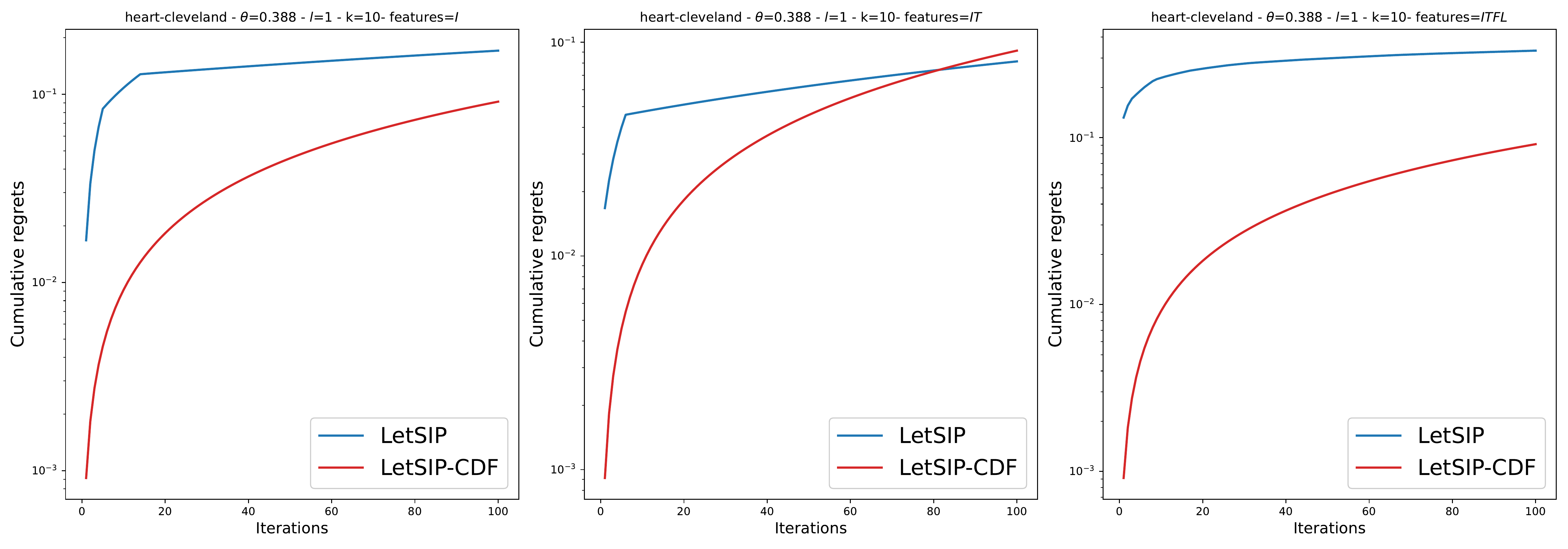}} \\
		\subfloat[][Heart-cleveland: non cumulative regret. ]{\includegraphics[scale=0.2]{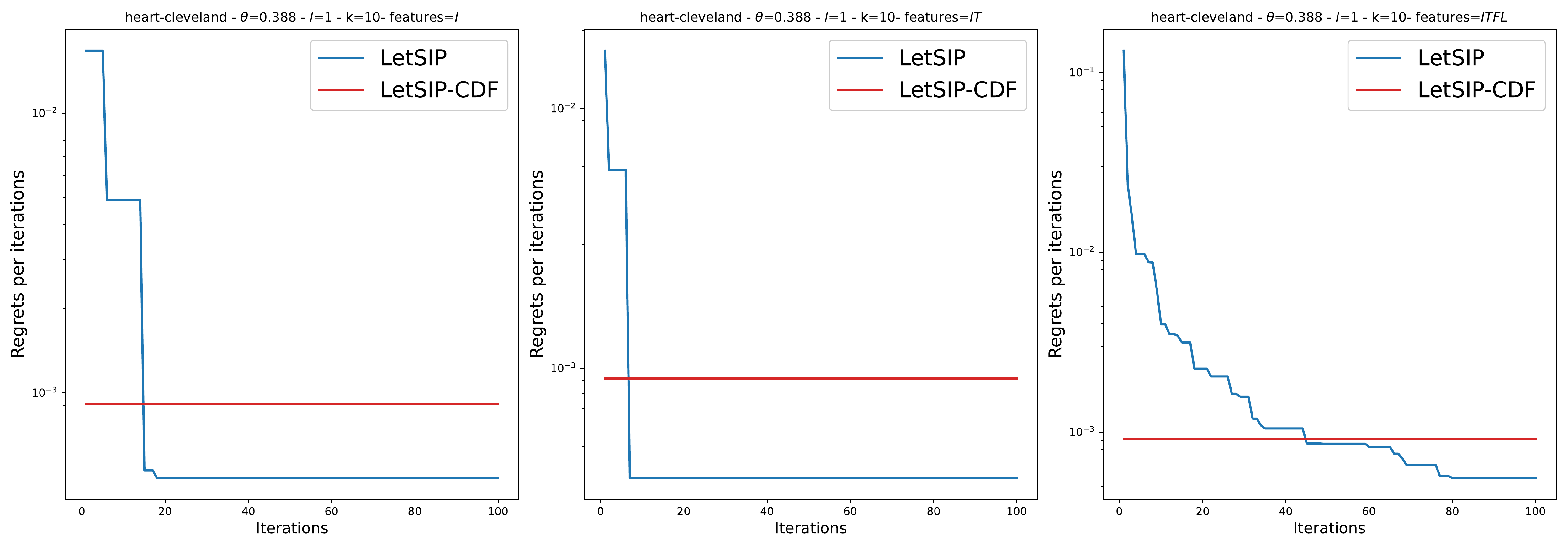}}	
	\end{tabular}
	
	\begin{tabular}{c}
		\subfloat[][Hepatitis: cumulative regret. ]{\includegraphics[scale=0.2]{letsip_vs_letsip_cdf_cumul_hepatitis-0v35-eps-converted-to.pdf}} \\
		\subfloat[][Hepatitis: non cumulative regret. ]{\includegraphics[scale=0.2]{letsip_vs_letsip_cdf_no_cumul_hepatitis-0v35-eps-converted-to.pdf}}	
	\end{tabular}
	\caption{A detailed view of of comparison between \letsipcdf{} and \letsip{} (cumulative and non-cumulative regret) for different pattern features w.r.t. $maximal$ quality, $k = 10$ and $\ell = 1$.}
	\label{fig:6}
\end{figure}

\begin{figure}[t]
	\centering
	\begin{tabular}{c}
		\subfloat[][Kr-vs-kp: cumulative regret. ]{\includegraphics[scale=0.2]{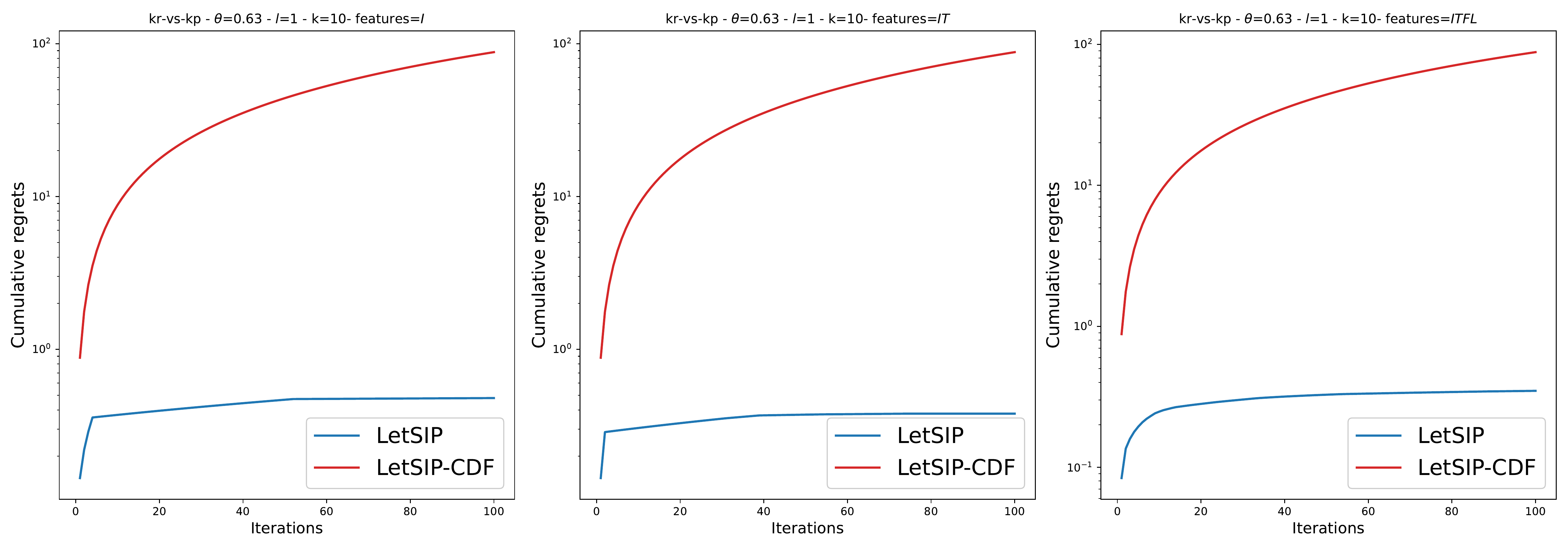}} \\
		\subfloat[][Kr-vs-kp: non cumulative regret. ]{\includegraphics[scale=0.2]{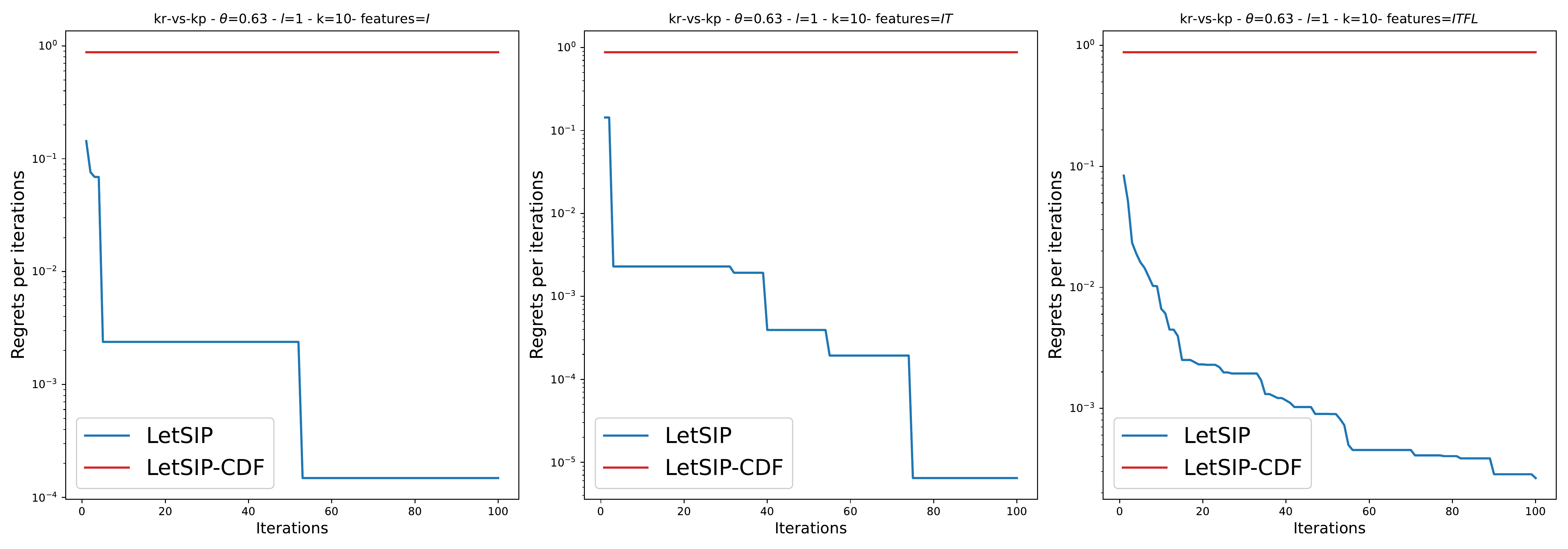}}	
	\end{tabular}
	
	\begin{tabular}{c}
		\subfloat[][Mushroom: cumulative regret. ]{\includegraphics[scale=0.2]{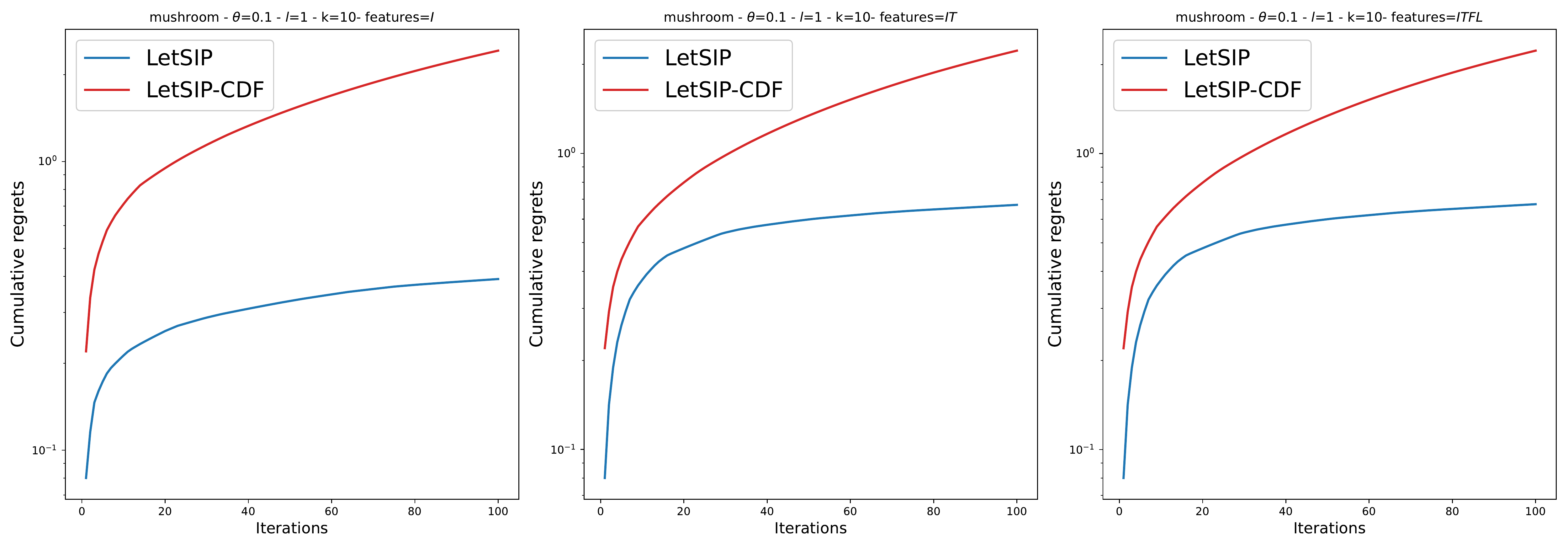}} \\
		\subfloat[][Mushroom: non cumulative regret. ]{\includegraphics[scale=0.2]{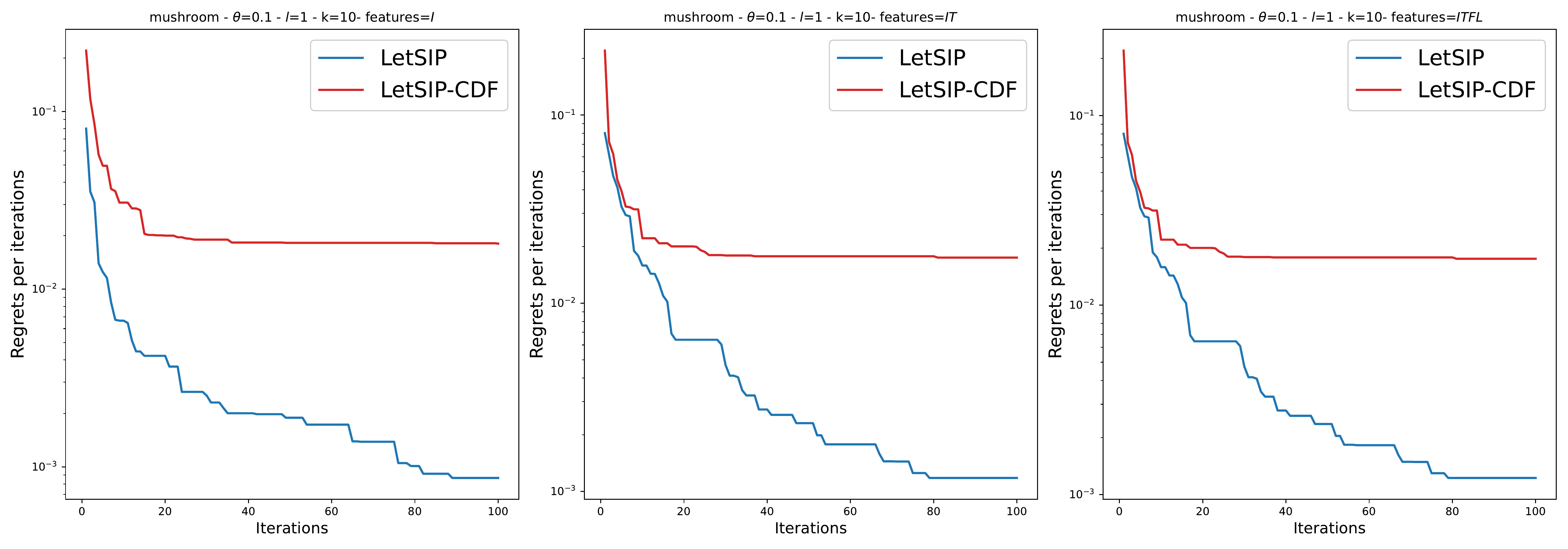}}	
	\end{tabular}
	\caption{A detailed view of of comparison between \letsipcdf{} and \letsip{} (cumulative and non-cumulative regret) for different pattern features w.r.t. $maximal$ quality, $k = 10$ and $\ell = 1$.}
	\label{fig:7}
\end{figure}

\begin{figure}[t]
	\centering
	\begin{tabular}{c}
		\subfloat[][Soybean: cumulative regret. ]{\includegraphics[scale=0.2]{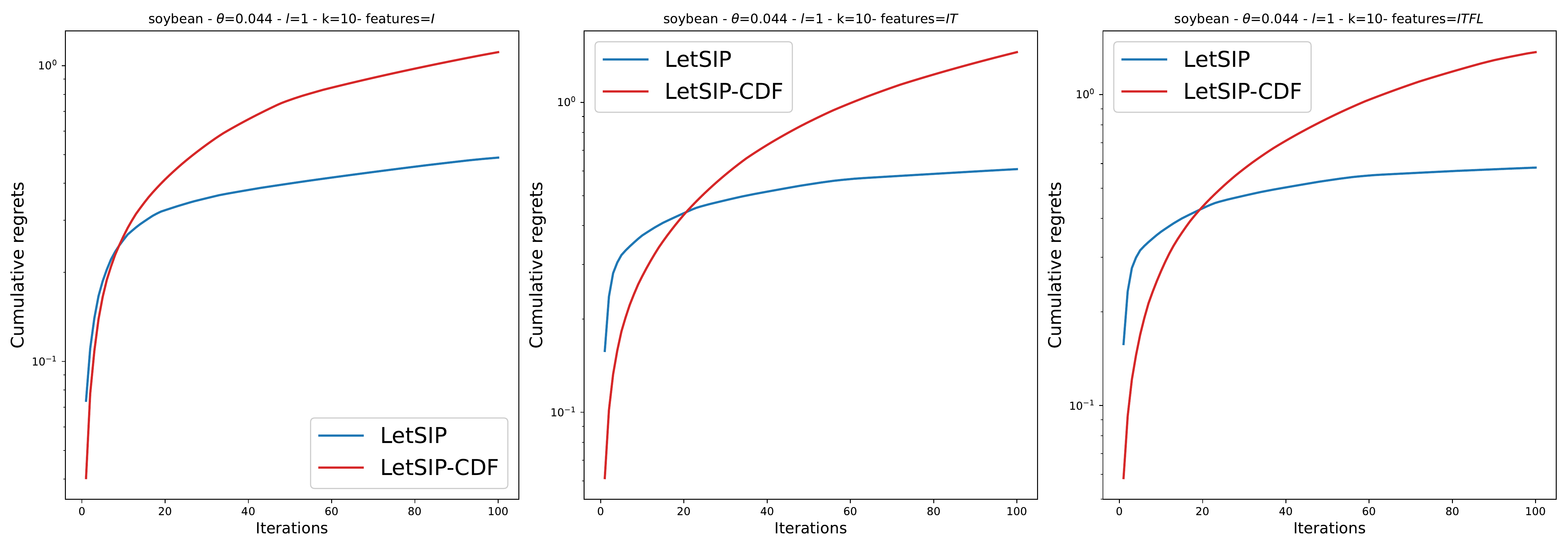}} \\
		\subfloat[][Soybean: non cumulative regret. ]{\includegraphics[scale=0.2]{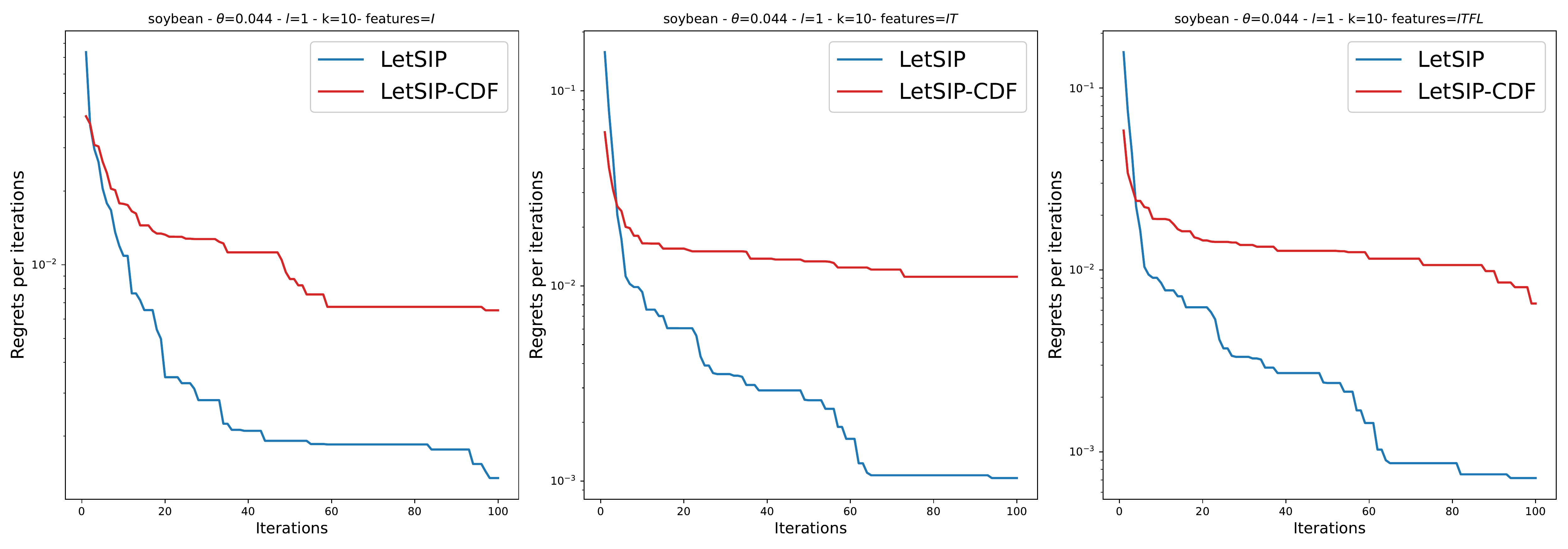}}	
	\end{tabular}
	
	\begin{tabular}{c}
		\subfloat[][Vote: cumulative regret. ]{\includegraphics[scale=0.2]{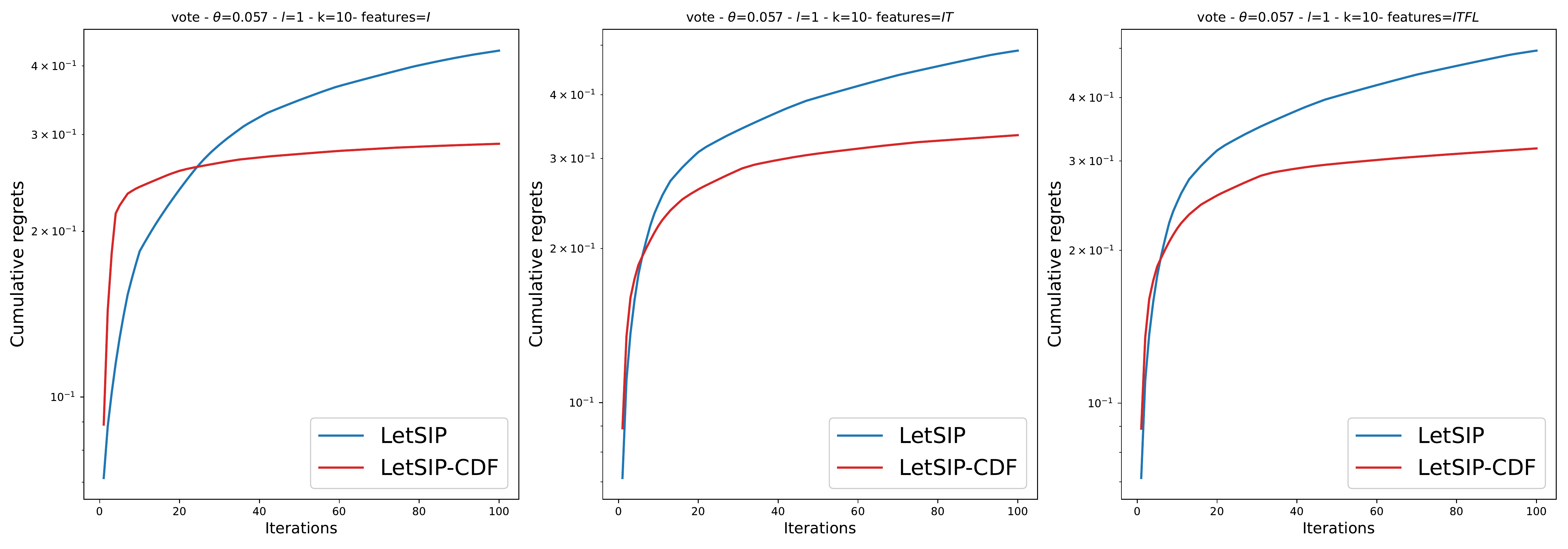}} \\
		\subfloat[][Vote: non cumulative regret. ]{\includegraphics[scale=0.2]{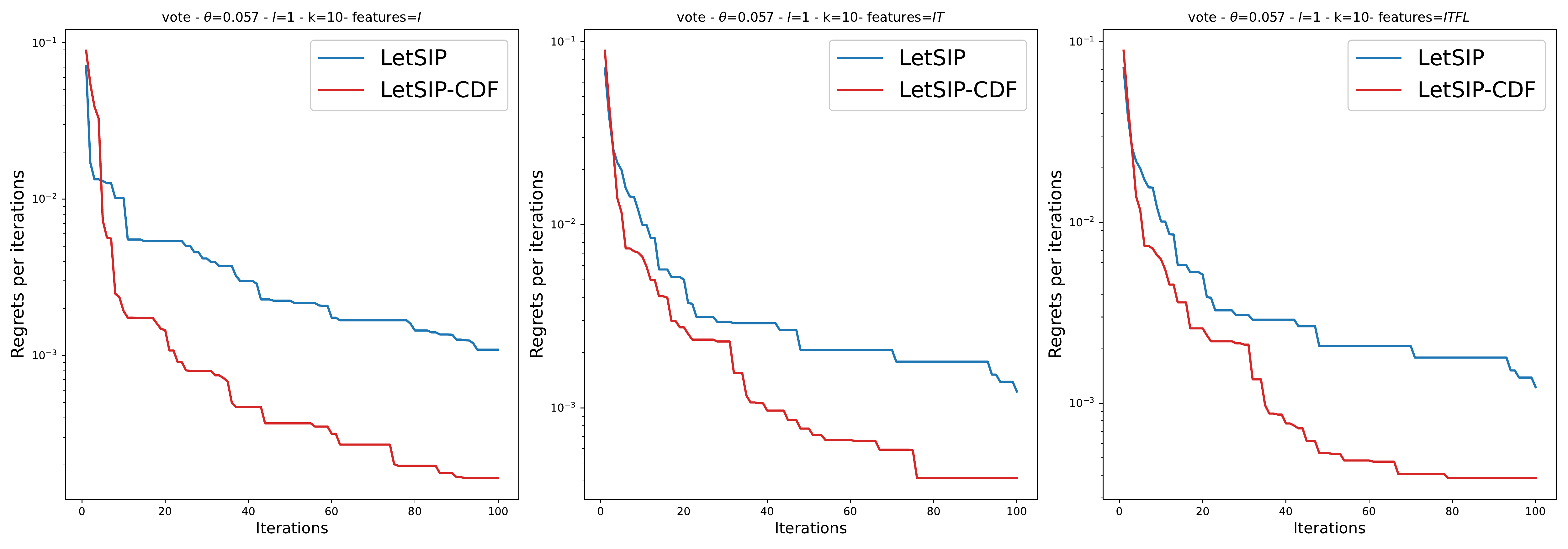}}	
	\end{tabular}
	\caption{A detailed view of of comparison between \letsipcdf{} and \letsip{} (cumulative and non-cumulative regret) for different pattern features w.r.t. $maximal$ quality, $k = 10$ and $\ell = 1$.}
	\label{fig:8}
\end{figure}

\begin{figure}[t]
	\centering
	\begin{tabular}{c}
		\subfloat[][Anneal: cumulative regret. ]{\includegraphics[scale=0.2]{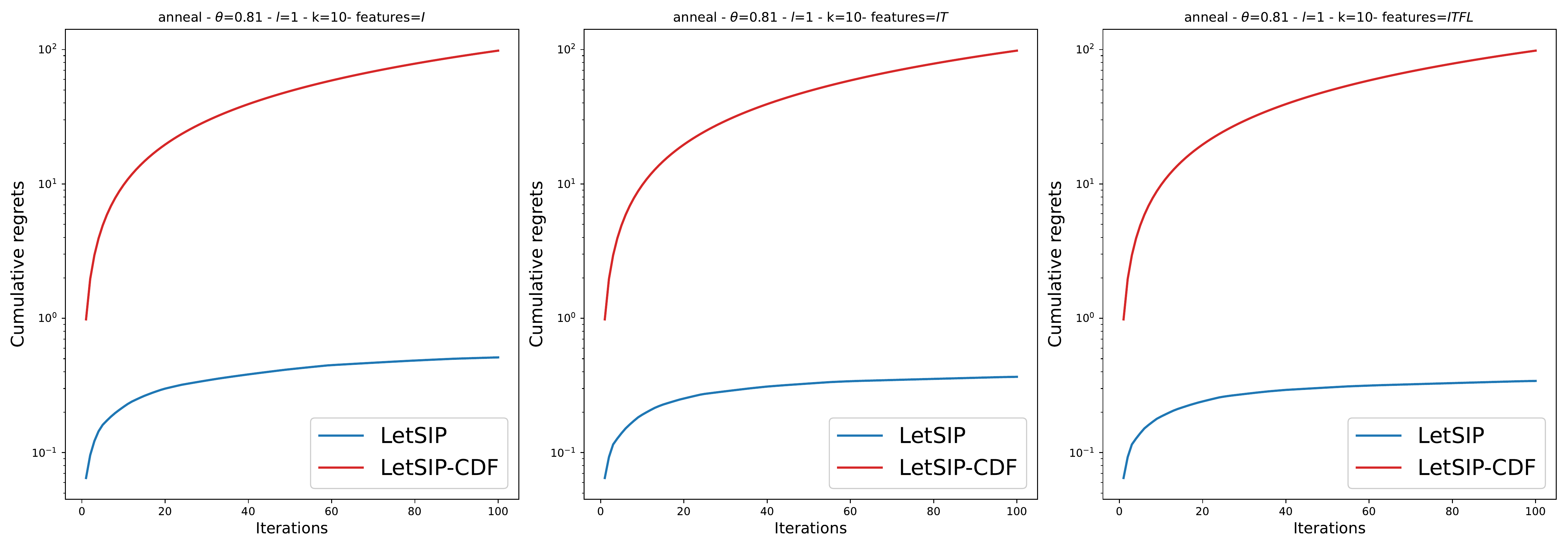}} \\
		\subfloat[][Anneal: non cumulative regret. ]{\includegraphics[scale=0.2]{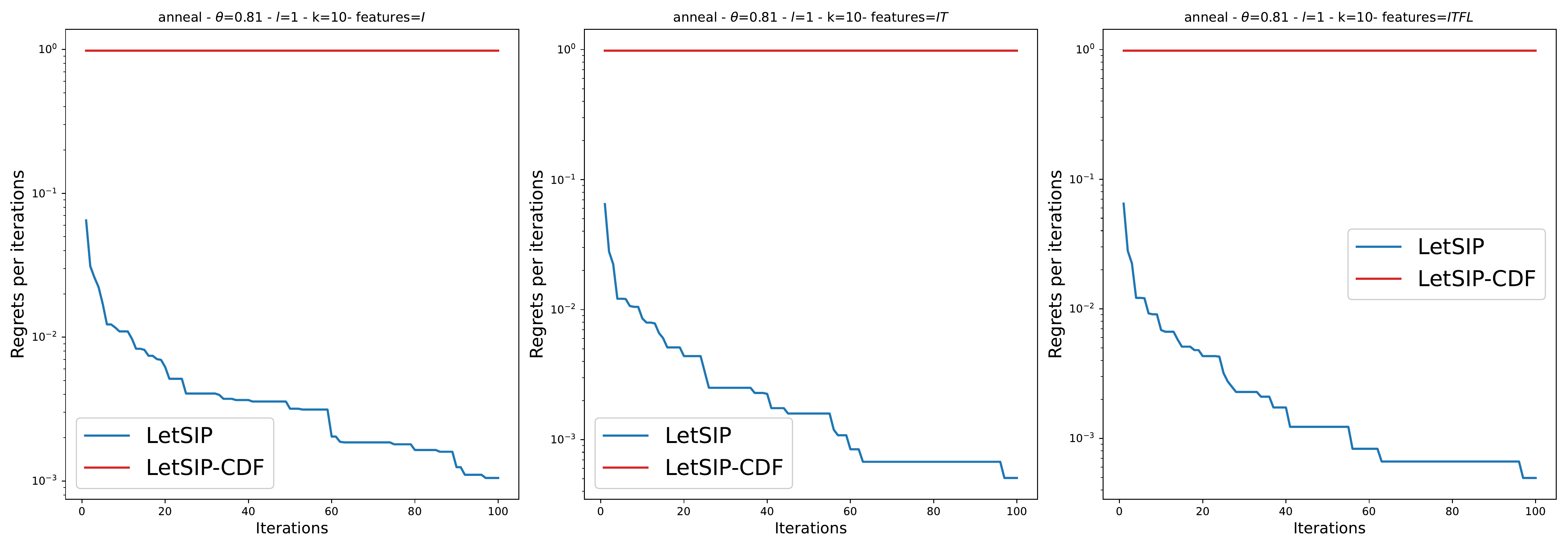}}	
	\end{tabular}
	\begin{tabular}{c}
		\subfloat[][Zoo-1: cumulative regret. ]{\includegraphics[scale=0.2]{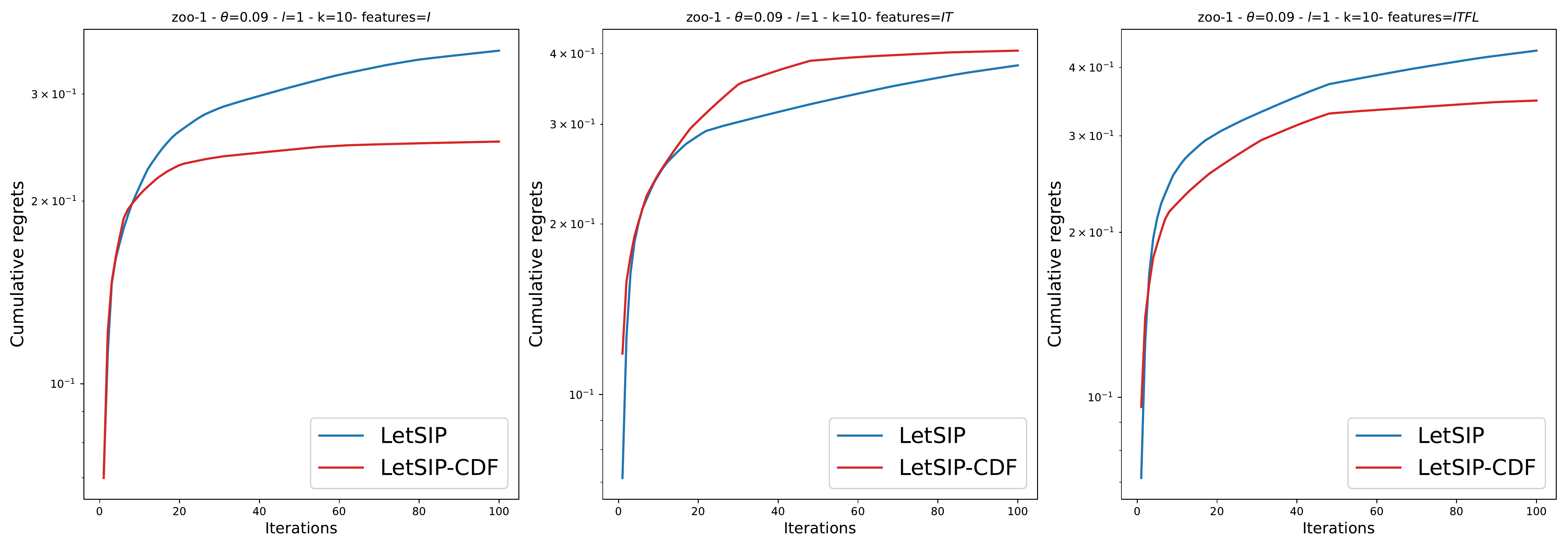}} \\
		\subfloat[][Zoo-1: non cumulative regret. ]{\includegraphics[scale=0.2]{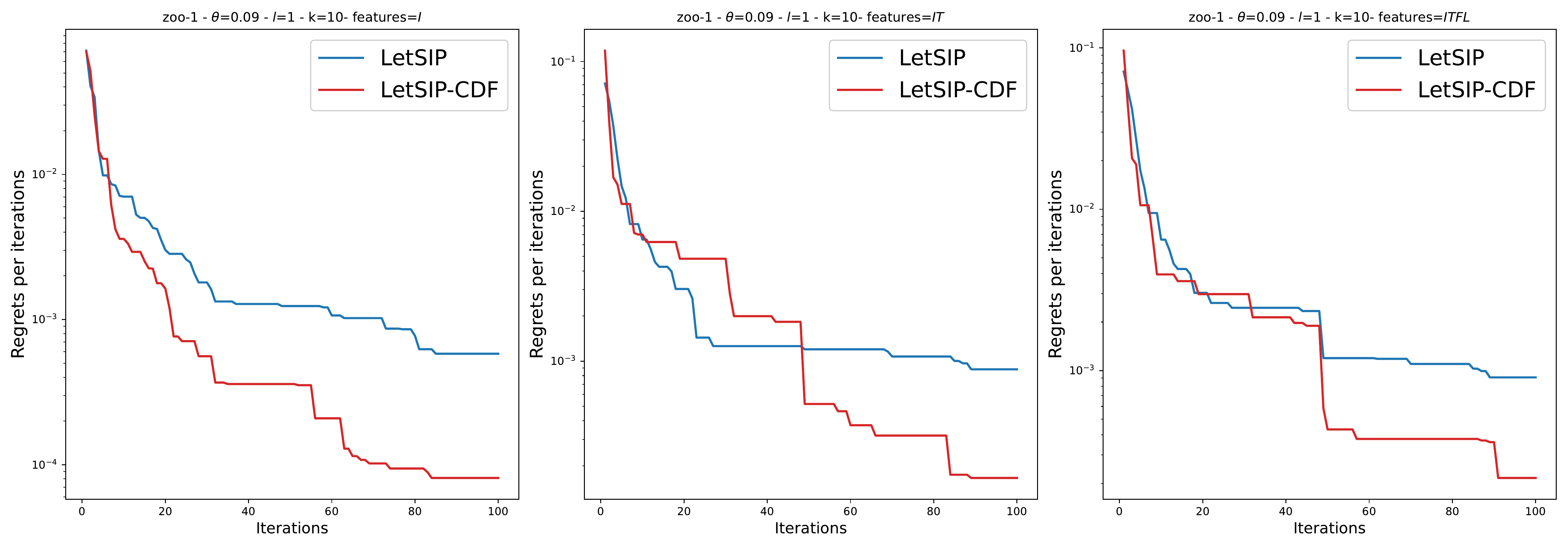}}	
	\end{tabular}
	\caption{A detailed view of comparison between \letsipcdf{} and \letsip{} and \letsip{} (cumulative and non-cumulative regret) for different pattern features w.r.t. $maximal$ quality, $k = 10$ and $\ell = 1$.}
	\label{fig:9}
\end{figure}


\section{Complementary results for CPU-times comparison}\label{secC1}

Figure~\ref{fig:comp:time:appendix} compares the CPU-times of \letsip{}, \letsipcdf{}, \newletsipexp{} and \newletsipcdf w.r.t. the ILFT feature combination, $k = 10$ and $\ell = 1$. 

\begin{figure}[t]
	\centering
	
	\begin{tabular}{cc}
		\subfloat[][Vote]{\includegraphics[height=3.0cm,scale=0.75]{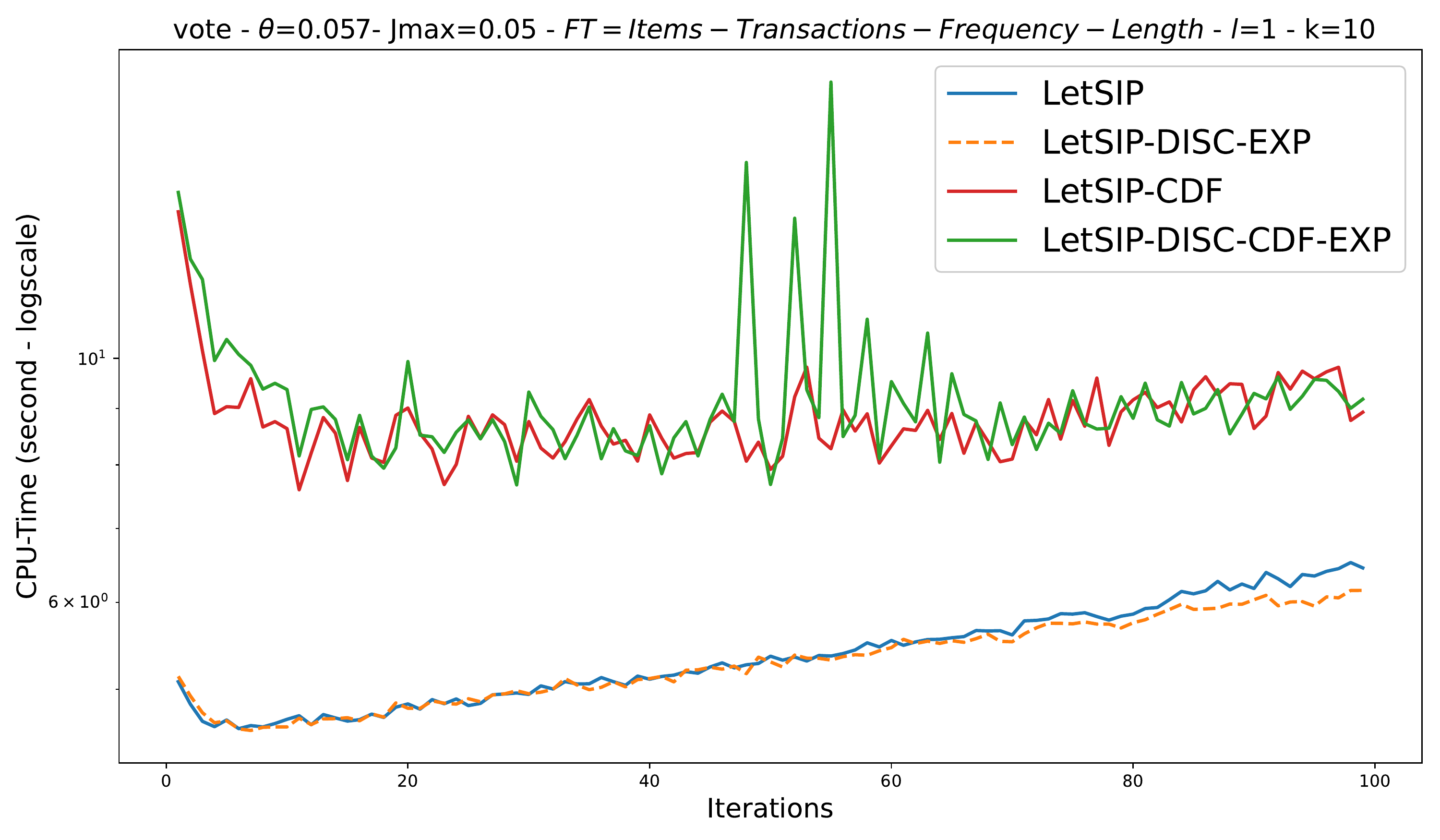}}
		&		\subfloat[][Hepatetis: runtime]{\includegraphics[height=3.0cm,scale=0.75]{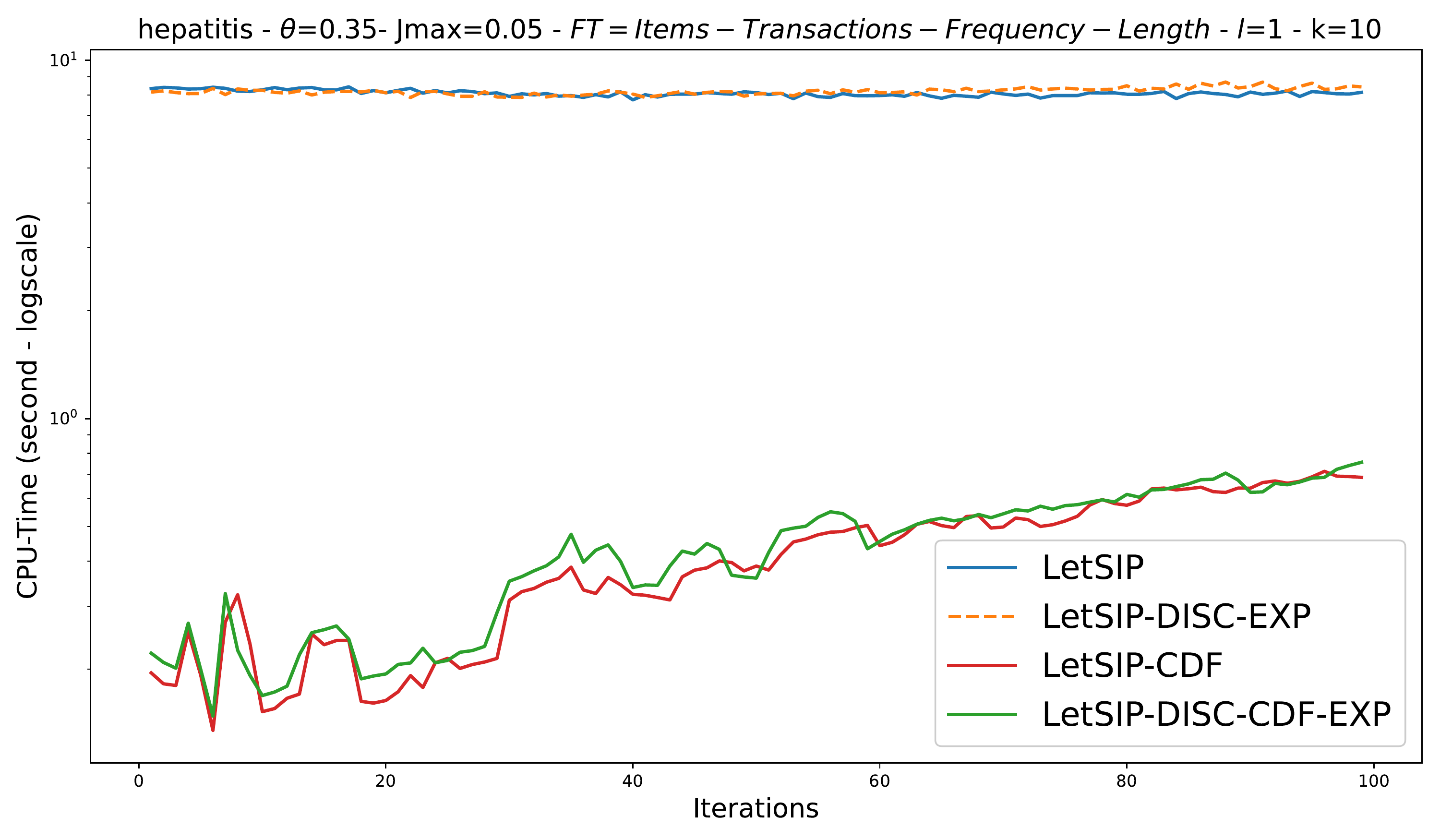}}
i		\\ 
		\subfloat[][Kr-vs-kp]{\includegraphics[height=3.0cm,scale=0.75]{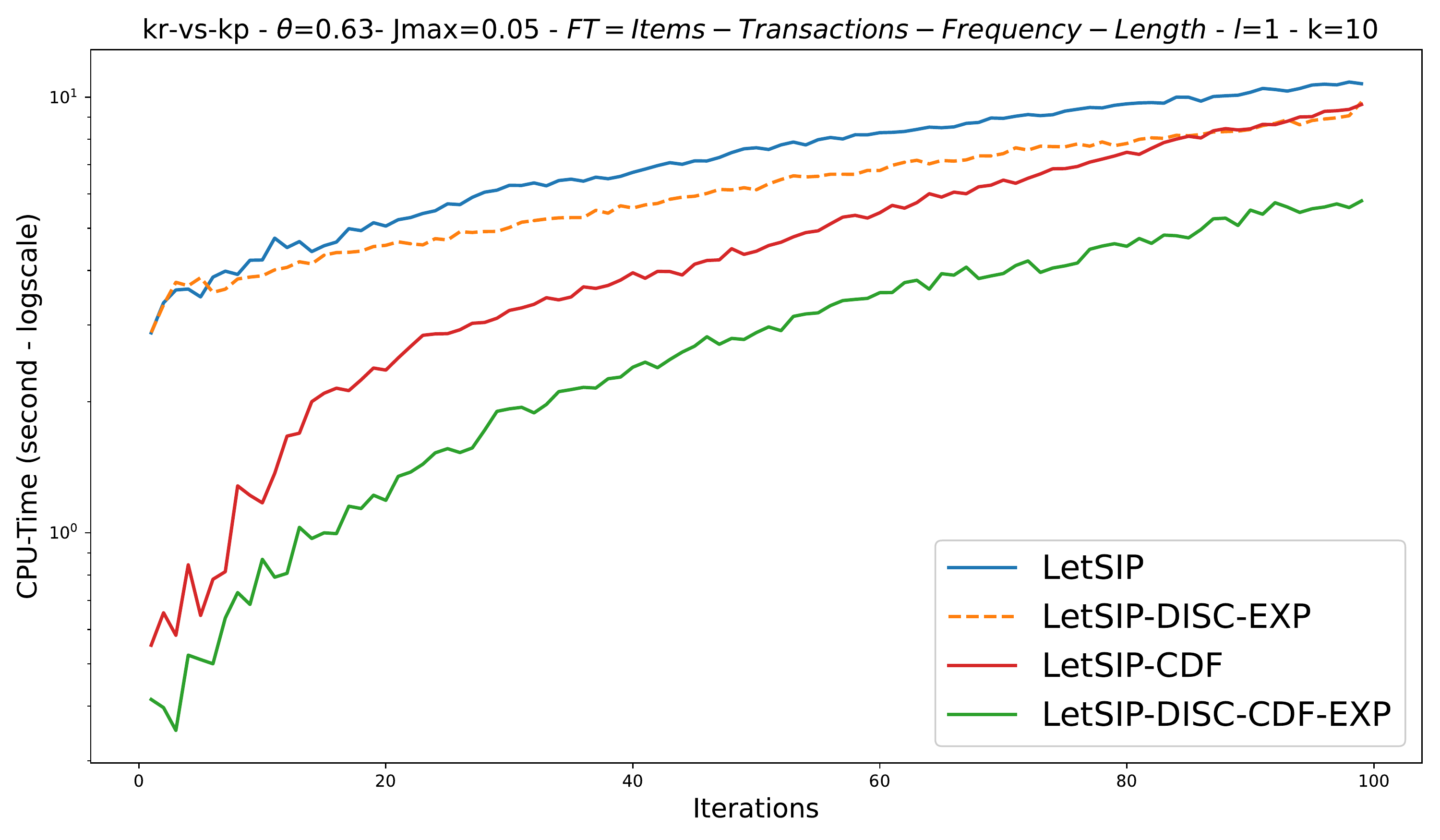}}
		&
		\subfloat[][Lymph]{\includegraphics[height=3.0cm,scale=0.75]{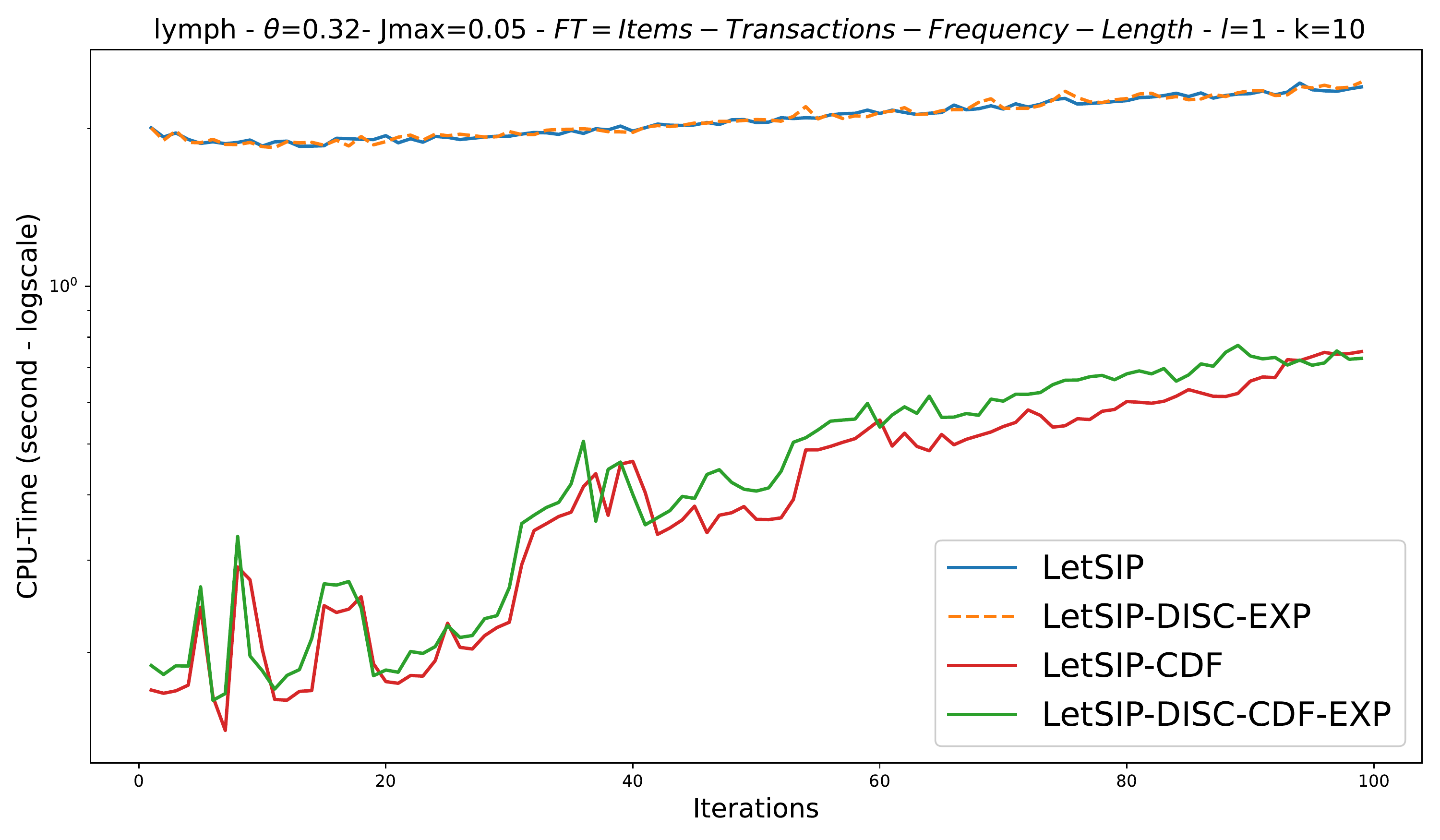}}
		\\ 
		\subfloat[][Mushroom: runtime ]{\includegraphics[height=3.0cm,scale=0.75]{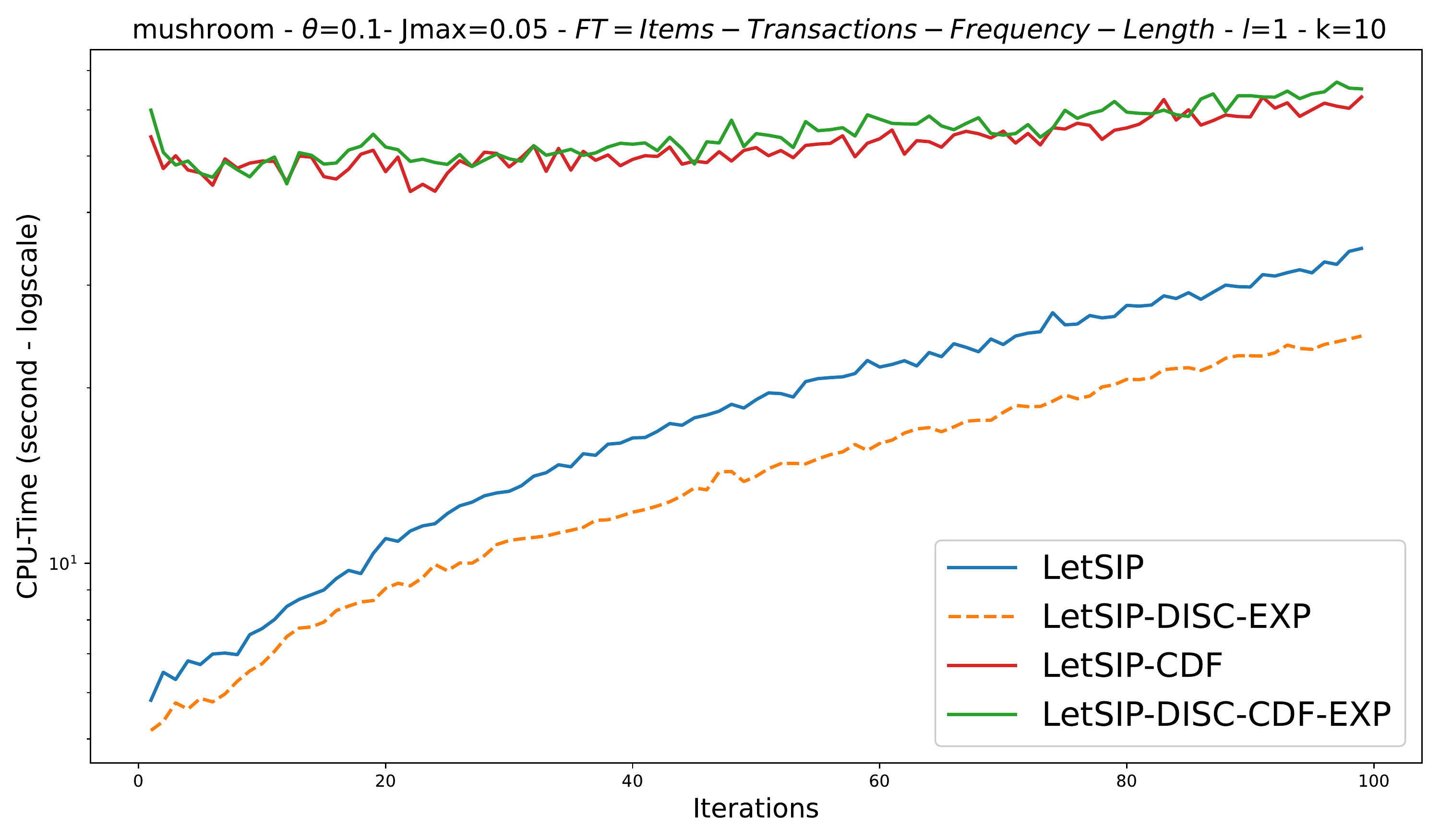}}
		&
		\subfloat[][Soybean: runtime ]{\includegraphics[height=3.0cm,scale=0.75]{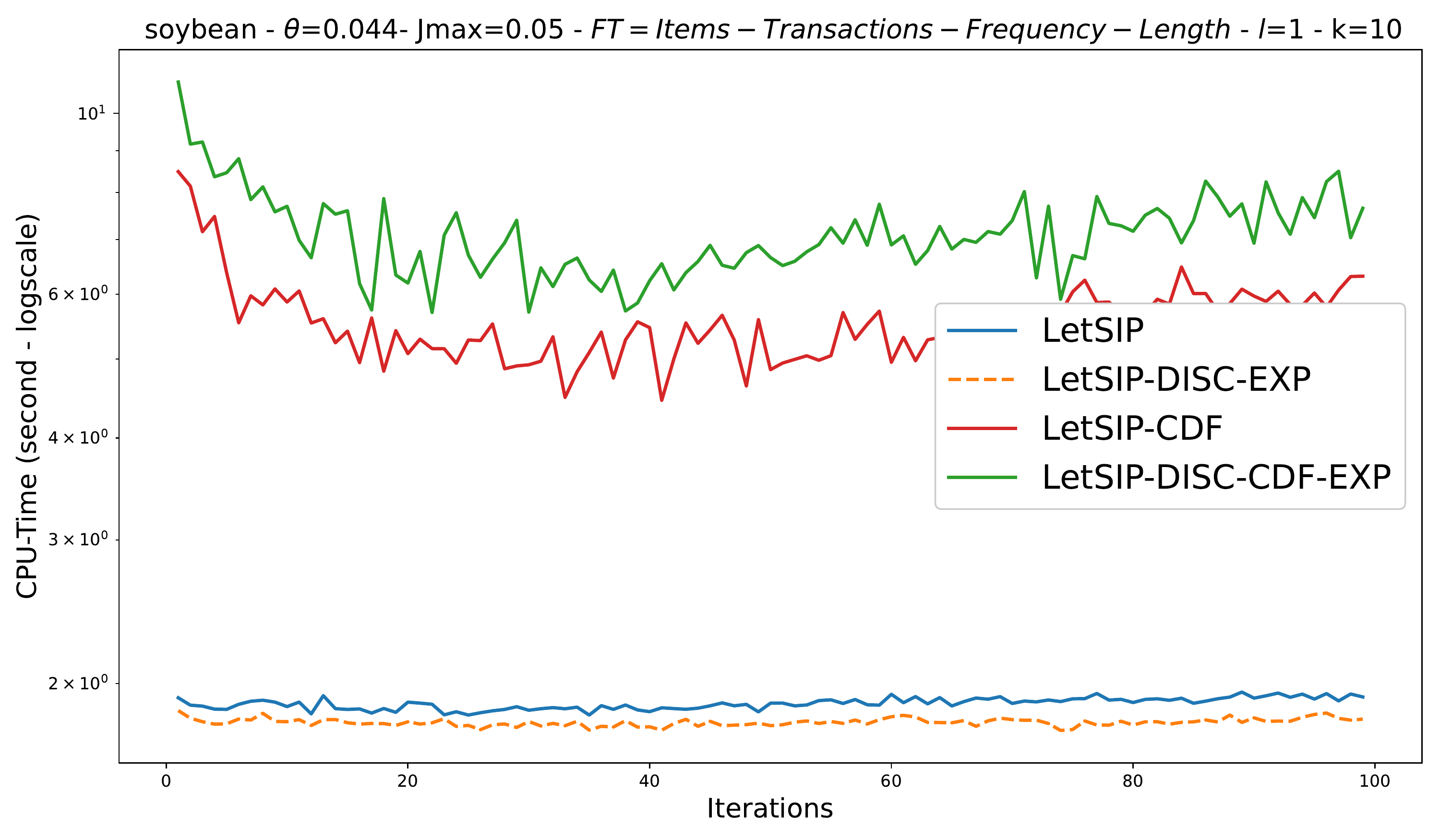}} \\
		
		\subfloat[][Anneal]{\includegraphics[height=3.0cm,scale=0.75]{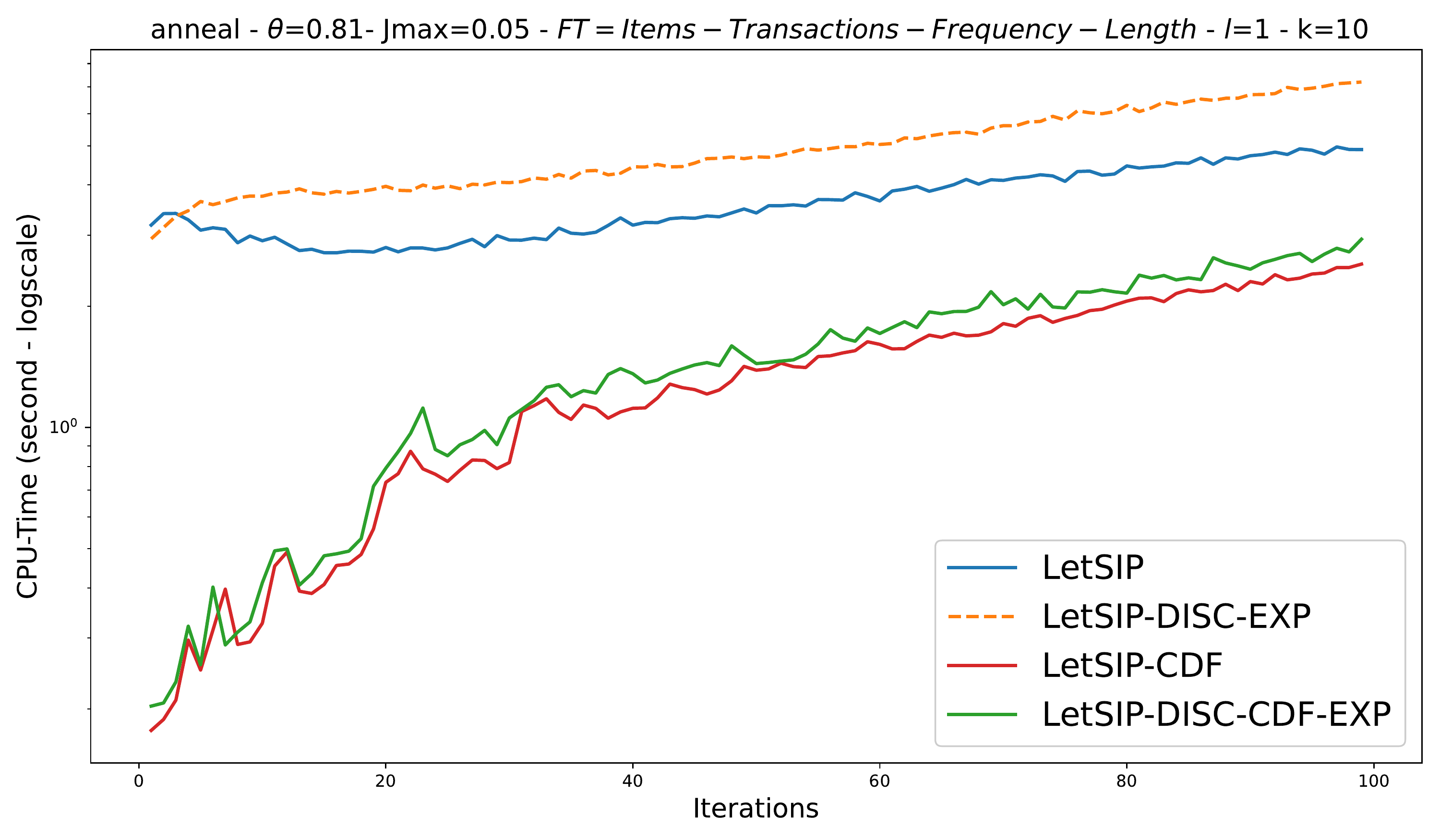}}
		&
		\subfloat[][Zoo-1]{\includegraphics[height=3.0cm,scale=0.75]{cpu_zoo-1-0v09-j_0v05-ITLF-l_1-k_10-0v13-eps-converted-to.pdf}} 
		
	\end{tabular}
	
	\caption{CPU-time analysis (\textit{sec.}) of \letsip{}, \letsipcdf, \newletsip{} and \newletsipcdfexp{} w.r.t. the ILFT feature combination. $k = 10$ and $\ell = 1$.}
	\label{fig:comp:time:appendix}
\end{figure}

	

	
\end{appendices}



\end{document}